\documentclass[11pt]{article}

\usepackage{arxiv-style}
\usepackage{amsmath, algorithm, algpseudocode, bm, bbm}
\usepackage{booktabs}           
\usepackage{bbding, pifont}     
\usepackage{multirow, multicol}

\usepackage{aliascnt}

\makeatletter

\newaliascnt{lemma}{theorem}
\newtheorem{lemma}[lemma]{Lemma}
\aliascntresetthe{lemma}

\newaliascnt{proposition}{theorem}
\newtheorem{proposition}[proposition]{Proposition}
\aliascntresetthe{proposition}
\makeatother

\usepackage{cleveref}

\crefname{lemma}{lemma}{lemmas}
\Crefname{lemma}{Lemma}{Lemmas}

\crefname{proposition}{proposition}{propositions}
\Crefname{proposition}{Proposition}{Propositions}

\newcommand{\no}{ }
\newcommand{\yes}{\ding{51}}

\newcommand{\Regret}{\mathrm{Regret}}
\newcommand{\Violation}{\mathrm{Violation}}

\DeclareMathOperator*{\argmin}{arg\,min}

\newcommand{\qstq}{\quad\mathrm{s.t.}\quad}
\newcommand{\st}{\;\mathrm{s.t.}\;}
\newcommand{\proj}{\mathrm{proj}}
\newcommand{\bigO}{{O}}
\newcommand{\tbigO}{\widetilde{{O}}}

\newcommand{\dom}{\mathrm{dom}}
\newcommand{\ridom}{\mathrm{ri\,dom}}

\newcommand{\calD}{{\mathcal{D}}}

\newcommand{\calF}{{\mathcal{F}}}

\newcommand{\calI}{{\mathcal{I}}}

\newcommand{\calX}{{\mathcal{X}}}

\newcommand{\bbE}{\mathbb{E}}

\newcommand{\bbP}{\mathbb{P}}

\newcommand{\bbR}{\mathbb{R}}

\newcommand{\bfx}{\mathbf{x}}
\newcommand{\bfy}{\mathbf{y}}

\title{
Constrained Online Convex Optimization \\
without Slater's Condition
}

\author{
Kihyun Yu$^{1}$ \quad Junehee Lee$^{2}$ \quad Dabeen Lee$^{2}$\\[0.3em]
$^{1}$KAIST \quad $^{2}$Seoul National University\\
\texttt{khyu99@kaist.ac.kr, \{leo0402,dabeenl\}@snu.ac.kr}
}

\date{}

\begin{document}
\maketitle
\begin{abstract}

We study constrained online convex optimization with adversarial losses and stochastic or adversarial constraints. For stochastic constraints, existing algorithms that achieve nearly optimal regret and constraint violation bounds typically rely on regularity assumptions such as Slater's condition, while adversarial-constraint algorithms avoid these assumptions by using a rather restrictive round-wise feasible comparator. We bridge this gap with an anytime primal-dual framework that incorporates an adaptive regularizer into the dual update. The regularizer stabilizes the dual process without relying on the negative drift induced by Slater's condition. For stochastic constraints and convex losses, our algorithm achieves $O(\sqrt{T})$ expected regret and $O(\sqrt{T}\log T)$ expected cumulative constraint violation. Furthermore, we show that our algorithm also admits high-probability bounds of the same order on regret and constraint violation. For strongly convex losses, the regret bound improves to $O(\log T)$ with a violation bound of the same order. 
With a minor modification, the framework also applies to adversarial constraints and provides guarantees for hard constraint violation.

\end{abstract}

\section{Introduction}
Constrained Online Convex Optimization (COCO) is a sequential decision-making problem under uncertainty in which, at each round, a decision maker selects an action without prior knowledge of the convex loss function, while feasibility requirements restrict the decision-making process. The goal is to minimize the cumulative loss while satisfying these requirements over time. COCO arises naturally in a wide variety of real-world applications, including online advertising with budget constraints~\citep{mehta2007adwords}, portfolio optimization with risk constraints~\citep{cover1991universal, rockafellar2000optimization}, wireless network resource management under power constraints~\citep{tse2005wireless}, and online recommendation systems with fairness constraints~\citep{singh2018fairness}.

The COCO problem is typically formulated as follows (a detailed formulation is provided in \Cref{sec:preliminaries}). In each round $t$, the decision maker selects an action $\bfx_t\in\mathcal{X}$. Then, the loss function $f_t$ and the constraint function $g_t$ are revealed to the decision maker, where feasibility is characterized by the functional constraint $g_t(\bfx) \leq 0$. Given a comparator $\bfx^\star\in \mathcal{X}$, the performance of the decision maker is measured by the regret and cumulative constraint violation over $T$ rounds, denoted by $\Regret(T)$ and $\Violation(T)$, respectively:
\begin{align*}
    &\Regret(T) = \sum_{t=1}^T (f_t(\bfx_t) - f_t(\bfx^\star)),
    \quad
    \Violation(T) = \sum_{t=1}^T g_t(\bfx_t).
\end{align*}

COCO has been studied under various settings. One line of work considers COCO with \emph{stochastic constraints}~\citep{
yu2017online, wei2020online}. In this setting, the constraint functions are sampled i.i.d. from an underlying distribution, whereas the loss functions may vary arbitrarily over time. \citet{yu2017online} proposed a drift-plus-penalty algorithm that achieves $\bigO(\sqrt{T})$ bounds on regret and cumulative constraint violation. However, their analysis relies on Slater's condition, which assumes the existence of a strictly feasible decision in expectation, i.e., there exist $\hat{\bfx}$ and $\varepsilon>0$ such that $\bbE[g_1(\hat{\bfx})]=\cdots=\bbE[g_T(\hat{\bfx})]\leq -\varepsilon$, where the expectation is taken over the randomness of the constraint functions. More importantly, their bound on cumulative constraint violation is $O(\varepsilon^{-1}\sqrt{T})$, which deteriorates as $\varepsilon$ approaches zero.
This motivates us to investigate whether this condition can be relaxed while maintaining regret and constraint violation guarantees.

\citet{wei2020online} made some progress in this direction for the same setting by replacing Slater's condition with what they call the sequential existence of Lagrange multipliers (SELM) condition, which assumes the existence of uniformly bounded Lagrange multipliers. Since Slater's condition implies SELM, the latter is a weaker assumption. Nevertheless, their analysis still relies on the boundedness of the Lagrange multipliers, with the hidden constant in the constraint violation bound depending on the corresponding multiplier bound. Consequently, whether these guarantees can be achieved without such regularity assumptions has remained unresolved for COCO with stochastic constraints.

In contrast to the stochastic constraint setting, one can remove the necessity of a regularity assumption in another line of settings, called COCO with \emph{adversarial constraints} against a \emph{restrictive} comparator, where the constraint functions as well as the loss functions can vary arbitrarily over time~\citep{yi2020distributed, sinha2024optimal}. Here, the comparator is restrictive in that it is defined to be \emph{round-wise feasible}: 
$g_t(\bfx^\star) \leq 0, \; \forall t=1,\ldots,T$. 
For this setting, \citet{yi2020distributed} proposed an algorithm without assuming Slater's condition or bounded Lagrange multipliers, but its guarantees are suboptimal. Later, \citet{sinha2024optimal} proposed an algorithm that does not rely on these conditions but achieves nearly optimal regret and constraint violation bounds of $\bigO(\sqrt{T})$ and $\tbigO(\sqrt{T})$\footnote{Notation $\tbigO(\cdot)$ is used to hide polylogarithmic factors, and the precise bound of \citet{sinha2024optimal} on constraint violation is $\bigO(\sqrt{T}\log T)$.},
respectively. The bounds are independent of any Slater constant or Lagrange multiplier bound.
Additionally, \citet{sarkar2026optimal} developed an anytime extension that does not require prior knowledge of the horizon $T$.

Given the success in the adversarial constraint setting, one may ask whether the same guarantees carry over to the stochastic constraint setting, since any realized stochastic constraint sequence can be viewed as a special case of adversarial constraint sequences. Unfortunately, the answer is no. The key distinction between the two settings lies in the definition of the comparator $\bfx^\star$, which plays a central role in the performance metrics. 
As mentioned above, the previous work for the adversarial setting takes a round-wise feasible comparator, but in the stochastic constraint setting, the comparator is defined to be \emph{feasible in expectation}: 
$\bbE[g_1(\bfx^\star)]=\cdots= \bbE[g_T(\bfx^\star)]\leq 0$. 
This distinction implies that directly applying the algorithm of \cite{sinha2024optimal} to the stochastic constraint setting can provide guarantees only under the stronger assumption that $\bfx^\star$ is feasible almost surely, i.e., $g_t(\bfx^\star) \leq 0$ with probability $1$. This yields a weaker notion of regret, which is not satisfactory for the stochastic constraint setting.

\subsection{Our Contributions}

Prior guarantees for COCO with stochastic constraints can be achieved only under additional assumptions or with respect to a weaker notion of regret. Motivated by this gap, this paper develops an algorithm for this setting that achieves nearly optimal guarantees\footnote{By nearly optimal guarantees, we mean regret and violation bounds of order $\tbigO(\sqrt{T})$.} without these limitations. We summarize the main contributions below and provide a comparison with recent works in \Cref{tab:comparison}.
\begin{itemize}
    \item 
    We propose a novel algorithm (\Cref{alg:cumulative}) for COCO with stochastic constraints that does not rely on additional assumptions, including Slater's condition and bounded Lagrange multipliers. 
    For the stochastic constraint setting, we show that our algorithm achieves expected regret and constraint violation bounds of $\bigO(\sqrt{T})$ and $\tbigO(\sqrt{T})$,
    respectively (\Cref{thm:stoc + exp + cvx}).
    The novelty of our algorithm stems from incorporating an adaptive regularizer into the dual update. In principle, this step stabilizes the dual process without relying on the negative drift induced by Slater's condition. More technically, our regularizer is carefully designed to offset an accumulated term involving the derivative of a Lyapunov function that arises in the regret analysis. 

    \item 
    Beyond expected guarantees, we also provide high-probability guarantees. In particular, given a confidence parameter $\delta$, with probability at least $1-\delta$, our algorithm achieves regret and violation bounds of order $\bigO(\sqrt{T})$ and  $\tbigO(\sqrt{T})$, respectively  (\Cref{thm:stoc + hp + cvx}). The key step is to obtain an upper bound on the Lyapunov-weighted sum of comparator constraints by applying a concentration inequality for supermartingales. We then modify the regularizer appropriately so that it can offset this bound as well.

    \item 
    We further extend the guarantees to strongly convex losses and adversarial constraints. Under stochastic constraints, when losses are strongly convex, our algorithm improves the expected and high-probability regret bounds to $\bigO(\log T)$ and $\bigO(\log T + \sqrt{\log(1/\delta)})$, respectively, while the constraint violation bounds remain the same as in the convex loss case (Theorems~\ref{thm:stoc + exp + strcvx} and~\ref{thm:stoc + hp + strcvx}). 
    
    \item Moreover, under adversarial constraints, a slightly modified version of our algorithm achieves regret and hard constraint violation bounds of $\bigO(\sqrt{T})$ and $\tbigO(\sqrt{T})$, respectively, for convex losses (\Cref{thm:adv + exp + cvx}), and improves the regret bound to $\bigO(\log T)$ for strongly convex losses (\Cref{thm:adv + exp + strcvx}). Here, hard constraint violation is a stronger notion of constraint violation that does not allow cancellation across rounds.

    \item 
    Our algorithm is unified and anytime. The same algorithmic framework, based on the dual update with a regularizer, applies to both stochastic and adversarial constraint settings, as well as to both convex and strongly convex losses. Moreover, the parameter choices of our algorithm do not require prior knowledge of the horizon $T$, and hence the algorithm runs in an anytime manner without using the doubling trick. 
\end{itemize}

The rest of the paper is organized as follows. \Cref{sec:related work} reviews additional related work. \Cref{sec:preliminaries} presents the problem formulations and assumptions. \Cref{sec:algorithm} introduces our anytime primal-dual algorithm, and \Cref{sec:regularizer} explains the intuition for how the regularizer stabilizes the dual process.
\Cref{sec:results} presents the main theoretical results for stochastic and adversarial constraints under convex and strongly convex losses, together with an overview of the proofs. 
Sections~\ref{sec:anal:stoc + exp + cvx} and~\ref{sec:anal:stoc + hp + cvx} provide the analyses for expected and high-probability guarantees under stochastic constraints, respectively, and \Cref{sec:anal:stoc + strcvx} extends the analysis to strongly convex losses. 
\Cref{sec:conclusion} concludes the paper, and 
the appendix contains the deferred proofs, the analysis for adversarial constraints, and auxiliary lemmas.

\begin{table}
\caption{Comparison of COCO algorithms with optimal guarantees up to logarithmic factors. The columns represent the following. \textbf{Stochastic}: the stochastic constraint setting is considered; \textbf{Exp.}, \textbf{H.P.}: expected and high-probability guarantees are provided, respectively; \textbf{Adversarial}: the adversarial constraint setting is considered; \textbf{w/o Slater}: the algorithm does not require Slater's condition, where $\triangle$ denotes that another regularity assumption, weaker than Slater's condition, is required; \textbf{S.C.}: improved guarantees can be achieved when losses are strongly convex; \textbf{Anytime}: the algorithm does not require $T$ as an input parameter. $^\dagger$: \cite{yu2017online} considered only stochastic constraints, but the same algorithm also works for adversarial constraints \citep{neely2017online}.}

\label{tab:comparison}
\begin{center}
\begin{tabular}{lccccccc}
\toprule
\multirow{2}{*}{Algorithm}
& \multicolumn{2}{c}{Stochastic} 
& \multirow{2}{*}{Adversarial} 
& \multirow{2}{*}{w/o Slater} 
& \multirow{2}{*}{S.C.} 
& \multirow{2}{*}{Anytime}
\\
\cmidrule(lr){2-3}
& Exp. & H.P.
&
&
&
&
\\
\midrule

\cite{yu2017online}
& \yes
& \yes
& \yes$^\dagger$
& \no
& \no
& \no
\\

\cite{wei2020online}
& \yes
& \no
& \no
& $\triangle$
& \no
& \no
\\

\cite{sinha2024optimal}
& \no
& \no
& \yes
& \yes
& \yes
& \no
\\

\cite{sarkar2026optimal}
& \no
& \no
& \yes
& \yes
& \no
& \yes
\\

\midrule

\textbf{Ours}
& \yes
& \yes
& \yes
& \yes
& \yes
& \yes
\\
\bottomrule
\end{tabular}
\end{center}
\end{table}

\subsection{Related Work}\label{sec:related work}
Online Convex Optimization (OCO) provides a general framework for sequential decision-making with convex losses over a fixed feasible set. In the classical OCO setting,  a decision maker selects an action in each round and then observes a convex loss function, with the goal of minimizing regret against a fixed comparator in hindsight. Since the foundational work of \citet{zinkevich2003online}, OCO has been extensively studied in the online learning literature~\citep{cesa1996worst, cesa2006prediction, shalev2012online, hazan2016introduction, orabona2019modern}. 

In contrast to the classical OCO setting, COCO captures more complex feasible regions characterized by functional constraints and additionally requires the decision maker to control constraint violations over time.
A line of work studies COCO with fixed or known constraint functions. A common formulation in this line is OCO with long-term constraints, where the constraints may be violated at individual rounds but their cumulative violation should be controlled over the entire horizon. This formulation was initiated by \citet{mahdavi2012trading}. Subsequent works developed adaptive algorithms~\citep{jenatton2016adaptive}, established guarantees for cumulative squared constraint violations~\citep{yuan2018online}, and improved guarantees~\citep{yu2020low, yi2021regret}. 

Another line of work studies COCO with stochastic constraints, where the constraint functions are sampled i.i.d. from an underlying distribution while the loss functions may vary adversarially. In this literature, \citet{yu2017online} established expected and high-probability regret and constraint violation bounds, but their analysis relies on Slater's condition. Later, \citet{wei2020online} relaxed this requirement by assuming the existence of bounded Lagrange multipliers. In contrast, our work requires none of these assumptions but still achieves nearly optimal guarantees for stochastic constraints.

Recently, COCO has also been extensively studied under adversarial constraints, where the loss and constraint functions may vary adversarially over time. The early work of \citet{mannor2009online} considered online learning with a comparator that satisfies a sample path constraint $\sum_{t=1}^T g_t(x)\leq 0$. They proved that for their setting, no online algorithm can achieve sublinear bounds on both regret and constraint violation at the same time. Subsequent works then studied adversarial constraints under restrictive comparators with round-wise feasibility requirements~\citep{neely2017online, guo2022online, sinha2024optimal, lekeufack2024optimistic}. This line of research has since been extended to broader settings~\citep{wang2025doubly, sinha2026beyond, zhang2026multiconstraint, supantha2026universal}. In particular, \citet{pmlr-v235-garber24a, wang2025revisiting, sarkar2026projectionfree} proposed projection-free algorithms, \citet{yi2020distributed, yi2022regret, pan2026distributed} studied distributed settings, \citet{vaze2026sqrt} proposed instance-dependent guarantees, \citet{sarkar2026improved} improved guarantees, and \citet{sarkar2026optimal} proposed anytime algorithms. Although these results provide strong guarantees in the adversarial constraint setting, they do not directly apply to the stochastic constraint setting, where the comparator is required to be feasible only in expectation.

\section{Preliminaries}\label{sec:preliminaries}
This section presents the formulations of COCO studied in this paper and the required assumptions. Before introducing them, we first establish some basic notation used throughout the paper. Let $\bbE$ denote the expectation taken over randomness of the constraint functions, let $\bbR_+$ denote the set of nonnegative real numbers, let $\|\cdot\|$ denote the Euclidean norm, and let $[\cdot]_+ = \max\{0,\cdot\}$. Moreover, for a subdifferentiable function $f:\bbR^d \to \bbR$, let $\nabla f(\bfx)$ denote a subgradient of $f$ at $\bfx$. For a positive integer $T$, let $[T] = \{1,2,\ldots, T\}$.

Let $\calX \subseteq \bbR^d$ be a compact convex set for some positive integer $d$. Let $\{f_t\}_{t\geq 1}$ and $\{g_t\}_{t\geq 1}$ denote sequences of convex loss and constraint functions, respectively, that are subdifferentiable on $\calX$. Additionally, we impose the following assumptions throughout the paper.
\begin{assumption}
There exists a constant $D > 0$ such that $\|\bfx - \bfy\|\leq D$ for all $\bfx,\bfy \in \calX$.
\end{assumption}
\begin{assumption}
There exist constants $L,G > 0$ such that $\|\nabla f_t(\bfx)\| \leq L$, $\|\nabla g_t(\bfx)\|\leq L$, and $|g_t(\bfx)|\leq G$ for all $\bfx\in\calX$ and $t\geq 1$.
\end{assumption}
The first assumption ensures that  $\calX$ is bounded, while the second assumes that the subgradients and constraint functions are bounded. Both assumptions are common in the COCO literature~\citep{yu2017online}.

We now formulate an online learning problem for COCO. In each round $t$, the decision maker selects an action $\bfx_t \in \calX$, after which the loss function $f_t$ and the constraint function $g_t$ are revealed. 
We consider two settings that differ in how the constraint functions are given: the stochastic constraint setting~\citep{yu2017online} and the adversarial constraint setting~\citep{sinha2024optimal}. 
\begin{itemize}
    \item \textbf{Stochastic Constraint Setting:} In each round $t$, $g_t$ is sampled i.i.d. from an unknown distribution $\calD$, while $f_t$ is chosen arbitrarily and independently of the samples of the constraint functions. In this case, we consider the following performance metrics, which are regret and constraint violation over $T$ rounds:
    \begin{align}\label{def:reg viol}
        &\Regret(T) = \sum_{t=1}^T (f_t(\bfx_t) - f_t(\bfx^\star)),
        \quad
        \Violation(T) = \sum_{t=1}^T g_t(\bfx_t),
    \end{align}
    where the comparator $\bfx^\star \in \calX$ is independent of $\{g_t\}_{t=1}^T$ and is defined to be feasible in expectation, i.e.,
    \begin{align}\label{def:xstar stoc}
        \bfx^\star \in \argmin_{\bfx\in\calX}\; \sum_{t=1}^T f_t(\bfx) \qstq \bbE[g_1(\bfx)]=\cdots= \bbE[g_T(\bfx)] \leq 0.
    \end{align}

    \item \textbf{Adversarial Constraint Setting:} In each round $t$, $g_t$ is chosen adversarially, just as $f_t$ is. In this case, we consider the regret measure defined in the same way as in \eqref{def:reg viol}, while the notion of hard constraint violation is considered, which is defined as 
    \begin{align*}
        \Violation_+(T) = \sum_{t=1}^T [g_t(\bfx_t)]_+.
    \end{align*} 
    Here, under the assumption that $\{\bfx\in\calX: g_t(\bfx)\leq 0,\ \forall t\in [T]\}$ is nonempty~\citep{sinha2024optimal}, $\bfx^\star$ is defined as round-wise feasible, i.e.,
    \begin{equation}\label{def:xstar adv}
        \bfx^\star \in \argmin_{\bfx\in\calX} \sum_{t=1}^T f_t(\bfx) \qstq g_t(\bfx) \leq 0, \; \forall t\in [T].
    \end{equation}
\end{itemize}

Again, it is worth comparing the definitions of $\bfx^\star$ in the two settings: feasible in expectation~\eqref{def:xstar stoc} versus round-wise feasible~\eqref{def:xstar adv}. In particular, for the adversarial constraint setting, $\bfx^\star$ is defined to be round-wise feasible, since no online algorithm can achieve sublinear bounds on both regret and constraint violation at the same time if $\bfx^\star$ is instead defined as $\argmin_{\bfx \in \calX} \sum_{t=1}^T f_t(\bfx) \st \sum_{t=1}^T g_t(\bfx) \leq 0$~\citep{mannor2009online}. Moreover, under the restrictive definition of $\bfx^\star$ given in \eqref{def:xstar adv}, we may obtain guarantees for the hard constraint violation. 
On the other hand, under the stochastic constraint setting, $\bfx^\star$ is only needed to satisfy the constraint in expectation rather than for all rounds. Therefore, the difference between two choices of $\bfx^\star$ suggests that algorithms for the adversarial constraint setting do not directly translate to the stochastic constraint setting, requiring additional techniques.

\section{Unified Algorithm for COCO}\label{sec:algorithm}
In this section, we present \Cref{alg:cumulative}, an anytime algorithm for solving COCO. We emphasize that it is a unified algorithm that does not rely on Slater's condition and achieves nearly optimal guarantees in both stochastic and adversarial constraint settings under both convex and strongly convex losses. The key principle is to incorporate an adaptive regularizer into the dual update that stabilizes the dual process without relying on the negative drift induced by Slater's condition. A more detailed discussion of the regularizer is provided in \Cref{sec:regularizer}.

\begin{algorithm}[t]
\caption{COCO without Slater's condition}
\label{alg:cumulative}
\textbf{Input:} Lipschitz constant $L$; boundedness constant for constraints $G$; diameter $D$; confidence parameter $\delta \in (0,1)$; strong convexity constant $\mu$;
auxiliary function $\Psi:\bbR_+\to\bbR$; Lyapunov parameters $\{\gamma_t\}_{t\geq1}$.
\\
\textbf{Initialize:} Set $\bfx_1 \in \calX$, $Q_1 = 0$, and $\Phi_t(x) =e^{\gamma_t x} - 1$ for all $t\geq 1$.
\begin{algorithmic}
    \For{$t=1,2,\ldots$}
    \State Take $\bfx_t$ and observe $f_t,g_t$
    \State Update the primal step size $\eta_t$
    \State Update $\bfx_{t+1} = \proj_\calX\left(\bfx_t - \eta_t(\nabla f_t(\bfx_t) + \Phi_t'(Q_t)\nabla g_t(\bfx_t))\right)$  \label{line:primal}
    \State Update $R_{t}$: \label{line:regularizer}
    \begin{equation*}
        R_t = 12\gamma_t G^2 + \frac{\Psi\left(\sum_{\tau=1}^t\Phi_\tau'(Q_\tau)^2\right)-\Psi\left(\sum_{\tau=1}^{t-1}\Phi_\tau'(Q_\tau)^2\right)}{\Phi_t'(Q_t)}
    \end{equation*}
    \State Update $Q_{t+1} = Q_{t} + g_t(\bfx_t) - R_t$ \label{line:dual}
    \EndFor
\end{algorithmic}
\end{algorithm}

Our algorithm proceeds as follows; the choices of $\Psi, \gamma_t$, and $\eta_t$ will be given in the subsequent paragraph. Initially, the decision maker selects an arbitrary point $\bfx_1\in\calX$ and sets the initial dual variable $Q_1=0$. In each round $t\geq 1$, the decision maker first plays $\bfx_t$ and then observes the loss function $f_t$ and the constraint function $g_t$. Using the current dual variable $Q_t$, inspired by the works of \cite{sinha2024optimal, sarkar2026optimal}, the algorithm constructs an exponential Lyapunov function given by $\Phi_t(x)=e^{\gamma_t x}-1$ and performs a primal update based on the surrogate subgradient $\nabla f_t(\bfx_t)+\Phi_t'(Q_t)\nabla g_t(\bfx_t)$. The exponential Lyapunov function is particularly useful compared to quadratic Lyapunov functions because its derivative is strictly positive for all $Q_t$ when $\gamma_t>0$, implying that the surrogate function $f_t+\Phi_t'(Q_t)g_t$ always remains convex. The dual variable is then updated according to the observed constraint violation together with the regularizer term $R_t$, which adaptively controls its growth. 

The key parameters of \Cref{alg:cumulative} are $\Psi, \gamma_t,$ and $\eta_t$. Indeed, these parameters do not require knowledge of $T$, hence our algorithm is anytime. The choices of $\Psi, \gamma_t,$ and $\eta_t$ are given as follows, depending on the setting under consideration.
\begin{itemize}
    \item The parameter $\gamma_t$ defines the exponential Lyapunov function $\Phi_t$, while $\Psi$ is an auxiliary function used to define $R_t$. Under the stochastic constraint setting, both $\Psi$ and $\gamma_t$ are chosen according to the type of guarantee desired, namely, expected or high-probability guarantees. The precise choices of $\Psi$ and $\gamma_t$ are given as follows: if the expected guarantees are desired, then we take
    \begin{equation}\label{eq:psi + gamma exp}
    \Psi(x)=4DL\sqrt{x},
    \qquad
    \gamma_t = \min\left\{\frac{1}{12G\sqrt{t}},\frac{1}{24DL},1\right\}.
    \end{equation}
    To obtain high-probability guarantees with confidence  $\delta\in (0,1)$, we instead take
    \begin{align}\label{eq:psi + gamma hp}
    \begin{aligned}
        &\Psi(x)=4DL\sqrt{x}+4G\sqrt{(1+x)\log\frac{(\pi^2/6)(1+\log_2(1+x))^2}{\delta}},\\
        &\gamma_t =\min\left\{\frac{1}{12G\sqrt{t}},\frac{1}{24\!\left(DL+8G\sqrt{\log(12t^2/\delta)}\right)},1\right\}.
    \end{aligned}
    \end{align}
    For the adversarial constraint setting, we use the same choices of $\Psi$ and $\gamma_t$ as in the expected bound setting with stochastic constraints. Moreover, with a minor modification, the algorithm can also provide guarantees for hard constraint violation. The modified algorithm is given in \Cref{alg:hard}, and the corresponding results are presented in \Cref{sec:results:adv}.

    \item The primal step size $\eta_t$ is used in the projected subgradient descent update and is chosen according to the properties of the loss functions. Specifically, when the loss functions are convex, we employ an AdaGrad-type step size for $\eta_t$~\citep{duchi2011adaptive}. In contrast, to deduce improved performance when the loss functions are strongly convex, we use a step size based on that of \cite{bartlett2007adaptive}, with appropriate modifications to accommodate our setting. The precise choice of $\eta_t$ is summarized as follows: if losses are convex, then we take 
    \begin{align}\label{eq:eta cvx}
    \eta_t
    =\frac{\sqrt{2}D}{2\sqrt{1+\sum_{\tau=1}^t\|\nabla f_\tau(\bfx_\tau)+\Phi_\tau'(Q_\tau)\nabla g_\tau(\bfx_\tau)\|^2}}.
    \end{align}
    If losses are $\mu$-strongly convex, then we take
    \begin{align}\label{eq:eta strcvx}
        \eta_t
        =\frac{1}{\mu t + (2L/D)\sqrt{\sum_{\tau=1}^t \Phi_\tau'(Q_\tau)^2}}.
    \end{align}
\end{itemize}

\section{The Role of the Regularizer}\label{sec:regularizer}
This section explains the role of the regularizer $R_t$. In the stochastic constraint setting, the main challenge arises from the less restrictive definition of comparator $\bfx^\star$, that is, a solution satisfying the expected constraint $\bbE[g_t(\bfx^\star)] \leq 0$. Specifically, although the algorithm proposed by \cite{sinha2024optimal} attained nearly optimal guarantees in the adversarial constraint setting, its analysis heavily relies on the fact that $g_t(\bfx^\star) \leq 0$ for all $t\geq 1$, which no longer holds in our setting. To address this issue, our technique is to incorporate an adaptive regularizer into the dual update.

\subsection{Key Mechanism}
Before explaining how the regularizer works, we first introduce the main direction of the analysis. Inspired by \cite{sinha2024optimal}, we aim to show the following key relation, which simultaneously guarantees small regret and stability of the dual process. Letting $\lesssim$ denote an informal notation used only to convey intuition,
\begin{equation}\label{eq:Role of Reg:goal}
    \Regret(T) + \Phi_{T+1}(Q_{T+1}) \lesssim \bigO(\sqrt{T}).
\end{equation}

To derive this relation in the stochastic constraint setting, we then explain why the regularizer is essential. For this purpose, we first observe that the regularizer enables us to control the Lyapunov drift more carefully, which is defined as $\Phi_{t+1}(Q_{t+1})-\Phi_{t}(Q_{t})$. Intuitively, since this quantity represents the change in the dual variable, it is desirable for it to remain small in order to stabilize the dual process. Roughly, the Lyapunov drift satisfies the following relation, suggesting that $R_t$ helps control the Lyapunov drift and thereby stabilizes the dual process. 
\begin{equation}\label{eq:Role of Reg:drift}
\Phi_{t+1}(Q_{t+1})-\Phi_{t}(Q_{t})
\lesssim \Phi_t'(Q_t)\bigl(g_t(\bfx_t)-R_t\bigr).
\end{equation}

Combining this drift analysis with the regret analysis for the surrogate loss $f_t + \Phi_t'(Q_t) g_t$ gives us
\begin{align} \label{eq:Role of Reg:I II III}
\begin{aligned}
    &\Regret(T) + \Phi_{T+1}(Q_{T+1}) \\
    &\lesssim  \bigO\left(\sqrt{T}\right)  + \underbrace{\sqrt{\sum_{t=1}^T\Phi_t'(Q_t)^2}}_{\textrm{(I)}} - \underbrace{\sum_{t=1}^T \Phi_t'(Q_t)R_t}_{\textrm{(II)}} + \underbrace{\sum_{t=1}^T \Phi_t'(Q_t) g_t(\bfx^\star)}_{\textrm{(III)}}.
\end{aligned}
\end{align}

This relation clarifies how $R_t$ works. In particular, to deduce \eqref{eq:Role of Reg:goal} from the above relation, the most problematic term is (I), whereas (III)---Lyapunov-weighted sum of comparator constraints---is nonpositive in expectation from the feasibility of $\bfx^\star$. We emphasize that, by adopting a carefully designed regularizer, (II) can offset this problematic term (I), resulting in \eqref{eq:Role of Reg:goal}. This observation motivates us to choose $R_t$ as
\[
    R_t \approx \frac{\sqrt{\sum_{\tau=1}^t\Phi_\tau'(Q_\tau)^2} - \sqrt{\sum_{\tau=1}^{t-1}\Phi_\tau'(Q_\tau)^2}}{\Phi_t'(Q_t)},
\]
which justifies the choice of $R_t$. Without the regularizer, it is nontrivial to bound (I), which highlights that our regularizer is essential for deriving small regret and stability of the dual process.

So far, we have demonstrated the mechanism of $R_t$ in the dual update. Although adopting $R_t$ is useful for achieving small regret and stability of the dual process, there is another caveat in the choice of $R_t$. That is, when $R_t$ is too large, the constraint violation cannot be bounded, based on the observation that $\Violation(T) = Q_{T+1} + \sum_{t=1}^T R_t$. Hence, there is a trade-off in the choice of $R_t$ in the sense that it is helpful to stabilize the dual process but, at the same time, it could harm the violation of the constraint. Nevertheless, under our choice of $R_t$, we prove that its cumulative sums are sublinear, e.g., \Cref{lem:R sum:stoc + exp + cvx}. Consequently, our choice of $R_t$ yields the desired nearly optimal guarantees without Slater's condition.

\subsection{Comparison with the Existing Dual Updates}
In this subsection, we compare our algorithm with the algorithms of \cite{sinha2024optimal} and \cite{yu2017online}. We first discuss the relation to \cite{sinha2024optimal}, which studied the adversarial constraint setting without Slater's condition. 
In particular, we focus on addressing the following question: why is $R_t$ unnecessary in the adversarial setting, whose dual update is $Q_t = Q_{t-1} + [g_t(\bfx_t)]_+$? To see this, we provide the analogue of \eqref{eq:Role of Reg:I II III}, which can be written as
\begin{align}\label{eq:Role of Reg:I II III:analouge}
\begin{aligned}
    &\Regret(T) + \Phi(Q_{T}) \lesssim  \bigO\left(\sqrt{T}\right)  + \underbrace{\sqrt{\sum_{t=1}^T\Phi'(Q_t)^2}}_{\mathrm{(I')}} + \underbrace{\sum_{t=1}^T \Phi'(Q_t) [g_t(\bfx^\star)]_+}_{\mathrm{(III')}}.
\end{aligned}
\end{align}

Since their dual update ensures that $\{Q_t\}_{t\geq 1}$ is an increasing sequence, choosing $\Phi(x) = e^{\gamma x}-1$ with $\gamma = 1/(2\sqrt{T})$ yields $\mathrm{(I')} \leq \Phi(Q_T)/2$. Hence, $\mathrm{(I')}$ can be absorbed into $\Phi(Q_{T})$ on the left-hand side, without relying on a regularizer. Moreover, $\mathrm{(III')}$ is $0$ under their choice of $\bfx^\star$, since $g_t(\bfx^\star) \leq 0$ for all $t$. These observations explain why the algorithm of \cite{sinha2024optimal} works without a regularizer.
However, this approach no longer holds in our setting. The main reason is that since we only have $\bbE[g_t(\bfx^\star)] \leq  0$ in the stochastic constraint setting, $\mathrm{(III')}$ does not coincide with $0$, even after taking  expectations. In summary, in the adversarial constraint setting, \eqref{eq:Role of Reg:goal} is attained without a regularizer, and this is possible due to the restrictive definition of $\bfx^\star$. This highlights both the challenge in the stochastic constraint setting and the importance of adopting $R_t$.

We conclude the section by comparing our approach with \cite{yu2017online}, which studies the same setting as ours but requires Slater's condition. This comparison explains why our approach does not require Slater's condition. To see this, under the quadratic Lyapunov function $\Phi(x) = x^2/2$ and the dual update $Q_{t+1} = [Q_t + g_t(\bfx_t) + \nabla g_t(\bfx_t)^\top(\bfx_{t+1} -\bfx_t)]_+$, their analysis shows that the conditional expected Lyapunov drift can be bounded as follows (see the proof of Lemma 7 in \cite{yu2017online}):
\begin{equation}
    \bbE[\Phi(Q_{t+t_0}) - \Phi(Q_{t})|\calF_{t-1}] \lesssim -\varepsilon t_0 \Phi'(Q_t) + C
\end{equation}
where $\varepsilon$ is a Slater constant, $t_0$ is a window parameter, $\calF_{t-1}$ is the $\sigma$-algebra generated by the history up to round $t-1$, and $C$ collects the remaining positive terms. This shows that the stability of their dual process stems from $-\varepsilon t_0 \Phi'(Q_t)$, as it is the only negative term on the right-hand side. Therefore, their analysis necessitates Slater's condition. In contrast, our analysis relies on the regularizer as in \eqref{eq:Role of Reg:drift}. This is the key reason why our algorithm is free from Slater's condition.

\section{Main Results}\label{sec:results}
In this section, we present the theoretical guarantees of our algorithm. In particular, we establish guarantees for both convex and strongly convex loss functions under stochastic and adversarial constraints. For the stochastic constraint setting, we further provide both expected and high-probability guarantees. 
An overview of the analysis is presented in \Cref{sec:overview of proofs}.


\subsection{Stochastic Constraints}\label{sec:results:stoc}
We first consider the stochastic constraint setting, where the loss functions may vary adversarially over time while the constraint functions are stochastic. Throughout this subsection, both the loss and constraint functions are assumed to be convex. To attain expected guarantees in this setting, recall that we choose the auxiliary function $\Psi$, the Lyapunov parameter $\gamma_t$ as in \eqref{eq:psi + gamma exp}, and the primal step size $\eta_t$ as in \eqref{eq:eta cvx}. We are now ready to present our first theoretical guarantee, where the corresponding analysis is provided in \Cref{sec:anal:stoc + exp + cvx}.
\begin{theorem}\label{thm:stoc + exp + cvx}
    Consider convex losses and stochastic constraints. For any $T\geq 1$, suppose that we run \Cref{alg:cumulative} for $T$ rounds with $\Psi, \gamma_t$ given by \eqref{eq:psi + gamma exp} and $\eta_t$ given by \eqref{eq:eta cvx}. Then, we have
    \begin{align*}
        \bbE[\Regret(T)] =  \bigO\left(\sqrt{T}\right), \quad \bbE[\Violation(T)] =  \bigO\left(\sqrt{T}\log T\right),
    \end{align*}
    where the expectation is taken with respect to the randomness of $\{g_t\}_{t\geq 1}$.
\end{theorem}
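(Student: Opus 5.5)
The plan is to follow the roadmap of \Cref{sec:regularizer}: establish the master inequality \eqref{eq:Role of Reg:I II III} rigorously, take expectations, and then convert the resulting bound on $\bbE[\Phi_{T+1}(Q_{T+1})]$ into a violation bound through $\Violation(T)=Q_{T+1}+\sum_{t=1}^T R_t$.

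\textbf{Step 1 (drift).} Since $\gamma_{t+1}\le\gamma_t$ and $\Phi_t$ is increasing in $\gamma$ for $Q\ge0$, we have $\Phi_{t+1}(Q_{t+1})\le\Phi_t(Q_{t+1})$ whenever $Q_{t+1}\ge0$; negative $Q$ requires separate care because $\Phi_t$ is then bounded below by $-1$. By convexity and a second-order Taylor bound,
\begin{equation*}
\Phi_t(Q_{t+1})-\Phi_t(Q_t)\le \Phi_t'(Q_t)(g_t(\bfx_t)-R_t)+\tfrac12\gamma_t^2 e^{\gamma_t(Q_t+|g_t-R_t|)}(g_t-R_t)^2 .
\end{equation*}
Here $R_t$ is bounded by a constant of order $\gamma_tG^2+DL$. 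Concavity of $\sqrt{\cdot}$ gives $(\Psi(S_t)-\Psi(S_{t-1}))/\Phi_t'(Q_t)\le 2DL\,\Phi_t'(Q_t)/\sqrt{S_t}\le 2DL$, where $S_t=\sum_{\tau\le t}\Phi_\tau'(Q_\tau)^2$. Because $\gamma_t\le 1/(24DL)$ and $\gamma_t\le 1/(12G)$, the exponential factor is $O(1)$, and the quadratic remainder is at most $c\,\gamma_t G^2\Phi_t'(Q_t)$ (using $\gamma_t e^{\gamma_t Q_t}=\Phi_t'(Q_t)$). The constant $12\gamma_tG^2$ in $R_t$ is designed to absorb exactly this remainder. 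The result is
\begin{equation*}
\Phi_{t+1}(Q_{t+1})-\Phi_t(Q_t)\le \Phi_t'(Q_t)g_t(\bfx_t)-\bigl(\Psi(S_t)-\Psi(S_{t-1})\bigr).
\end{equation*}

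\textbf{Step 2 (regret).} Apply AdaGrad projected OGD to the surrogate losses $f_t+\Phi_t'(Q_t)g_t$ with step size \eqref{eq:eta cvx}. This yields
\begin{equation*}
\sum_t\bigl(f_t(\bfx_t)-f_t(\bfx^\star)\bigr)+\sum_t\Phi_t'(Q_t)\bigl(g_t(\bfx_t)-g_t(\bfx^\star)\bigr)\le \sqrt2 D\sqrt{1+\sum_t\|\nabla f_t+\Phi_t'\nabla g_t\|^2}\le \sqrt2 D\bigl(1+\sqrt2 L\sqrt T+\sqrt2 L\sqrt{S_T}\bigr).
\end{equation*}
Summing Step 1 and adding it to this inequality, the term $\sqrt{2}\cdot\sqrt2 DL\sqrt{S_T}=2DL\sqrt{S_T}$ is cancelled by $\Psi(S_T)=4DL\sqrt{S_T}$. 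This gives
\begin{equation*}
\Regret(T)+\Phi_{T+1}(Q_{T+1})\le O(\sqrt T)-2DL\sqrt{S_T}+\sum_t\Phi_t'(Q_t)g_t(\bfx^\star).
\end{equation*}
Since $Q_t$ and $\gamma_t$ are $\calF_{t-1}$-measurable and $g_t$ is independent of $\calF_{t-1}$ with $\bbE g_t(\bfx^\star)\le0$ (and $\bfx^\star$ is independent of the $g$'s), we get $\bbE[\Phi_t'(Q_t)g_t(\bfx^\star)]\le0$. Taking expectations, $\bbE[\Regret(T)]+\bbE[\Phi_{T+1}(Q_{T+1})]=O(\sqrt T)$. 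Since $\Phi\ge-1$, this yields the regret bound.

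\textbf{Step 3 (violation).} For the violation, write $\Violation(T)=Q_{T+1}+\sum_t R_t$. First, $\sum_t 12\gamma_tG^2=O(\sqrt T)$. Next, by Jensen, $\bbE[Q_{T+1}]\le\gamma_{T+1}^{-1}\log\bigl(1+\bbE\Phi_{T+1}(Q_{T+1})\bigr)=O(\sqrt T\log T)$, where we use $\bbE[\Phi_{T+1}(Q_{T+1})]\le O(\sqrt T)-\bbE[\Regret(T)]$ and $-\Regret(T)\le LDT$, so the logarithm is $O(\log T)$.

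The remaining and hardest piece is $\sum_t(\Psi(S_t)-\Psi(S_{t-1}))/\Phi_t'(Q_t)$, which is the content the paper isolates as \Cref{lem:R sum:stoc + exp + cvx}. My first attempt uses $\Psi(S_t)-\Psi(S_{t-1})\le 2DL\,\Phi_t'(Q_t)^2/\sqrt{S_t}$, so each summand is at most $2DL\,\Phi_t'(Q_t)/\sqrt{S_t}$. By Cauchy--Schwarz in the form $\sum_t a_t/\sqrt{\sum_{\tau\le t}a_\tau^2}\le\sqrt{T}\,\sqrt{\sum_t a_t^2/S_t}$, combined with $\sum_t a_t^2/S_t\le 1+\log(S_T/a_1^2)$ and $a_1=\gamma_1$, this is $O\bigl(\sqrt{T\log(S_T/\gamma_1^2)}\bigr)$. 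Finally $S_T\le T\max_t\Phi_t'(Q_t)^2$, and $Q_t\le 2Gt$ gives $\log S_T=O(\gamma_1 GT)$, which is too crude. Instead I would bound $\bbE\log S_T$ via Jensen using the Step 2 control on $\bbE\Phi$ (which also bounds $\bbE\sqrt{S_T}$ through the $-2DL\sqrt{S_T}$ slack). This gives $\log S_T=O(\log T)$ in expectation, hence $O(\sqrt{T\log T})\le O(\sqrt T\log T)$. Handling the negative-$Q$ regime in Step 1 and making this last logarithmic estimate rigorous are where I expect the main difficulty.
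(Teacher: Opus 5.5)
\textbf{Verdict.} Your overall architecture is the paper's: a drift bound, AdaGrad on $f_t+\Phi_t'(Q_t)g_t$, cancellation of $2DL\sqrt{S_T}$ by the telescoped regularizer, $\bbE[\Phi_t'(Q_t)g_t(\bfx^\star)]\le 0$ by $\calF_{t-1}$-measurability, Jensen for $\bbE[Q_{T+1}]$, and Cauchy--Schwarz plus the $\log(S_T/\gamma_1^2)$ bound for $\sum_t R_t$. However, Step~1 has two genuine holes, and the second one propagates into your final estimate.

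\textbf{First gap: negative $Q$.} This regime is not a corner case. Since $R_t\ge 12\gamma_tG^2>0$ and $g_t(\bfx_t)$ can be negative, $Q_{t+1}$ can be negative, even of order $-t$. Then $\Phi_{t+1}(Q_{t+1})-\Phi_t(Q_{t+1})=e^{\gamma_{t+1}Q_{t+1}}-e^{\gamma_tQ_{t+1}}>0$ whenever $\gamma_{t+1}<\gamma_t$. Your displayed drift inequality has no term to pay for this. Using $\Phi\ge -1$ only bounds it by $1$ per round, i.e.\ $O(T)$ in total, which is useless. The missing ingredient is the uniform bound $e^{\gamma_{t+1}x}-e^{\gamma_tx}\le(\gamma_t-\gamma_{t+1})/(e\gamma_{t+1})$ for all $x\in\bbR$, which follows from $1-e^{-y}\le y$ and $ye^{-y}\le 1/e$. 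Combined with $(\gamma_t-\gamma_{t+1})/\gamma_{t+1}\le 1/(2t)$, it makes the total cost $O(\log T)$; see \Cref{prop:Phi} and \Cref{lem:gamma sum exp}.

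\textbf{Second gap: the quadratic remainder.} The remainder is not $O(\gamma_tG^2\Phi_t'(Q_t))$. The increment $g_t(\bfx_t)-R_t$ contains $b_t=(\Psi(S_t)-\Psi(S_{t-1}))/\Phi_t'(Q_t)$, which can be as large as $4DL$. (Your bound $2DL$ via concavity is backwards: concavity gives the lower bound, and the correct upper bound is $4DL\,\Phi_t'(Q_t)/(\sqrt{S_t}+\sqrt{S_{t-1}})\le 4DL$.) So the remainder contains a term of order $\gamma_t\Phi_t'(Q_t)b_t^2$, which $12\gamma_tG^2$ cannot absorb when $DL\gg G$. Using $\gamma_t\le 1/(24DL)$ gives $\gamma_tb_t^2\le b_t/6$. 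Hence this term is a constant fraction of $\Psi(S_t)-\Psi(S_{t-1})$ and must be charged to the regularizer itself. The paper does this crudely and retains only $R_t/2$.

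\textbf{Consequence for your last step.} With the paper's constants, the telescoped regularizer yields exactly $-\tfrac12\Psi(S_T)=-2DL\sqrt{S_T}$. This cancels the AdaGrad term with nothing left over, so the ``$-2DL\sqrt{S_T}$ slack'' you plan to use to bound $\bbE\sqrt{S_T}$ is not established. Your route is repairable but must be redone. A careful Taylor estimate (exponential factor at most $e^{\gamma_tG}\le e^{1/12}$ when the increment is positive, in which case it is at most $G$, and at most $1$ when it is negative) lets you keep a fraction of the increment well above $1/2$, so some slack does survive.

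\textbf{The paper's fix, which needs no slack.}
\begin{itemize}
\item Bound $S_T\le\bigl(\sum_{t}\Phi_t'(Q_t)\bigr)^2$.
\item Note $\bbE[\Phi_t'(Q_t)]=\gamma_t\bigl(1+\bbE[\Phi_t(Q_t)]\bigr)=O(t)$, using the Step~2 bound and $\Regret(t)\ge -LDt$.
\item Apply Jensen to the concave map $x\mapsto\sqrt{2\log(x/\gamma_1)+1}$.
\end{itemize}
This gives $\bbE\bigl[\sum_t R_t\bigr]=O(\sqrt{T\log T})$ directly.
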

We note that this theorem demonstrates that our algorithm achieves nearly optimal expected guarantees in $T$, without relying on Slater's condition.

While \Cref{thm:stoc + exp + cvx} attains guarantees in expectation, it is often desirable to obtain realized guarantees that hold with high confidence. Thus, given a confidence parameter $\delta \in (0,1)$, we establish high-probability guarantees under the same setting as in \Cref{thm:stoc + exp + cvx}. To this end, we modify $\Psi$, $\gamma_t$ as in \eqref{eq:psi + gamma hp}, while retaining the same primal step size $\eta_t$ as in \eqref{eq:eta cvx}. Under these modified algorithmic parameters, the algorithm enjoys the following high-probability guarantees. The analysis of the following theorem is presented in \Cref{sec:anal:stoc + hp + cvx}.
\begin{theorem}\label{thm:stoc + hp + cvx}
Consider convex losses and stochastic constraints. For any $T\geq 1$ and $\delta \in (0,1)$, suppose that we run \Cref{alg:cumulative} for $T$ rounds with $\Psi, \gamma_t$ given by \eqref{eq:psi + gamma hp} and $\eta_t$ given by \eqref{eq:eta cvx}. Then, with probability at least $1-\delta$, we have
    \begin{align*}
        \Regret(T) =  \bigO\left(\sqrt{T} + \sqrt{\log(1/\delta)}\right), \quad \Violation(T) =  \bigO\left(\sqrt{T}\log(T/\delta) + \log^{3/2}(T/\delta)\right).
    \end{align*}
\end{theorem}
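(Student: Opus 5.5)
We follow the template of the expected-case analysis behind \Cref{thm:stoc + exp + cvx}, but replace the expectation step for term (III) in \eqref{eq:Role of Reg:I II III} with a pathwise concentration bound that the modified $\Psi$ in \eqref{eq:psi + gamma hp} is designed to absorb.

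\textbf{Step 1 (deterministic master inequality).} First we establish, for every sample path, the drift relation \eqref{eq:Role of Reg:drift}. Since $\Phi_{t+1}\le\Phi_t$ pointwise when $\gamma_{t+1}\le\gamma_t$ and $Q_{t+1}\ge 0$, we can pass from $\Phi_{t+1}(Q_{t+1})$ to $\Phi_t(Q_{t+1})$. A second-order Taylor expansion of $e^{\gamma_t x}$ then gives
\[
\Phi_t(Q_{t+1})-\Phi_t(Q_t)\le \Phi_t'(Q_t)(g_t(\bfx_t)-R_t)+\tfrac12\gamma_t\Phi_t'(Q_t)\,O(G^2).
\]
Here we use $|g_t-R_t|=O(G)$, which follows from $\gamma_t\le 1/(12G\sqrt t)$ and the concavity of $\Psi$. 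The constant $12\gamma_tG^2$ inside $R_t$ cancels this second-order term.

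Next we combine this with the AdaGrad regret bound for the surrogate losses $f_t+\Phi_t'(Q_t)g_t$ under the step size \eqref{eq:eta cvx}. That bound is at most
\[
\sqrt2 D\sqrt{1+\textstyle\sum_t\|\nabla f_t+\Phi_t'\nabla g_t\|^2}\le O(DL\sqrt T)+2DL\sqrt{\textstyle\sum_t\Phi_t'(Q_t)^2}.
\]
Summing and telescoping the $\Psi$-part of $R_t$, whose sum is exactly $\Psi(\sum_t\Phi_t'^2)$, yields
\[
\Regret(T)+\Phi_{T+1}(Q_{T+1})\le O(\sqrt T)+2DL S_T^{1/2}-\Psi(S_T)+\sum_{t=1}^T\Phi_t'(Q_t)g_t(\bfx^\star),\qquad S_T=\sum_{t\le T}\Phi_t'(Q_t)^2.
\]

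\textbf{Step 2 (concentration for term (III)).} This is the main obstacle. Put $M_t=\Phi_t'(Q_t)(g_t(\bfx^\star)-\bbE g_t(\bfx^\star))$. Because $\Phi_t'(Q_t)$ is $\calF_{t-1}$-measurable and $\bfx^\star$ is independent of the $g_t$, the $M_t$ form a martingale difference sequence with $|M_t|\le 2G\Phi_t'(Q_t)$. The predictable variance proxy $S_T$ is random and unbounded, so we would use a self-normalized, stitched Freedman/Azuma-type inequality. Concretely, we peel over dyadic shells $1+S_T\in[2^k,2^{k+1})$ and allocate failure probability $\delta\cdot\frac{6}{\pi^2}(k+1)^{-2}$ to shell $k$. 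This gives, simultaneously for all $T$ with probability $1-\delta$,
\[
\sum_t M_t\le 4G\sqrt{(1+S_T)\log\frac{(\pi^2/6)(1+\log_2(1+S_T))^2}{\delta}},
\]
which is exactly the second summand of $\Psi$. Since $\bbE g_t(\bfx^\star)\le0$, term (III) is at most $\sum_tM_t$ and is cancelled by $\Psi$. We conclude that $\Regret(T)+\Phi_{T+1}(Q_{T+1})\le O(\sqrt T)$ with high probability. If the constants do not match exactly, we would first try the supermartingale $\exp(\lambda\sum M_t-\lambda^2 cG^2 S_t)$ with a union bound over a geometric grid of $\lambda$.

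\textbf{Step 3 (violation).} On the good event, $\Phi_{T+1}(Q_{T+1})=O(\sqrt T)$. With $1/\gamma_{T+1}=O(\sqrt T+\sqrt{\log(T/\delta)})$ this gives
\[
Q_{T+1}\le\gamma_{T+1}^{-1}\log(1+O(\sqrt T))=O\bigl((\sqrt T+\sqrt{\log(T/\delta)})\log(T/\delta)\bigr).
\]
We then bound $\Violation(T)=Q_{T+1}+\sum_tR_t$. We have $\sum_t12\gamma_tG^2=O(\sqrt T)$. For the $\Psi$-part, concavity of $\Psi$ gives $R_t^{\Psi}\le\Psi'(S_{t-1})\Phi_t'(Q_t)$, and we follow the proof of \Cref{lem:R sum:stoc + exp + cvx}. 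Since $\Phi_t'(Q_t)\le\gamma_t(1+\Phi_t(Q_t))$ and the $\Phi_t(Q_t)$ are bounded along the path by the master inequality applied at each $t$ (anytime), we get $S_T=O(\sum_t\gamma_t^2\cdot T)=O(T\log T)$ at worst. Hence $\sum_tR_t^\Psi=O(\sqrt T\log(T/\delta)+\log^{3/2}(T/\delta))$ on the same event, and regret gains the additive $\sqrt{\log(1/\delta)}$ from the $S_T$-independent part of $\Psi$.
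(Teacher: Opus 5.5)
Your route is the paper's: a drift bound, AdaGrad on the surrogate $f_t+\Phi_t'(Q_t)g_t$, telescoping $\Psi$, a peeled self-normalized supermartingale bound, and then $Q_{T+1}$ via the logarithm plus a separate bound on $\sum_t R_t$. However, Step~1 contains a genuine error. The second-order term is $\gamma_t\Phi_t'(Q_t)(g_t(\bfx_t)-R_t)^2$, and $R_t$ is \emph{not} $O(G)$. Since $S_1=\Phi_1'(Q_1)^2=\gamma_1^2$, the $\Psi$-part of $R_1$ is already at least $4DL$, and in general it is only bounded by $O(DL+G\sqrt{\log(t/\delta)})$. Neither $\gamma_t\le 1/(12G\sqrt t)$ nor concavity of $\Psi$ gives an $O(G)$ bound. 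So $12\gamma_tG^2$ cannot cancel $\gamma_t(R_t^\Psi)^2$ (take $G\ll DL$), and your drift inequality with the full $-\Phi_t'(Q_t)R_t$ is unjustified.

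The missing idea is what the second branch of $\gamma_t$ in \eqref{eq:psi + gamma hp} is for. First bound the $\log\log$ factor deterministically: $Q_\tau\le G(\tau-1)$, hence $S_t\le te^{\sqrt t/6}$. This gives $3\gamma_tR_t^\Psi\le\tfrac12$, so $3\gamma_t(R_t^\Psi)^2\le R_t^\Psi/2$ can be absorbed by sacrificing half of the regularizer. The drift then reads $\Phi_t'(Q_t)\bigl(g_t(\bfx_t)-R_t/2\bigr)$, and only $\tfrac12\Psi(S_T)-\tfrac12\Psi(0)$ telescopes. This is why $\Psi$ carries $4DL$ and $4G$, twice the amounts to be cancelled.

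This makes the constant in Step~2 decisive. You need exactly $2G\sqrt{(1+S_T)\log(\cdots)}$, but your bound $|M_t|\le 2G\Phi_t'(Q_t)$ yields $4G$. That leaves an uncancelled $2G\sqrt{(1+S_T)\log(\cdots)}$, which is not $O(\sqrt T)$ a priori: before the master inequality is established, $S_T$ is only known to be at most $Te^{\sqrt T/6}$. The fix is a one-sided conditional Hoeffding bound applied directly to $X_t=g_t(\bfx^\star)$, using $\bbE[X_t\mid\calF_{t-1}]\le 0$ and $|X_t|\le G$. Equivalently, apply Hoeffding's lemma with the range $2G$ of the centered variable. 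Either gives variance proxy $G^2w_t^2/2$ and, after peeling, the constant $2G$.

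There are two smaller errors.
\begin{itemize}
\item $Q_{t+1}\ge 0$ is false, since the dual update has no projection and $R_t>0$. Passing from $\Phi_{t+1}$ to $\Phi_t$ therefore costs $(\gamma_t-\gamma_{t+1})/(e\gamma_{t+1})$ per round. This sums to $O(\log T)$ once both $t$-dependent branches of $\gamma_t$ are accounted for.
\item On the good event, $\Phi_{T+1}(Q_{T+1})$ is only $O(T)$, not $O(\sqrt T)$, because the regret is only bounded below by $-DLT$. This is harmless, since it enters only through a logarithm, as does the resulting polynomial bound on $S_T$.
\end{itemize}
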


\subsection{Extension to Strongly Convex Losses}\label{sec:results:strcvx}
We next extend the preceding results to the setting with $\mu$-strongly convex losses, while the remaining assumptions are identical to those in \Cref{sec:results:stoc}. To obtain improved guarantees, it is sufficient to modify the primal step size $\eta_t$ as in \eqref{eq:eta strcvx}, which more properly exploits strong convexity (\Cref{lem:strcvx regret}). We then obtain the following expected and high-probability guarantees. 
\begin{theorem}\label{thm:stoc + exp + strcvx}
    Consider $\mu$-strongly convex losses for $\mu > 0$ and stochastic constraints. For any $T\geq 1$, suppose that we run \Cref{alg:cumulative} for $T$ rounds with $\Psi, \gamma_t$ given by \eqref{eq:psi + gamma exp} and $\eta_t$ given by \eqref{eq:eta strcvx}. Then, we have
    \begin{align*}
        \bbE[\Regret(T)] = \bigO\left(\log T\right), \quad \bbE[\Violation(T)] = \bigO\left(\sqrt{T}\log T\right),
    \end{align*}
    where the expectation is taken with respect to the randomness of $\{g_t\}_{t\geq 1}$.
\end{theorem}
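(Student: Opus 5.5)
We would follow the same architecture as the analysis behind \Cref{thm:stoc + exp + cvx} and change only the primal regret step. The target is the strongly convex analogue of \eqref{eq:Role of Reg:goal}:
\[
\bbE[\Regret(T)] + \bbE[\Phi_{T+1}(Q_{T+1})] \lesssim O(\log T),
\]
together with the same sublinear bound on $\sum_t R_t$ used for the violation.

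\textbf{Step 1: surrogate regret with the step size \eqref{eq:eta strcvx}.} The surrogate loss $h_t = f_t + \Phi_t'(Q_t) g_t$ is $\mu$-strongly convex, since $\Phi_t'>0$ and $g_t$ is convex. We expand $\|\bfx_{t+1}-\bfx^\star\|^2$ through the projection step and apply strong convexity. Summing over $t$ gives
\[
\sum_t \bigl(h_t(\bfx_t)-h_t(\bfx^\star)\bigr) \le \sum_t \Bigl(\tfrac{1}{2\eta_t}-\tfrac{1}{2\eta_{t-1}}-\tfrac{\mu}{2}\Bigr)\|\bfx_t-\bfx^\star\|^2 + \sum_t \tfrac{\eta_t}{2}\|\nabla h_t(\bfx_t)\|^2 .
\]
With $1/\eta_t=\mu t + (2L/D)\sqrt{S_t}$, where $S_t=\sum_{\tau\le t}\Phi_\tau'(Q_\tau)^2$, the $\mu t$ part cancels the $-\mu/2$ term. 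The remaining increments of $(2L/D)\sqrt{S_t}$ telescope to at most $O(DL\sqrt{S_T})$. For the gradient term we use $\|\nabla h_t\|^2\le 2L^2+2L^2\Phi_t'(Q_t)^2$. The $2L^2$ part contributes $\sum_t L^2/(\mu t)=O(\log T)$. The $\Phi_t'(Q_t)^2$ part contributes $\sum_t DL\,\Phi_t'(Q_t)^2/\sqrt{S_t}\le 2DL\sqrt{S_T}$. This should be the content of \Cref{lem:strcvx regret}:
\[
\Regret(T)+\sum_t\Phi_t'(Q_t)\bigl(g_t(\bfx_t)-g_t(\bfx^\star)\bigr)\le O(\log T)+c\,DL\sqrt{S_T},
\]
with $c\le 4$, so that it matches the coefficient in $\Psi(x)=4DL\sqrt{x}$.

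\textbf{Step 2: combine with the drift lemma.} We reuse the drift bound from the convex analysis: with $\gamma_t\le 1/(12G\sqrt t)$, $\gamma_t\le 1/(24DL)$ and $\gamma_t$ nonincreasing, $\Phi_{t+1}(Q_{t+1})-\Phi_t(Q_t)\le \Phi_t'(Q_t)(g_t(\bfx_t)-R_t)+(\text{second-order term})$. The second-order term is what the $12\gamma_tG^2$ piece of $R_t$ absorbs. Summing and adding the inequality from Step 1 gives
\[
\Regret(T)+\Phi_{T+1}(Q_{T+1})\le O(\log T)+c\,DL\sqrt{S_T}-\Psi(S_T)+\sum_t\Phi_t'(Q_t)g_t(\bfx^\star).
\]
Here $-\sum_t\Phi_t'(Q_t)R_t$ telescopes to $-\Psi(S_T)$ plus the $12\gamma_tG^2$ contributions, and $c\,DL\sqrt{S_T}-\Psi(S_T)\le 0$. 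The comparator term has zero or negative expectation: $Q_t$ is $\calF_{t-1}$-measurable, $g_t$ is independent of $\calF_{t-1}$, $\bfx^\star$ is independent of the constraints, and $\bbE[g_t(\bfx^\star)]\le 0$. Taking expectations yields $\bbE[\Regret(T)]+\bbE[\Phi_{T+1}(Q_{T+1})]=O(\log T)$.

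\textbf{Step 3: violation.} We write $\Violation(T)=Q_{T+1}+\sum_t R_t$. Since $\Phi_{T+1}(Q_{T+1}) \ge -1$, the regret is bounded below by $-O(T)$, but this alone is crude. The better route is to show $\bbE[e^{\gamma_{T+1}Q_{T+1}}]\le \mathrm{poly}(T)$. This follows because $\Regret(T)\ge -LDT$ deterministically, so $\Phi_{T+1}(Q_{T+1})\le O(T)$. Then Jensen's inequality gives $\bbE[Q_{T+1}]\le \gamma_{T+1}^{-1}\log O(T)=O(\sqrt T\log T)$. The sum $\sum_tR_t$ is bounded exactly as in \Cref{lem:R sum:stoc + exp + cvx}, since $R_t$ is unchanged, giving $O(\sqrt T\log T)$.

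\textbf{Main obstacle.} The hard part is Step 1: making the $\sqrt{S_t}$ component of $1/\eta_t$ handle both the telescoping of the distance terms and the $\Phi'^2$ gradient terms with a total constant at most $4DL$, so that $\Psi$ cancels it exactly. If the constants do not fit, we would split the gradient bound more carefully or absorb the excess into the $\Phi_{T+1}$ slack.
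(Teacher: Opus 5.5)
Your architecture is the paper's (strongly convex surrogate regret, exponential drift, cancellation by the regularizer, comparator term nonpositive in expectation, Jensen for $Q_{T+1}$, reuse of the $\sum_t R_t$ argument). However, Step 2 rests on a false claim, and that claim sets the wrong constant budget.

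Write $R_t = 12\gamma_tG^2 + r_t$ with $r_t = (\Psi(S_t)-\Psi(S_{t-1}))/\Phi_t'(Q_t) = 4DL\,\Phi_t'(Q_t)/(\sqrt{S_t}+\sqrt{S_{t-1}})$. The second-order term of the exponential drift is $\gamma_t\Phi_t'(Q_t)(g_t(\bfx_t)-R_t)^2$, and it contains $\gamma_t\Phi_t'(Q_t)\,r_t^2$. Since $r_t$ can be as large as $4DL$ (e.g.\ $r_1=4DL$), this term is not dominated by $12\gamma_tG^2\Phi_t'(Q_t)$ unless $DL\lesssim G$. The paper pays for it out of $r_t$ itself. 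From $\gamma_t\le 1/(24DL)$ one gets $3\gamma_t r_t^2\le 12\gamma_t DL\,r_t\le r_t/2$, which is why \Cref{lem:drift:stoc + exp + cvx} has $g_t(\bfx_t)-R_t/2$ rather than $g_t(\bfx_t)-R_t$. Consequently only $\tfrac12\Psi(S_T)=2DL\sqrt{S_T}$ is available for cancellation, so Step 1 must deliver $c\le 2$, not $c\le 4$. Your accounting (distance terms $O(DL\sqrt{S_T})$ plus $2DL\sqrt{S_T}$ from the gradient terms) gives $c\ge 3$, so the argument does not close.

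Your fallback of absorbing the excess into $\Phi_{T+1}(Q_{T+1})$ is not available either. $Q_t$ is not monotone: it is pushed down by $R_t$ and by negative $g_t$. So $\sqrt{S_T}$ is not controlled by the terminal Lyapunov value, and this is exactly the obstruction the regularizer exists to remove.

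The repair is to do Step 1 with sharp constants, which is what \Cref{lem:strcvx regret} does.
\begin{itemize}
\item Distance terms: with $1/(2\eta_\tau)=\mu\tau/2+(L/D)\sqrt{S_\tau}$, summation by parts, using $\|\bfx_\tau-\bfx^\star\|^2\le D^2$ and the monotonicity of $\sqrt{S_\tau}$, gives $\frac{L}{D}\sum_\tau\sqrt{S_\tau}\bigl(\|\bfx_\tau-\bfx^\star\|^2-\|\bfx_{\tau+1}-\bfx^\star\|^2\bigr)\le DL\sqrt{S_T}$.
\item Gradient terms: since $\eta_\tau\le D/(2L\sqrt{S_\tau})$, we have $\eta_\tau L^2\Phi_\tau'(Q_\tau)^2\le \tfrac{DL}{2}\Phi_\tau'(Q_\tau)^2/\sqrt{S_\tau}$, which sums to at most $DL\sqrt{S_T}$. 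You overcounted this by a factor of $2$.
\end{itemize}
The total $2DL\sqrt{S_T}$ matches $\tfrac12\Psi(S_T)$ exactly.

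You should also keep the cost of the time-varying $\gamma_t$, namely $\Phi_{t+1}(Q_{t+1})-\Phi_t(Q_{t+1})\le(\gamma_t-\gamma_{t+1})/(e\gamma_{t+1})$. It is positive when $Q_{t+1}<0$ and is absent from your drift inequality, but it is harmless: it sums to $O(\log T)$ by \Cref{lem:gamma sum exp}.

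With these corrections, the rest of your argument coincides with the paper's proof:
\begin{itemize}
\item the comparator term is nonpositive in expectation by $\calF_{t-1}$-measurability of $Q_t$;
\item $\bbE[\Phi_{t+1}(Q_{t+1})]=O(t)$ follows from $\Regret(t)\ge -DLt$;
\item Jensen gives the bound on $\bbE[Q_{T+1}]$;
\item the $\sum_tR_t$ argument is rerun with this new bound on $\bbE[\Phi_t'(Q_t)]$.
\end{itemize}
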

\begin{theorem}\label{thm:stoc + hp + strcvx}
Consider $\mu$-strongly convex losses for $\mu >0$ and stochastic constraints. For all $T\geq 1$ and $\delta \in (0,1)$, suppose that we run \Cref{alg:cumulative} for $T$ rounds with $\Psi, \gamma_t$ given by \eqref{eq:psi + gamma hp} and $\eta_t$ given by \eqref{eq:eta strcvx}. Then, with probability at least $1-\delta$, we have
    \begin{align*}
        \Regret(T) =  \bigO\left(\log T + \sqrt{\log(1/\delta)}\right), \quad \Violation(T) =  \bigO\left(\sqrt{T}\log(T/\delta) + \log^{3/2}(T/\delta)\right).
    \end{align*}
\end{theorem}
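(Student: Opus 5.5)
Our plan for \Cref{thm:stoc + hp + strcvx} combines the strongly convex regret analysis behind \Cref{thm:stoc + exp + strcvx} with the high-probability machinery behind \Cref{thm:stoc + hp + cvx}. The target is a pathwise analogue of \eqref{eq:Role of Reg:goal},
\[
\Regret(T)+\Phi_{T+1}(Q_{T+1})\lesssim \bigO\bigl(\log T+\sqrt{\log(1/\delta)}\bigr),
\]
and the violation bound is then read off from this.

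\textbf{Primal step.} With $\eta_t$ from \eqref{eq:eta strcvx}, we apply the strongly convex regret bound (\Cref{lem:strcvx regret}) to the surrogate losses $f_t+\Phi_t'(Q_t)g_t$. These surrogates are $\mu$-strongly convex because $\Phi_t'>0$. Comparing with $\bfx^\star$, this should give
\[
\sum_{t=1}^T\bigl(f_t(\bfx_t)-f_t(\bfx^\star)\bigr)+\sum_{t=1}^T\Phi_t'(Q_t)\bigl(g_t(\bfx_t)-g_t(\bfx^\star)\bigr)\le \bigO(\log T)+2DL\sqrt{\textstyle\sum_{t=1}^T\Phi_t'(Q_t)^2}.
\]
The $\mu t$ part of $\eta_t$ handles the $\|\nabla f_t\|^2\le L^2$ contribution and yields $\sum_t L^2/(\mu t)=\bigO(\log T)$. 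The $(2L/D)\sqrt{\sum\Phi'^2}$ part handles the cross and $\Phi'^2$ terms, exactly as AdaGrad does. The telescoping of $\|\bfx_t-\bfx^\star\|^2(1/\eta_t-1/\eta_{t-1}-\mu)$ leaves only $D^2\cdot(2L/D)\sqrt{\sum\Phi'^2}$, i.e.\ term (I) with constant $2DL$, below the $4DL$ in $\Psi$.

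\textbf{Drift and regularizer.} We then use the drift inequality \eqref{eq:Role of Reg:drift}. Convexity of $\Phi_t$, monotonicity of $\gamma_t$, and the second-order term controlled by $12\gamma_tG^2$ (using $|g_t-R_t|$ bounded and $\gamma_t\le 1/(12G\sqrt t)$) give
\[
\Phi_{T+1}(Q_{T+1})\le\sum_t\Phi_t'(Q_t)\bigl(g_t(\bfx_t)-R_t\bigr)+\text{(absorbed terms)}.
\]
By the definition of $R_t$, $\sum_t\Phi_t'(Q_t)R_t$ telescopes to $\Psi\bigl(\sum_t\Phi_t'^2\bigr)$ plus $12\sum_t\gamma_tG^2\Phi_t'$, which absorbs the second-order drift terms. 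Adding the primal bound, the $4DL\sqrt{x}$ part of $\Psi$ cancels term (I). What remains is term (III), $\sum_t\Phi_t'(Q_t)g_t(\bfx^\star)$.

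\textbf{Main obstacle: term (III) with high probability.} Since $Q_t$ is $\calF_{t-1}$-measurable and $\bfx^\star$ is independent of the constraints, $\Phi_t'(Q_t)g_t(\bfx^\star)$ is a supermartingale difference sequence with increments bounded by $G\Phi_t'(Q_t)$. The concentration inequality for supermartingales used in \Cref{sec:anal:stoc + hp + cvx} is self-normalized and uses a union bound over dyadic scales of $V=\sum_t\Phi_t'^2$, which explains the $(\pi^2/6)(1+\log_2(1+x))^2$ factor. Applied here, it should give, with probability at least $1-\delta$,
\[
\text{(III)}\le 4G\sqrt{(1+V)\log\tfrac{(\pi^2/6)(1+\log_2(1+V))^2}{\delta}}+\bigO\bigl(\sqrt{\log(1/\delta)}\bigr).
\]
The leading term is exactly the second part of $\Psi$ and so is cancelled. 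The leftover is $\bigO(\sqrt{\log(1/\delta)})$, which produces the extra term in the regret bound. This lemma can be reused verbatim from the convex case, since it does not depend on the primal step. Combining the three steps gives the target inequality, and the regret bound follows because $\Phi_{T+1}\ge0$ when $Q_{T+1}\ge0$. If $Q_{T+1}<0$, then $\Phi_{T+1}\ge -1$, which only costs a constant.

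\textbf{Violation.} We write
\[
\Violation(T)=Q_{T+1}+\sum_tR_t.
\]
From the target inequality and $\Regret(T)\ge -LDT$, we get $\Phi_{T+1}(Q_{T+1})\le \bigO(T)$. Hence
\[
Q_{T+1}\le \gamma_{T+1}^{-1}\log\bigl(1+\bigO(T)\bigr)=\bigO\bigl(\sqrt T\log T+\sqrt{\log(T/\delta)}\log T\bigr).
\]
Finally, $\sum_tR_t$ is bounded by the high-probability analogue of \Cref{lem:R sum:stoc + exp + cvx}, again taken from the convex case since it depends only on the dual dynamics. This gives the stated $\bigO(\sqrt T\log(T/\delta)+\log^{3/2}(T/\delta))$.
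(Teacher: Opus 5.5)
\textbf{Gap in the drift step.} Your overall architecture matches the paper's: \Cref{lem:strcvx regret} for the surrogate losses, a Lyapunov drift bound, telescoping of $\Psi$, \Cref{lem:supmg ci} for $\sum_t\Phi_t'(Q_t)g_t(\bfx^\star)$, and $\Violation(T)=Q_{T+1}+\sum_tR_t$. The drift step, however, has a genuine gap.

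Write $R_t=12\gamma_tG^2+r_t$, where $r_t=\bigl(\Psi(V_t)-\Psi(V_{t-1})\bigr)/\Phi_t'(Q_t)$ and $V_t=\sum_{\tau\le t}\Phi_\tau'(Q_\tau)^2$. The second-order term $\gamma_t\Phi_t'(Q_t)(g_t(\bfx_t)-R_t)^2$ contains $3\gamma_t\Phi_t'(Q_t)r_t^2$, and $r_t$ is not small. Already $r_1\ge 4DL$, and in general $r_t$ is only bounded by $4DL+32G\sqrt{\log(12t^2/\delta)}$. This term therefore cannot be absorbed by $12\gamma_tG^2\Phi_t'(Q_t)$, which only covers $3\gamma_tG^2+432\gamma_t^3G^4\le 6\gamma_tG^2$. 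The bound $\gamma_t\le 1/(12G\sqrt t)$ does not make $\gamma_tr_t$ small for small $t$.

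Two further points on this step:
\begin{itemize}
\item Convexity of $\Phi_t$ gives the inequality in the wrong direction. You need $e^x\le 1+x+x^2$ for $x\le 1$.
\item That expansion is only useful when $\gamma_t|g_t(\bfx_t)-R_t|=\bigO(1)$. Since $\Phi_t(Q_{t+1})-\Phi_t(Q_t)\ge-\Phi_t'(Q_t)/\gamma_t$, no bound of the form $\Phi_t'(Q_t)(g_t(\bfx_t)-cR_t)$ with $c$ bounded away from $0$ can hold once $\gamma_tR_t\gg1$.
\end{itemize}

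The paper resolves this by proving $3\gamma_tr_t\le 1/2$. That is precisely the role of the $\delta$-dependent branch $\gamma_t\le 1/\bigl(24(DL+8G\sqrt{\log(12t^2/\delta)})\bigr)$ in \eqref{eq:psi + gamma hp}. The proof uses \Cref{prop:psi} together with the a priori bound $V_t\le te^{\sqrt t/6}$, which follows from $Q_{t+1}\le Gt$ and replaces the iterated logarithm by $\log(12t^2/\delta)$. The $r_t^2$ term is then absorbed by half of $r_t$ itself. Hence \Cref{lem:drift:stoc + hp + cvx} retains only $R_t/2$, plus the time-varying cost $\sum_t(\gamma_t-\gamma_{t+1})/(e\gamma_{t+1})\le\log t+1$ from \Cref{lem:gamma sum hp}. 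You should bound that cost explicitly rather than call it ``absorbed''.

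\textbf{Corrected bookkeeping.} With only $R_t/2$, the regularizer contributes
$\tfrac12\Psi(V_t)-\tfrac12\Psi(0)=2DL\sqrt{V_t}+2G\sqrt{\psi(V_t)}-2G\sqrt{\log\frac{\pi^2}{6\delta}}$,
where $\psi(x)=(1+x)\log\frac{(\pi^2/6)(1+\log_2(1+x))^2}{\delta}$. This cancels exactly two terms: the $2DL\sqrt{V_t}$ from \Cref{lem:strcvx regret}, and the bound $2G\sqrt{\psi(V_t)}$ of \Cref{lem:supmg ci}, which has no additive remainder.

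Your version of (III), $4G\sqrt{\psi(V)}+\bigO(\sqrt{\log(1/\delta)})$, would leave an uncancelled $2G\sqrt{\psi(V)}$ once the drift is corrected. Your two misstatements (full $R_t$ surviving, and the constant $4G$) only compensate each other. The $\sqrt{\log(1/\delta)}$ term in the regret actually comes from $\tfrac12\Psi(0)\neq0$.

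\textbf{Remaining steps.} The rest matches the paper: $\Phi_{T+1}\ge-1$, $Q_{T+1}\le\gamma_{T+1}^{-1}\log(\cdot)$, and the bound on $\sum_tR_t$. One caveat on the last of these. It needs the high-probability polynomial bound on $\Phi_\tau'(Q_\tau)$ for every $\tau\le t$. That bound comes from the combined regret-plus-Lyapunov inequality on the same anytime event (\Cref{lem:Regret+Phi:stoc + hp + strcvx}), not from the dual dynamics alone.
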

Both theorems demonstrate that our algorithm can achieve improved expected and high-probability regret bounds that scale as $\log T$ when the losses are strongly convex. The analyses of Theorems~\ref{thm:stoc + exp + strcvx} and~\ref{thm:stoc + hp + strcvx} are presented in \Cref{sec:anal:stoc + strcvx}.

\subsection{Adversarial Constraints}\label{sec:results:adv}
Finally, we consider the adversarial constraint setting. Although \cite{sinha2024optimal} already established nearly optimal guarantees in this setting, the following results demonstrate that our framework is unified in the sense that it achieves nearly optimal guarantees under both stochastic and adversarial constraints.\footnote{Recently, this line of work has been extended to various settings~\citep{sinha2026beyond, sarkar2026projectionfree, sarkar2026improved, vaze2026sqrt, supantha2026universal}. In contrast, our focus is to demonstrate that the proposed algorithm applies to both stochastic and adversarial constraint settings. We leave extensions of our algorithm to these broader settings for future work.}

Although we may directly apply Algorithm~\ref{alg:cumulative} to the adversarial constraint setting, the resulting guarantees are on cumulative constraint violation, given by $\Violation(T)=\sum_{t=1}^T g_t(\bfx_t)$. Since this quantity allows cancellation between positive and negative violations across time, it is weaker than the notion of hard constraint violation, given by $\Violation_+(T)=\sum_{t=1}^T [g_t(\bfx_t)]_+$, which is often the performance metric adopted by adversarial-constraint algorithms. To obtain guarantees with respect to $\Violation_+(T)$, we introduce several modifications to Algorithm~\ref{alg:cumulative}.
\begin{algorithm}[t]
\caption{COCO for Hard Constraint Violation without Slater's condition}
\label{alg:hard}
\textbf{Input:} Lipschitz constant $L$; boundedness constant for constraints $G$; diameter $D$; confidence parameter $\delta \in (0,1)$; strong convexity constant $\mu$;
auxiliary function $\Psi:\bbR_+\to\bbR$; Lyapunov parameters $\{\gamma_t\}_{t\geq1}$.
\\
\textbf{Initialize:} Set $\bfx_1 \in \calX$, $Q_1 = 0$, and $\Phi_t(x) =e^{\gamma_t x} - 1$ for all $t\geq 1$.
\begin{algorithmic}
    \For{$t=1,2,\ldots$}
    \State Take $\bfx_t$ and observe $f_t,g_t$ 
    \State Set $\tilde g_t \leftarrow [g_t]_+$ and $\nabla \tilde g_t(\bfx_t) =\nabla g_t(\bfx_t)$ if $g_t(\bfx_t) > 0$, and $\bm 0$ otherwise
    \State Update the primal step size $\eta_t$
    \State Update $\bfx_{t+1} = \proj_\calX\left(\bfx_t - \eta_t(\nabla f_t(\bfx_t) + \Phi_t'(Q_t)\nabla \tilde g_t(\bfx_t))\right)$
    \State Update $R_{t}$:
    \begin{equation*}
        R_t = 12\gamma_t G^2 + \frac{\Psi\left(\sum_{\tau=1}^t\Phi_\tau'(Q_\tau)^2\right)-\Psi\left(\sum_{\tau=1}^{t-1}\Phi_\tau'(Q_\tau)^2\right)}{\Phi_t'(Q_t)}
    \end{equation*}
    \State Update $Q_{t+1} = Q_{t} + \tilde g_t(\bfx_t) - R_t$
    \EndFor
\end{algorithmic}
\end{algorithm}

The only difference is that \Cref{alg:hard} uses $\tilde g_t$ in place of $g_t$, where $\tilde g_t:\bfx\mapsto [g_t(\bfx)]_+$, and $\nabla \tilde g_t(\bfx_t)$ denotes a subgradient of $\tilde g_t$. Note that this modification yields hard constraint violation guarantees, since $\tilde g_t(\bfx_t)$ is accumulated in the dual variable update and $\sum_{t=1}^T \tilde g_t(\bfx_t)$ coincides with $\Violation_+(T)$. Hence, with \Cref{alg:hard}, we can obtain guarantees for $\Violation_+(T)$ by applying the same analysis strategy used for the stochastic constraint setting. This is possible because the comparator $\bfx^\star$ in the adversarial constraint setting is feasible in every round, i.e., $g_t(\bfx^\star)\leq 0$ for all $t\geq 1$, which is stronger than feasibility in expectation used in the stochastic constraint setting. Finally, we obtain the following results, whose analyses are deferred to Appendix~\ref{app:anal:adv}, as they follow from straightforward modifications of the previous analysis.

\begin{theorem}\label{thm:adv + exp + cvx}
Consider convex losses and adversarial constraints. For all $T\geq 1$, suppose that we run \Cref{alg:hard} for $T$ rounds with $\Psi, \gamma_t$ given by \eqref{eq:psi + gamma exp} and $\eta_t$ given by \eqref{eq:eta cvx}. Then, we have
    \begin{align*}
        \Regret(T) =  \bigO\left(\sqrt{T}\right), \quad \Violation_+(T) = \bigO\left(\sqrt{T}\log T\right).
    \end{align*}
\end{theorem}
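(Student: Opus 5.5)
We follow the same route as for \Cref{thm:stoc + exp + cvx}, now applied to \Cref{alg:hard}, where the surrogate constraint is $\tilde g_t=[g_t]_+$. The function $\tilde g_t$ is convex, $L$-Lipschitz with subgradient $\nabla\tilde g_t(\bfx_t)$, and bounded by $G$. The key difference is that $\tilde g_t(\bfx^\star)=0$ for every $t$, because $g_t(\bfx^\star)\le 0$ by \eqref{def:xstar adv}. Hence the term (III) in \eqref{eq:Role of Reg:I II III} vanishes deterministically, with no expectation or concentration step needed.

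\textbf{Step 1 (drift).} Since $\gamma_{t+1}\le\gamma_t$ and $\Phi_t$ is convex, we bound the Lyapunov drift by
\[
\Phi_{t+1}(Q_{t+1})-\Phi_t(Q_t)\le \Phi_t'(Q_t)\bigl(\tilde g_t(\bfx_t)-R_t\bigr)+\tfrac{1}{2}\Phi_t''(\xi_t)\bigl(\tilde g_t(\bfx_t)-R_t\bigr)^2 ,
\]
for some $\xi_t$ between $Q_t$ and $Q_{t+1}$. We control the second-order term using $|\tilde g_t|\le G$ and the choice $\gamma_t\le 1/(12G\sqrt t)$. It is absorbed by the constant part $12\gamma_tG^2$ of $R_t$, exactly as in the stochastic analysis. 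We also use that the $\Psi$-part of $R_t$ is at most $4DL$, since $(\sqrt{a+b^2}-\sqrt a)/b\le 1$, so $|Q_{t+1}-Q_t|=O(G+DL)$ and $\Phi_t''(\xi_t)\lesssim\gamma_t\Phi_t'(Q_t)$ by the bound $\gamma_t\le 1/(24DL)$.

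\textbf{Step 2 (regret of surrogate).} AdaGrad projected subgradient descent on $f_t+\Phi_t'(Q_t)\tilde g_t$ gives
\[
\sum_t\bigl[f_t(\bfx_t)-f_t(\bfx^\star)\bigr]+\sum_t\Phi_t'(Q_t)\bigl(\tilde g_t(\bfx_t)-\tilde g_t(\bfx^\star)\bigr)\le O(D)\sqrt{1+\textstyle\sum_t\|\nabla f_t+\Phi_t'\nabla\tilde g_t\|^2}.
\]
The right-hand side is at most $O(DL\sqrt T)+2DL\sqrt{\sum_t\Phi_t'(Q_t)^2}$. Adding this to the summed drift and telescoping $\sum_t\Phi_t'(Q_t)R_t\ge\Psi(\sum_t\Phi_t'^2)=4DL\sqrt{\sum_t\Phi_t'^2}$, the term (I) is cancelled. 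Since $\tilde g_t(\bfx^\star)=0$, we obtain
\[
\Regret(T)+\Phi_{T+1}(Q_{T+1})\le O(\sqrt T).
\]
As $\Phi_{T+1}\ge 0$, this yields $\Regret(T)=O(\sqrt T)$. Here we invoke the same lemmas used in \Cref{sec:anal:stoc + exp + cvx}, with the expectation removed.

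\textbf{Step 3 (violation).} We write
\[
\Violation_+(T)=Q_{T+1}+\sum_{t=1}^T R_t .
\]
Using $\Regret(T)\ge -DLT$... would only give a weak bound. Instead we use $\Regret(T)\ge -\sum_t f_t$-range, which is $O(T)$, so $\Phi_{T+1}(Q_{T+1})\le O(T)$. This gives
\[
Q_{T+1}\le \gamma_{T+1}^{-1}\log(1+O(T))=O(\sqrt T\log T).
\]
Then $\sum_tR_t$ is bounded as in \Cref{lem:R sum:stoc + exp + cvx}. Since $\sum_t 12\gamma_tG^2=O(\sqrt T)$, it remains to bound the $\Psi$-increments divided by $\Phi_t'(Q_t)$. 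We use $\Phi_t'(Q_t)\ge\gamma_t$ when $Q_t\ge0$, and each increment is at most $4DL\,\Phi_t'/\sqrt{\sum_{\tau\le t}\Phi_\tau'^2}$. That lemma's argument is deterministic, so it transfers verbatim.

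\textbf{Main obstacle.} The main obstacle is this last piece, since $Q_t$ may become negative. Then $\Phi_t'(Q_t)=\gamma_te^{\gamma_tQ_t}$ is small, and the ratio in $R_t$ could be large. We would handle this as in the stochastic case. The approach is to note that $R_t\le 12\gamma_tG^2+4DL$ always, and that large $R_t$ only drives $Q$ further down, which keeps $\Violation_+$ small. So we bound $\sum_tR_t\le Q_{1}-Q_{T+1}+\sum_t\tilde g_t$ only in rounds where $Q$ is negative, and reuse the lemma otherwise.
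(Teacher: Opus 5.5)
Your Steps 1--2 and the bound on $Q_{T+1}$ follow the paper's route: the drift lemma for $\tilde g_t$, AdaGrad on $f_t+\Phi_t'(Q_t)\tilde g_t$, cancellation via $R_t$, $\tilde g_t(\bfx^\star)=0$, and $\Violation_+(T)=Q_{T+1}+\sum_t R_t$. The way you close Step 3, however, does not work. The bound $\sum_t R_t\le Q_1-Q_{T+1}+\sum_t\tilde g_t(\bfx_t)$ is just the dual recursion summed, i.e.\ the identity $\Violation_+(T)=Q_{T+1}+\sum_tR_t$ rearranged. Using it on the rounds with $Q_t<0$, or on any sub-interval, yields only $\Violation_+\le\Violation_+$ there. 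The heuristic that a large $R_t$ ``drives $Q$ down and keeps $\Violation_+$ small'' is backwards. Whatever $R_t$ removes from $Q$ is violation that $Q_{T+1}$ no longer records, and a smaller $Q_t$ also shrinks the penalty weight $\Phi_t'(Q_t)$ in the primal step. The crude bound $R_t\le 12\gamma_tG^2+4DL$ gives only $O(T)$. So, as written, the rounds with $Q_t<0$ are not handled.

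The obstacle is illusory, though, and your earlier remark that \Cref{lem:R sum:stoc + exp + cvx} ``transfers verbatim'' is exactly what the paper does (\Cref{lem:R sum:adv + exp + cvx}). Let $S_t=\sum_{\tau\le t}\Phi_\tau'(Q_\tau)^2$. The $\Psi$-part of $R_t$ equals $4DL\,\Phi_t'(Q_t)/(\sqrt{S_t}+\sqrt{S_{t-1}})$, which is \emph{small}, not large, when $\Phi_t'(Q_t)$ is small. Cauchy--Schwarz and \Cref{lem:integral 1/x} give $\sum_{t\le T}\Phi_t'(Q_t)/\sqrt{S_t}\le\sqrt{T}\sqrt{1+\log(S_T/S_1)}$. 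This uses only $S_1=\gamma_1^2$ (from $Q_1=0$) and the upper bound $\Phi_t'(Q_t)=\gamma_t(\Phi_t(Q_t)+1)=O(t)$ from Step 2. No lower bound on $\Phi_t'(Q_t)$ for $t\ge2$ and no sign condition on $Q_t$ is needed.

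Negativity of $Q_t$ does matter where you ignored it, though harmlessly. In Step 1, $\gamma_{t+1}\le\gamma_t$ gives $\Phi_{t+1}(Q_{t+1})\le\Phi_t(Q_{t+1})$ only when $Q_{t+1}\ge0$. Otherwise you must add $(\gamma_t-\gamma_{t+1})/(e\gamma_{t+1})$ per round (\Cref{prop:Phi}), which sums to $O(\log T)$ by \Cref{lem:gamma sum exp}. In Step 2 you only have $\Phi_{T+1}(Q_{T+1})\ge-1$, not $\ge 0$. Also, your second-order term contains $\gamma_t$ times the square of the $\Psi$-part, which $12\gamma_tG^2$ cannot absorb. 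It uses up half of the $\Psi$-part, which is why the paper's drift bound carries $R_t/2$. The remaining $\tfrac12\Psi(S_t)=2DL\sqrt{S_t}$ then exactly cancels the AdaGrad term. None of these points affects the rates.
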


\begin{theorem}\label{thm:adv + exp + strcvx}
Consider $\mu$-strongly convex losses for $\mu > 0$ and adversarial constraints. For all $T\geq 1$, suppose that we run \Cref{alg:hard} for $T$ rounds with $\Psi, \gamma_t$ given by \eqref{eq:psi + gamma exp} and $\eta_t$ given by \eqref{eq:eta strcvx}. Then, we have
    \begin{align*}
        \Regret(T) =  \bigO\left(\log T\right), \quad \Violation_+(T) = \bigO\left(\sqrt{T}\log T\right).
    \end{align*}
\end{theorem}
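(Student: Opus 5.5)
We follow the template of \Cref{thm:adv + exp + cvx}, replacing the AdaGrad regret bound by a strongly convex one, in the spirit of \Cref{lem:strcvx regret}. Since $\bfx^\star$ is round-wise feasible, $\tilde g_t(\bfx^\star)=[g_t(\bfx^\star)]_+=0$ for every $t$. Term (III) in \eqref{eq:Role of Reg:I II III} therefore vanishes deterministically. No expectation or concentration is needed, and the whole argument is pathwise.

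\textbf{Step 1 (primal inequality).} The surrogate $f_t+\Phi_t'(Q_t)\tilde g_t$ is $\mu$-strongly convex and has subgradient norm at most $L(1+\Phi_t'(Q_t))$. We expand $\|\bfx_{t+1}-\bfx^\star\|^2$ under the step size \eqref{eq:eta strcvx} and use $\Phi_t'(Q_t)\,\tilde g_t(\bfx^\star)=0$. A Bartlett-style telescoping should give
\[
\Regret(T)+\sum_{t}\Phi_t'(Q_t)\tilde g_t(\bfx_t)\le \frac{L^2}{\mu}\,O(\log T)+O\Bigl(DL\sqrt{\textstyle\sum_t\Phi_t'(Q_t)^2}\Bigr).
\]
The point of the term $(2L/D)\sqrt{\sum_{\tau\le t}\Phi_\tau'(Q_\tau)^2}$ in $\eta_t$ is this: the diameter-telescoping part and the $\eta_t\Phi_t'(Q_t)^2L^2$ part then each contribute at most a constant times $DL\sqrt{\sum\Phi_t'^2}$. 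The pure $f$-gradient part gives $\sum L^2/(\mu t)=O(\log T)$.

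\textbf{Step 2 (drift).} We bound $\Phi_{t+1}(Q_{t+1})-\Phi_t(Q_t)\le \Phi_t'(Q_t)(\tilde g_t(\bfx_t)-R_t)+O(\gamma_t G^2)\Phi_t'(Q_t)$, plus a nonpositive correction from $\gamma_{t+1}\le\gamma_t$. We take this from the drift lemma used for \Cref{thm:stoc + exp + cvx}. The constant $12\gamma_tG^2$ in $R_t$ absorbs the second-order term, using $|\tilde g_t-R_t|$ small since $\gamma_t\le 1/(24DL)$. Summing, the contribution $\sum_t\Phi_t'(Q_t)R_t$ telescopes to at least $\Psi(\sum_t\Phi_t'^2)=4DL\sqrt{\sum_t\Phi_t'^2}$. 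Adding this to Step 1 cancels the square-root term, provided the constant in Step 1 is at most $4$. That constant check is where the exact form of \eqref{eq:eta strcvx} matters. The result is
\[
\Regret(T)+\Phi_{T+1}(Q_{T+1})\le O(\log T).
\]

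\textbf{Step 3 (conclusions).} We have $\Phi_{T+1}\ge 0$, and $\Regret(T)\ge -LDT$ in the worst case. For the regret bound we simply drop $\Phi_{T+1}\ge0$, which gives $\Regret(T)=O(\log T)$. For the violation bound we lower-bound $\Regret(T)\ge -O(T)$, so $e^{\gamma_{T+1}Q_{T+1}}\le O(T)$ and hence $Q_{T+1}\le \gamma_{T+1}^{-1}\log O(T)=O(\sqrt T\log T)$.

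It remains to bound $\sum_t R_t$. We have $\sum_t 12\gamma_tG^2=O(\sqrt T)$. Since $\Phi_t'(Q_t)\ge\gamma_t$ and $\Psi$ is concave, the second part of $R_t$ is at most
\[
\frac{4DL\,\Phi_t'(Q_t)}{\sqrt{\sum_{\tau\le t}\Phi_\tau'^2}}\le 4DL.
\]
This alone is not sublinear, so we reuse the argument of \Cref{lem:R sum:stoc + exp + cvx}. We split rounds by whether $\Phi_t'(Q_t)$ is comparable to the running root-sum. We then use $\sum_t a_t/\sqrt{\sum_{\tau\le t}a_\tau^2}\le O(\sqrt{T}\cdot\log)$ when $a_t\ge\gamma_t\gtrsim 1/\sqrt t$. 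This gives $O(\sqrt T\log T)$. Finally, $\Violation_+(T)=Q_{T+1}+\sum_t R_t=O(\sqrt T\log T)$.

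The main obstacle is Step 1. We must verify that the modified step size simultaneously yields the logarithmic rate and a square-root term with constant small enough to be cancelled by $\Psi$. The dual weights vary over time, so the telescoping of $(1/\eta_t-1/\eta_{t-1}-\mu)\|\bfx_t-\bfx^\star\|^2$ needs $1/\eta_t$ monotone, which holds by construction.
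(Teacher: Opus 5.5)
Your overall route is the paper's: the strongly convex surrogate bound with $\tilde g_t(\bfx^\star)=0$, the exponential drift, cancellation of $\sqrt{\sum_t\Phi_t'(Q_t)^2}$ by $\Psi$, and then bounds on $Q_{T+1}$ and $\sum_t R_t$. However, Step~2 as you state it is false, and the error sits exactly at the cancellation that drives the proof.

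Write $\Phi_t(Q_{t+1})-\Phi_t(Q_t)=\frac{\Phi_t'(Q_t)}{\gamma_t}(e^{u}-1)$ with $u=\gamma_t(\tilde g_t(\bfx_t)-R_t)$. The quadratic error is of the order of $\gamma_t(\tilde g_t(\bfx_t)-R_t)^2\Phi_t'(Q_t)$. The $\Psi$-part of $R_t$ can be as large as $4DL$; at $t=1$ it equals exactly $4DL$. So this error is of order $\gamma_tD^2L^2\,\Phi_t'(Q_t)$, which the $12\gamma_tG^2$ part of $R_t$ cannot absorb when $DL\gg G$. Your inequality, with the full $-R_t$ and only an $O(\gamma_tG^2)\Phi_t'(Q_t)$ error, already fails at $t=1$ when $g_1(\bfx_1)\le 0$ and $DL\gg G$.

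What $\gamma_t\le 1/(24DL)$ actually buys is that $3\gamma_t$ times the $\Psi$-part of $R_t$ is at most $1/2$. The quadratic error is therefore absorbed by \emph{half} of $R_t$, which is why \Cref{lem:drift:stoc + exp + cvx}, the lemma you cite, reads $\tilde g_t(\bfx_t)-R_t/2$. The telescoped regularizer then supplies only $\frac12\Psi\bigl(\sum_t\Phi_t'(Q_t)^2\bigr)=2DL\sqrt{\sum_t\Phi_t'(Q_t)^2}$. So the threshold for your Step~1 constant is $2$, not $4$.

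\Cref{lem:strcvx regret} meets this threshold with equality. One $DL$ comes from telescoping the $(2L/D)\sqrt{\cdot}$ part of $1/\eta_t$ against $\|\bfx_t-\bfx^\star\|^2\le D^2$. The other $DL$ comes from $\sum_t\eta_tL^2\Phi_t'(Q_t)^2$, using $\|\nabla f_t(\bfx_t)+\Phi_t'(Q_t)\nabla\tilde g_t(\bfx_t)\|^2\le 2L^2(1+\Phi_t'(Q_t)^2)$. There is no slack, so Step~1 must be carried out with exactly these constants.

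You also implicitly take $Q_t\ge 0$, which fails here. \Cref{alg:hard} subtracts $R_t>0$ with no projection, so for instance $Q_2=-R_1<0$ whenever $g_1(\bfx_1)\le 0$. This has three consequences.
(i) The time-variation term $\Phi_{t+1}(Q_{t+1})-\Phi_t(Q_{t+1})=e^{\gamma_{t+1}Q_{t+1}}-e^{\gamma_tQ_{t+1}}$ is positive when $Q_{t+1}<0$ and $\gamma_{t+1}<\gamma_t$. It must be bounded by $(\gamma_t-\gamma_{t+1})/(e\gamma_{t+1})$ via \Cref{prop:Phi}, which sums to $(1+\log T)/(2e)$ by \Cref{lem:gamma sum exp}.
(ii) Only $\Phi_{T+1}(Q_{T+1})\ge -1$ holds, not $\ge 0$.
(iii) The bound $\Phi_t'(Q_t)\ge\gamma_t$ is false.

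Items (i) and (ii) cost $O(\log T)$ and $O(1)$, so the regret rate survives. Item (iii) matters for your sketch of $\sum_tR_t$. A per-round lower bound is not what makes $\sum_t\Phi_t'(Q_t)/\sqrt{\sum_{\tau\le t}\Phi_\tau'(Q_\tau)^2}$ small: if $\Phi_t'(Q_t)$ grew geometrically, this sum would be linear in $T$ regardless.

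The argument of \Cref{lem:R sum:stoc + exp + cvx} instead uses $\Phi_1'(Q_1)=\gamma_1$ together with the polynomial upper bound $\Phi_t'(Q_t)=\gamma_t\bigl(1+\Phi_t(Q_t)\bigr)=O(t)$. This bound follows from your Step~2 conclusion and $\Regret(t)\ge -LDt$. Cauchy--Schwarz and \Cref{lem:integral 1/x} then give $\sum_tR_t=O(\sqrt{T\log T})$ with no splitting of rounds. Finally, $\Violation_+(T)=Q_{T+1}+\sum_tR_t=O(\sqrt{T}\log T)$ follows as you say.
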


\subsection{Overview of Proofs}\label{sec:overview of proofs}
The analyses of all theorems follow the same template, while additional techniques are required for handling high-probability guarantees and strongly convex losses. First, we establish a drift bound for the time-varying exponential Lyapunov function. Second, we combine this drift bound with a surrogate regret guarantee with respect to $f_t + \Phi_t'(Q_t)g_t$. Third, the regularizer cancels the accumulated term involving $\sum_{t=1}^T \Phi_t'(Q_t)^2$. The remaining work is to control the Lyapunov-weighted comparator constraint term, which is nonpositive in expectation and controlled by a supermartingale concentration inequality with high probability, and then to bound the cumulative sum of regularizers $\sum_{t=1}^T R_t$.

\section{Analysis under Stochastic Constraints: Expected Bounds}\label{sec:anal:stoc + exp + cvx}

In this section, we present the analysis of \Cref{thm:stoc + exp + cvx}, which establishes expected guarantees under convex losses and stochastic constraints. Throughout, $\bbE$ denotes the expectation taken with respect to the randomness of $\{g_t\}_{t\geq 1}$, and $\calF_{t}$ denotes the $\sigma$-algebra generated by $\{f_\tau\}_{\tau\geq 1}$ and $\{g_\tau\}_{\tau=1}^t$.

\begin{lemma}\label{lem:drift:stoc + exp + cvx}
Suppose that we run \Cref{alg:cumulative} with $\gamma_t,\Psi$ as given by \eqref{eq:psi + gamma exp}. Then, for all $t\geq 1$, we have
    \begin{equation*} 
        \Phi_{t+1}(Q_{t+1}) - \Phi_t(Q_{t}) \leq \Phi_t'(Q_{t})\left(g_t(\bfx_t) - \frac{R_t}{2}\right) + \frac{\gamma_t - \gamma_{t+1}}{e\gamma_{t+1}}.
    \end{equation*}
\end{lemma}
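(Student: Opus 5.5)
The plan is to split the drift at the intermediate value $\Phi_t(Q_{t+1})$:
\begin{equation*}
\Phi_{t+1}(Q_{t+1}) - \Phi_t(Q_t) = \underbrace{\Phi_{t+1}(Q_{t+1}) - \Phi_t(Q_{t+1})}_{\text{(A): change of }\gamma} + \underbrace{\Phi_t(Q_{t+1}) - \Phi_t(Q_t)}_{\text{(B): change of }Q}.
\end{equation*}
Term (A) should produce the $\frac{\gamma_t-\gamma_{t+1}}{e\gamma_{t+1}}$ correction. Term (B) should produce $\Phi_t'(Q_t)(g_t(\bfx_t)-R_t/2)$. Note that $Q_t$ may be negative, since $R_t$ is subtracted without projection; both parts must allow this.

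\textbf{Term (A).} Since $\gamma_t$ in \eqref{eq:psi + gamma exp} is nonincreasing, $\gamma_{t+1}\le\gamma_t$.
\begin{itemize}
\item If $Q_{t+1}\ge 0$, then $e^{\gamma_{t+1}Q_{t+1}}\le e^{\gamma_t Q_{t+1}}$, so (A) $\le 0$.
\item If $Q_{t+1}=-y<0$, then (A) $= e^{-\gamma_{t+1}y}-e^{-\gamma_t y}$. By the mean value theorem and monotonicity of the exponential, this is at most $(\gamma_t-\gamma_{t+1})\,y\,e^{-\gamma_{t+1}y}$. Maximizing $y e^{-\gamma_{t+1} y}$ over $y>0$ gives $1/(e\gamma_{t+1})$, which is exactly the stated correction term.
\end{itemize}

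\textbf{Term (B).} Write $\Delta_t = g_t(\bfx_t)-R_t$. Then (B) $= e^{\gamma_t Q_t}(e^{\gamma_t\Delta_t}-1)$.

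First, upper bound $R_t$. Set $S_t=\sum_{\tau\le t}\Phi_\tau'(Q_\tau)^2$. Using $\sqrt{a+b}-\sqrt a\le\sqrt b$ with $b=\Phi_t'(Q_t)^2$, the $\Psi$-increment in $R_t$ is at most $4DL\,\Phi_t'(Q_t)$. Hence
\begin{equation*}
12\gamma_t G^2 \le R_t\le 12\gamma_tG^2+4DL .
\end{equation*}
With $\gamma_t\le \min\{1/(12G),1/(24DL)\}$ this gives
\begin{equation*}
\gamma_t\Delta_t\le \gamma_t G \le 1/12 \le 1,
\end{equation*}
and $\gamma_t\Delta_t$ may be arbitrarily negative.

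Next, use $e^x\le 1+x+x^2$, valid for all $x\le 1$, together with $\gamma_t e^{\gamma_tQ_t}=\Phi_t'(Q_t)$. This gives
\begin{equation*}
\text{(B)} \le \Phi_t'(Q_t)\bigl(\Delta_t+\gamma_t\Delta_t^2\bigr).
\end{equation*}
It remains to show $\gamma_t\Delta_t^2\le R_t/2$. Write $\Delta_t^2\le 2G^2+2R_t^2$ and bound the two pieces:
\begin{itemize}
\item $2\gamma_tG^2\le R_t/6$, by the lower bound $R_t\ge 12\gamma_tG^2$.
\item $2\gamma_tR_t^2\le 2\gamma_t(12\gamma_tG^2+4DL)R_t\le 2(1/12+1/6)R_t\cdot\tfrac12$, which is roughly $R_t/4$, using $\gamma_t^2G^2\le 1/144$ and $\gamma_t DL\le 1/24$.
\end{itemize}
The sum stays below $R_t/2$. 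Therefore (B) $\le\Phi_t'(Q_t)(g_t(\bfx_t)-R_t+R_t/2)$, and adding (A) gives the lemma.

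\textbf{Main obstacle.} The delicate step is the quadratic-remainder bound. It requires verifying that the constants $12$ in $R_t$ and $1/(12G)$, $1/(24DL)$ in $\gamma_t$ are compatible, so that both $\gamma_t\Delta_t\le1$ holds and the second-order term is absorbed into half of $R_t$. It also relies on the crude bound $R_t\le 12\gamma_tG^2+4DL$ coming from the square-root increment of $\Psi$. If the constants turn out tight, I would replace $e^x\le1+x+x^2$ with a sharper version such as $e^x\le 1+x+\tfrac{x^2}{2}e^{\max(x,0)}$.
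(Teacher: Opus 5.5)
There is a genuine gap, and it is in the final absorption step. The rest of your route matches the paper's: the same split at $\Phi_t(Q_{t+1})$, the same $e^x\le 1+x+x^2$, and the same bound $4DL$ on the $\Psi$-increment. Your case analysis for the $\gamma$-change term is equivalent to \Cref{prop:Phi}.

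The problem is an arithmetic slip in your second bullet. You have $2\gamma_t(12\gamma_tG^2+4DL)=24\gamma_t^2G^2+8\gamma_tDL\le \frac16+\frac13=\frac12$, so you only get $2\gamma_tR_t^2\le R_t/2$; the extra factor $\frac12$ is unjustified. Adding your first bullet, the split $\Delta_t^2\le 2G^2+2R_t^2$ gives only $\gamma_t\Delta_t^2\le \frac23 R_t$. That yields $\Phi_t'(Q_t)\bigl(g_t(\bfx_t)-R_t/3\bigr)$, not the claimed $R_t/2$.

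This is not just loose writing; the constants are tight. Take $t=1$, $G=1$, $DL=1/2$. Then $\gamma_1=1/12$ and $R_1=1+4DL=3$, while $2\gamma_1G^2+2\gamma_1R_1^2=5/3>3/2=R_1/2$. So the factor-$2$ split cannot close, even though the lemma itself holds there.

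The fix is to bound the cross term directly rather than through the lossy $(a+b)^2\le 2a^2+2b^2$. You can do this in either of two ways:
\begin{itemize}
\item Expand $\Delta_t^2\le (G+R_t)^2=G^2+2GR_t+R_t^2$. Then $\gamma_tG^2\le R_t/12$ (from $R_t\ge 12\gamma_tG^2$) and $2\gamma_tGR_t\le R_t/6$ (using $\gamma_tG\le 1/12$ directly). Also $\gamma_tR_t^2\le R_t/4$ (from $\gamma_tR_t\le 12\gamma_t^2G^2+4\gamma_tDL\le \frac14$). These sum to exactly $R_t/2$.
\item Follow the paper. Write $\Delta_t=g_t(\bfx_t)-12\gamma_tG^2-\rho_t$, where $\rho_t\le 4DL$ is the $\Psi$-increment divided by $\Phi_t'(Q_t)$, and use $(x+y+z)^2\le 3(x^2+y^2+z^2)$. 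This gives $3\gamma_tG^2+432\gamma_t^3G^4\le 6\gamma_tG^2$ and $3\gamma_t\rho_t^2\le 12\gamma_tDL\,\rho_t\le \rho_t/2$, which again totals $R_t/2$.
\end{itemize}
Your own fallback $e^x\le 1+x+\frac{x^2}{2}e^{\max(x,0)}$ would also rescue the crude split, since $x\le 1/12$ and $\frac{e^{1/12}}{2}\cdot\frac23<\frac12$. You invoked it only conditionally, however, because the slip made you think the constants had room.
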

\begin{proof}
    Note that the Lyapunov drift can be decomposed as
    \begin{align} \label{eq:lem1:(I)+(II):stoc}
        \Phi_{t+1}(Q_{t+1}) - \Phi_t(Q_t) = \underbrace{\Phi_t(Q_{t+1}) - \Phi_t(Q_t)}_{\textrm{(I)}} + \underbrace{\Phi_{t+1}(Q_{t+1}) - \Phi_{t}(Q_{t+1})}_{\textrm{(II)}}.
    \end{align}
    Now, we first bound (I). Recall that $\Phi_t(x) = e^{\gamma_t x} - 1$ and $\Phi_t'(x) = \gamma_t e^{\gamma_t x}$. It follows that
    \begin{align*}
        \Phi_{t}(Q_{t+1}) - \Phi_t(Q_t) 
        &= e^{\gamma_{t} Q_{t+1}} - e^{\gamma_t Q_t}\\
        &= e^{\gamma_t Q_t}(e^{\gamma_t (Q_{t+1}-Q_t)} - 1)\\
        &= e^{\gamma_t Q_t}(e^{\gamma_t (g_t(\bfx_t) - R_t)} - 1)\\
        &= \frac{1}{\gamma_t}\Phi_t'(Q_t)(e^{\gamma_t (g_t(\bfx_t) - R_t)} - 1).
    \end{align*}
    Note that $e^{x}\leq 1+ x + x^2$ for all $x\leq 1$, $\gamma_t (g_t(\bfx_t) - R_t) \leq \gamma_t G \leq 1$, and $\Phi_t'(Q_t)/\gamma_t > 0$. Then it follows that
    \begin{align}\label{eq:lem1:(I) decomp:stoc + exp + cvx}
        &\Phi_t(Q_{t+1}) - \Phi_t(Q_t)  \notag\\
        &\leq \frac{1}{\gamma_t} \Phi_t'(Q_t) \left(\gamma_t(g_t(\bfx_t)-R_t) + \gamma_t^2 (g_t(\bfx_t) - R_t)^2\right)\\
        &\leq  \Phi_t'(Q_t) \left(g_t(\bfx_t)-R_t + 3\gamma_tG^2 + 432\gamma_t^3G^4 + 3\gamma_t \left(\frac{\Psi\left(\sum_{\tau=1}^t\Phi_\tau'(Q_\tau)^2\right)-\Psi\left(\sum_{\tau=1}^{t-1}\Phi_\tau'(Q_\tau)^2\right)}{\Phi_t'(Q_t)}\right)^2\right)\notag
    \end{align}
    where the second inequality is due to the fact that $(x+y+z)^2 \leq 3(x^2+y^2+z^2)$ for any $x,y,z \in \bbR$. Since $\gamma_t \leq 1/(12G)$, we have
    \begin{align}\label{eq:lem1:(I) decomp 1:stoc + exp + cvx}
        3\gamma_t G^2 + 432\gamma_t^3G^4 \leq 3\gamma_t G^2 + 432\gamma_t G^4\cdot \frac{1}{144G^2} = 6\gamma_t G^2.
    \end{align}
    Moreover, we have
    \begin{align}\label{eq:lem1:(I) decomp 2:stoc + exp + cvx}
    \begin{aligned}
        &3\gamma_t \frac{\Psi\left(\sum_{\tau=1}^t\Phi_\tau'(Q_\tau)^2\right)-\Psi\left(\sum_{\tau=1}^{t-1}\Phi_\tau'(Q_\tau)^2\right)}{\Phi_t'(Q_t)} \\
        &= 12\gamma_tDL\frac{\sqrt{\sum_{\tau=1}^t\Phi_\tau'(Q_\tau)^2} - \sqrt{\sum_{\tau=1}^{t-1}\Phi_\tau'(Q_\tau)^2}}{\Phi_t'(Q_t)}\\
        &= \frac{12\gamma_t DL\Phi_t'(Q_t)}{\sqrt{\sum_{\tau=1}^t\Phi_\tau'(Q_\tau)^2} + \sqrt{\sum_{\tau=1}^{t-1}\Phi_\tau'(Q_\tau)^2}}\\
        &\leq 12\gamma_t DL\\
        &\leq \frac{1}{2}
    \end{aligned}
    \end{align}
    where the first inequality follows from $\Phi_t'(Q_t) \leq \sqrt{\sum_{\tau=1}^t\Phi_\tau'(Q_\tau)^2} + \sqrt{\sum_{\tau=1}^{t-1}\Phi_\tau'(Q_\tau)^2}$, and the second inequality is due to $\gamma_t\leq 1/(24DL)$. Note that $\Psi$ is an increasing function, implying that $\Psi\left(\sum_{\tau=1}^t\Phi_\tau'(Q_\tau)^2\right)-\Psi\left(\sum_{\tau=1}^{t-1}\Phi_\tau'(Q_\tau)^2\right)$ is nonnegative. Then, by applying \eqref{eq:lem1:(I) decomp 1:stoc + exp + cvx} and \eqref{eq:lem1:(I) decomp 2:stoc + exp + cvx} to \eqref{eq:lem1:(I) decomp:stoc + exp + cvx}, (I) is bounded as
    \begin{align*}
        \textrm{(I)}& = \Phi_{t}(Q_{t+1}) - \Phi_t(Q_t) \\
        &\leq \Phi_t'(Q_t) \left(g_t(\bfx_t)-R_t + 6\gamma_tG^2 +  \frac{\Psi\left(\sum_{\tau=1}^t\Phi_\tau'(Q_\tau)^2\right)-\Psi\left(\sum_{\tau=1}^{t-1}\Phi_\tau'(Q_\tau)^2\right)}{2\Phi_t'(Q_t)}\right) \\
        &= \Phi_t'(Q_t)\left(g_t(\bfx_t) - \frac{1}{2}R_t\right).
    \end{align*}
    Next, we bound (II). Since $\gamma_{t+1}\leq \gamma_t$, by \Cref{prop:Phi}, we have
    \[
    \textrm{(II)}=\Phi_{t+1}(Q_{t+1}) - \Phi_t(Q_{t+1}) \leq \frac{\gamma_t - \gamma_{t+1}}{e\gamma_{t+1}}.
    \]
    By applying bounds on (I) and (II) to \eqref{eq:lem1:(I)+(II):stoc}, we conclude the proof.
\end{proof}

From the right-hand side of the inequality in \Cref{lem:drift:stoc + exp + cvx}, we observe that the cost of employing time-varying Lyapunov functions is $(\gamma_t - \gamma_{t+1})/(e\gamma_{t+1})$. Indeed, the following lemma demonstrates that this cost is not significant by showing that its summation over $\tau=1,\ldots,t$ grows only logarithmically with $t$.
\begin{lemma}\label{lem:gamma sum exp}
    Consider $\gamma_t$ given by \eqref{eq:psi + gamma exp}. Then, we have
    \[
        \sum_{\tau=1}^t \frac{\gamma_\tau - \gamma_{\tau+1}}{\gamma_{\tau+1}} \leq \frac{1 + \log t}{2}.
    \]
\end{lemma}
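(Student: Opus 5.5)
The plan is to exploit the specific structure of $\gamma_t = \min\{1/(12G\sqrt{t}), 1/(24DL), 1\}$. Write $c = \min\{1/(24DL),1\}$, so that $\gamma_t = \min\{a/\sqrt{t}, c\}$ with $a = 1/(12G)$. The sequence $\gamma_t$ is nonincreasing. It is constant (equal to $c$) for all $t$ up to the threshold $t_0 = \lfloor (a/c)^2\rfloor$, and equals $a/\sqrt{t}$ afterwards. Hence every term with both $\gamma_\tau$ and $\gamma_{\tau+1}$ on the constant branch vanishes.

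For the remaining terms, I will use the single elementary bound
\[
\frac{\gamma_\tau - \gamma_{\tau+1}}{\gamma_{\tau+1}} = \frac{\gamma_\tau}{\gamma_{\tau+1}} - 1 \le \frac{a/\sqrt{\tau}}{a/\sqrt{\tau+1}} - 1 = \sqrt{1+1/\tau} - 1 \le \frac{1}{2\tau}.
\]
This holds whenever $\gamma_{\tau+1}$ is on the $a/\sqrt{\cdot}$ branch, because $\gamma_\tau \le a/\sqrt{\tau}$ always. If instead $\gamma_{\tau+1} = c$, then $\gamma_\tau = c$ as well, since $\gamma$ is nonincreasing and bounded by $c$, and the term is $0 \le 1/(2\tau)$. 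So in every case the $\tau$-th term is at most $1/(2\tau)$. The inequality $\sqrt{1+x} \le 1 + x/2$ is routine.

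Summing over $\tau = 1,\dots,t$ gives
\[
\sum_{\tau=1}^t \frac{\gamma_\tau-\gamma_{\tau+1}}{\gamma_{\tau+1}} \le \frac{1}{2}\sum_{\tau=1}^t \frac{1}{\tau} \le \frac{1+\log t}{2},
\]
using the standard harmonic bound $\sum_{\tau\le t} 1/\tau \le 1+\log t$.

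The only step requiring care is the case analysis at the transition between the two branches of the minimum, namely justifying $\gamma_\tau/\gamma_{\tau+1} \le \sqrt{(\tau+1)/\tau}$ uniformly. I handle this through the two observations above: $\gamma_\tau \le a/\sqrt{\tau}$ always, and $\gamma_{\tau+1} \in \{c,\ a/\sqrt{\tau+1}\}$, with the constant case forcing the term to vanish. No further obstacle is expected.
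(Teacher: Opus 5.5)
Your proof is correct and follows essentially the same route as the paper: both bound the $\tau$-th term, which equals $\gamma_\tau/\gamma_{\tau+1}-1$, by $(\sqrt{\tau+1}-\sqrt{\tau})/\sqrt{\tau}\le 1/(2\tau)$ using $\gamma_\tau\le 1/(12G\sqrt{\tau})$, and then apply the harmonic bound. The only cosmetic difference is in how the minimum is handled: the paper works with $1/\gamma_\tau=\max\{12G\sqrt{\tau},24DL,1\}$ and the inequality $\max\{x,y,z\}-\max\{x',y,z\}\le x-x'$, whereas you split into cases according to which branch $\gamma_{\tau+1}$ lies on.
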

\begin{proof}
    Recall that $1/\gamma_\tau = \max\{12G\sqrt{\tau}, 24DL, 1\}$. Then it follows that
    \begin{align*}
        \frac{\gamma_\tau - \gamma_{\tau+1}}{\gamma_{\tau+1}} 
        &= \frac{\frac{1}{\gamma_{\tau+1}} - \frac{1}{\gamma_\tau}}{\frac{1}{\gamma_\tau}} 
        \leq \frac{\sqrt{\tau+1} - \sqrt{\tau}}{\sqrt{\tau}}
        = \frac{1}{(\sqrt{\tau+1} + \sqrt{\tau})\sqrt{\tau}}
        \leq \frac{1}{2\tau}
    \end{align*}
    where the first inequality follows from $\max\{x,y,z\} - \max\{x',y,z\} \leq x-x'$ for any $x > x'$ and $1/\gamma_\tau \geq 12G\sqrt{\tau}$. Thus, we have
    \begin{align*}
        \sum_{\tau=1}^t \frac{\gamma_\tau - \gamma_{\tau+1}}{\gamma_{\tau+1}}  \leq \sum_{\tau=1}^t \frac{1}{2\tau} \leq \frac{1 + \log t}{2},
    \end{align*}
    as required.
\end{proof}

Next, we introduce \Cref{lem:adagrad}, which provides an AdaGrad-type guarantee for the surrogate loss $f_t+\Phi_t'(Q_t)g_t$.
\begin{lemma}\label{lem:adagrad}
        Consider the setting of \Cref{thm:stoc + exp + cvx}. Suppose that we run \Cref{alg:cumulative} with $\eta_t$ given by \eqref{eq:eta cvx}. Then, for all $t\geq 1$, we have
        \begin{align*}
            &\sum_{\tau=1}^t (f_\tau(\bfx_\tau) - f_\tau(\bfx^\star)) + \sum_{\tau=1}^t \Phi_\tau'(Q_{\tau})g_\tau(\bfx_\tau) \\
            &\leq \sum_{\tau=1}^t \Phi_\tau'(Q_\tau)g_\tau(\bfx^\star) + 2DL \sqrt{t}+ 2DL \sqrt{\sum_{\tau=1}^t\Phi_\tau'(Q_\tau)^2} + D.
        \end{align*}
\end{lemma}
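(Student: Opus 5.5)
The plan is to treat the primal update as projected online subgradient descent on the surrogate losses $h_\tau(\bfx)=f_\tau(\bfx)+\Phi_\tau'(Q_\tau)g_\tau(\bfx)$, run with the AdaGrad-type step size \eqref{eq:eta cvx}. Two facts make this work. First, $\Phi_\tau'(Q_\tau)=\gamma_\tau e^{\gamma_\tau Q_\tau}>0$. Second, $f_\tau$ and $g_\tau$ are convex. Hence each $h_\tau$ is convex with subgradient
\[
\bm{u}_\tau=\nabla f_\tau(\bfx_\tau)+\Phi_\tau'(Q_\tau)\nabla g_\tau(\bfx_\tau).
\]
The quantity $Q_\tau$ is determined before $\bfx_\tau$ is used, so it acts as a fixed weight in round $\tau$. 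Convexity then gives
\[
h_\tau(\bfx_\tau)-h_\tau(\bfx^\star)\le \bm{u}_\tau^\top(\bfx_\tau-\bfx^\star).
\]
Summing over $\tau$ and rearranging, the left-hand side of the lemma minus $\sum_\tau\Phi_\tau'(Q_\tau)g_\tau(\bfx^\star)$ is at most $\sum_{\tau=1}^t \bm{u}_\tau^\top(\bfx_\tau-\bfx^\star)$. Note that this step is deterministic and does not use the feasibility of $\bfx^\star$.

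Next comes the standard linearized regret analysis. By nonexpansiveness of the projection,
\[
\|\bfx_{\tau+1}-\bfx^\star\|^2\le \|\bfx_\tau-\bfx^\star\|^2-2\eta_\tau \bm{u}_\tau^\top(\bfx_\tau-\bfx^\star)+\eta_\tau^2\|\bm{u}_\tau\|^2.
\]
Dividing by $2\eta_\tau$ and summing, I will bound the telescoping part by $\frac{D^2}{2\eta_t}$. This uses that $\eta_\tau$ is nonincreasing and $\|\bfx_\tau-\bfx^\star\|\le D$. The remaining part is $\frac12\sum_\tau\eta_\tau\|\bm{u}_\tau\|^2$. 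With $S_\tau=1+\sum_{s\le\tau}\|\bm{u}_s\|^2$ and $\eta_\tau=\frac{D}{\sqrt2\sqrt{S_\tau}}$, the usual AdaGrad inequality is
\[
\sum_\tau \frac{\|\bm{u}_\tau\|^2}{\sqrt{S_\tau}}\le \sum_\tau \frac{S_\tau-S_{\tau-1}}{\sqrt{S_\tau}}\le 2\sqrt{S_t},
\]
with $S_0=1$ handled appropriately. Both terms are then $\frac{D}{\sqrt2}\sqrt{S_t}$, so the total is at most $\sqrt2 D\sqrt{S_t}$.

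The final step converts $\sqrt{S_t}$ into the claimed form. By the triangle inequality and the gradient bounds,
\[
\|\bm{u}_\tau\|\le L(1+\Phi_\tau'(Q_\tau)),
\]
so $\|\bm{u}_\tau\|^2\le 2L^2+2L^2\Phi_\tau'(Q_\tau)^2$. Using $\sqrt{a+b+c}\le\sqrt a+\sqrt b+\sqrt c$,
\[
\sqrt2D\sqrt{S_t}\le \sqrt2 D+2DL\sqrt t+2DL\sqrt{\textstyle\sum_\tau\Phi_\tau'(Q_\tau)^2}.
\]
The leftover constant must come out as exactly $D$ rather than $\sqrt2D$, which needs some care in the AdaGrad sum. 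I would replace $\frac{D^2}{2\eta_t}$ by the sharper $\frac{D^2}{2\eta_t}$ with $D^2$ in place of $\|\bfx_1-\bfx^\star\|^2$, and use $\sum_\tau(S_\tau-S_{\tau-1})/\sqrt{S_\tau}\le 2(\sqrt{S_t}-1)$. These produce a $-\sqrt2 D$ contribution. I expect this fiddly constant bookkeeping to be the only real obstacle; in the worst case it only changes the additive constant $D$, which is harmless downstream.
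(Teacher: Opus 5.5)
Your proposal is correct and matches the paper's proof: projected subgradient descent on the convex surrogate $f_\tau+\Phi_\tau'(Q_\tau)g_\tau$, the telescoping bound $D^2/(2\eta_t)$, the AdaGrad bound $\sum_\tau \|\bm{u}_\tau\|^2/\sqrt{S_\tau}\le 2(\sqrt{S_t}-1)$, and $\|\bm{u}_\tau\|^2\le 2L^2(1+\Phi_\tau'(Q_\tau)^2)$. One arithmetic correction: the sharper AdaGrad bound contributes $-D/\sqrt{2}$, not $-\sqrt{2}D$, which leaves the additive constant $\sqrt{2}D-D/\sqrt{2}=D/\sqrt{2}\le D$, exactly as in the paper.
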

\begin{proof}
    For simplicity, let $\hat f_\tau = f_\tau + \Phi_\tau'(Q_\tau)g_\tau$ for each $\tau \in [t]$. Note that since $\Phi_\tau'(Q_\tau) > 0$, $\hat f_\tau$ is convex. Since projection onto $\calX$ is a contraction mapping, we have for any $\bfx^\star \in \calX$,
    \begin{align*}
        \|\bfx_{\tau+1} - \bfx^\star\|^2 
        &\leq \| \bfx_\tau -\eta_\tau\nabla \hat f_\tau(\bfx_\tau) - \bfx^\star\|^2 \\
        &\leq \|\bfx_\tau - \bfx^\star\|^2 + \eta_\tau^2\|\nabla \hat f_\tau(\bfx_\tau)\|^2 - 2\eta_\tau\nabla \hat f_\tau(\bfx_\tau)^\top (\bfx_\tau - \bfx^\star).
    \end{align*}
    By convexity, we have $\hat f_\tau(\bfx_\tau) - \hat f_\tau(\bfx^\star)\leq \nabla \hat f_\tau(\bfx_\tau)^\top(\bfx_\tau - \bfx^\star)$. It follows that
    \begin{align*}
        \hat f_\tau(\bfx_\tau) - \hat f_\tau(\bfx^\star) \leq \frac{\|\bfx_\tau - \bfx^\star\|^2 - \|\bfx_{\tau+1} - \bfx^\star\|^2}{2\eta_\tau} + \frac{\eta_\tau \|\nabla \hat f_\tau(\bfx_\tau)\|^2}{2}.
    \end{align*}
    Summing over $\tau=1,\ldots, t$ yields
    \begin{align*}
        \sum_{\tau=1}^t(\hat f_\tau(\bfx_\tau) - \hat f_\tau(\bfx^\star)) \leq \sum_{\tau=1}^t\frac{\|\bfx_\tau - \bfx^\star\|^2 - \|\bfx_{\tau+1} - \bfx^\star\|^2}{2\eta_\tau} + \sum_{\tau=1}^t\frac{\eta_\tau \|\nabla \hat f_\tau(\bfx_\tau)\|^2}{2}.
    \end{align*}
    Note that 
    \begin{align*}       \sum_{\tau=1}^t\frac{\|\bfx_\tau - \bfx^\star\|^2 - \|\bfx_{\tau+1} - \bfx^\star\|^2}{2\eta_\tau} 
        &\leq \sum_{\tau=1}^{t-1}\frac{\|\bfx_{\tau+1}-\bfx^\star\|^2}{2}\left(\frac{1}{\eta_{\tau+1}} - \frac{1}{\eta_\tau}\right) + \frac{D^2}{2\eta_1} \\
        &\leq \frac{D^2}{2\eta_t}\\
        &= \frac{D}{\sqrt{2}}\sqrt{1 + \sum_{\tau=1}^t \|\nabla \hat f_\tau(\bfx_\tau)\|^2}\\
        &\leq\frac{D}{\sqrt{2}}\sqrt{\sum_{\tau=1}^t \|\nabla \hat f_\tau(\bfx_\tau)\|^2} + \frac{D}{\sqrt{2}}
    \end{align*}
    where the second inequality follows from $1/\eta_{\tau+1} - 1/\eta_\tau \geq 0$ and $\|\bfx_{\tau+1} - \bfx^\star\|^2 \leq D^2$, and the last inequality follows from $\sqrt{a+b}\leq \sqrt{a}+\sqrt{b}$ for any $a,b\geq 0$. Moreover,
    \begin{align*}
        \sum_{\tau=1}^t\frac{\eta_\tau \|\nabla \hat f_\tau(\bfx_\tau)\|^2}{2} 
        &= \frac{D}{2\sqrt{2}}\sum_{\tau=1}^t\frac{\|\nabla \hat f_\tau(\bfx_\tau)\|^2}{\sqrt{1 + \sum_{s=1}^\tau\|\nabla \hat f_s(\bfx_s)\|^2}}\\
        &\leq \frac{D}{2\sqrt{2}}\cdot 2\left(\sqrt{1 + \sum_{\tau=1}^t \|\nabla \hat f_\tau(\bfx_\tau)\|^2} - 1\right) \\
        &\leq \frac{D}{\sqrt{2}}\sqrt{\sum_{\tau=1}^t \|\nabla \hat f_\tau(\bfx_\tau)\|^2}
    \end{align*}
    where the second inequality follows from $\sum_{\tau=1}^t y_\tau/\sqrt{1+\sum_{s=1}^\tau y_s} \leq 2(\sqrt{1+\sum_{\tau=1}^ty_\tau}-1)$ for any nonnegative sequence $\{y_\tau\}_{\tau\geq 1}$, and the last inequality follows from $\sqrt{a+b}\leq \sqrt{a} + \sqrt{b}$ for any $a,b \geq0$. Since $1/\sqrt{2} \leq 1$, it follows that 
    \begin{align*}
        \sum_{\tau=1}^t(\hat f_\tau(\bfx_\tau) - \hat f_\tau(\bfx^\star)) \leq \sqrt{2}D\sqrt{\sum_{\tau=1}^t \|\nabla \hat f_\tau(\bfx_\tau)\|^2} + D.
    \end{align*}
    Recall that $\hat f_\tau = f_\tau + \Phi_\tau'(Q_\tau)g_\tau$ and $\|\nabla f_\tau(\bfx_\tau)\|^2, \|\nabla g_\tau(\bfx_\tau)\|^2 \leq L^2$. It follows that
    \begin{align*}
        &\sum_{\tau=1}^t (f_\tau(\bfx_\tau) - f_\tau(\bfx^\star)) + \sum_{\tau=1}^t \Phi_\tau'(Q_\tau)(g_\tau(\bfx_\tau) - g_\tau(\bfx^\star)) \\
        &\leq 2DL\sqrt{t} + 2DL\sqrt{\sum_{\tau=1}^t \Phi_\tau'(Q_\tau)^2} + D.
    \end{align*}
    By rearranging the above inequality, we conclude the proof.
\end{proof}

We now combine Lemmas~\ref{lem:drift:stoc + exp + cvx} and~\ref{lem:adagrad}. This yields the following lemma, which establishes both a small regret bound and the stability of the dual process in expectation.
\begin{lemma}\label{lem:Regret+Phi:stoc + exp + cvx}
    Consider the setting of \Cref{thm:stoc + exp + cvx}. Suppose that we run \Cref{alg:cumulative} with $\gamma_t,\Psi$ given by \eqref{eq:psi + gamma exp} and $\eta_t$ given by \eqref{eq:eta cvx}. Then, for all $t\geq 1$, we have
    \begin{align*}
        \bbE\left[\sum_{\tau=1}^t (f_\tau(\bfx_\tau) - f_\tau(\bfx^\star))\right] + \bbE\left[\Phi_{t+1}(Q_{t+1})\right] \leq 2DL\sqrt{t} +  \log t + 1 + D.
    \end{align*}
\end{lemma}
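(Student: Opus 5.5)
We will combine \Cref{lem:drift:stoc + exp + cvx} and \Cref{lem:adagrad}, then take expectations. First, we sum the drift bound of \Cref{lem:drift:stoc + exp + cvx} over $\tau=1,\ldots,t$. Since $Q_1=0$ gives $\Phi_1(Q_1)=0$, and \Cref{lem:gamma sum exp} bounds the time-variation cost, this yields
\begin{align*}
\Phi_{t+1}(Q_{t+1}) \leq \sum_{\tau=1}^t \Phi_\tau'(Q_\tau) g_\tau(\bfx_\tau) - \frac12\sum_{\tau=1}^t \Phi_\tau'(Q_\tau)R_\tau + \frac{1+\log t}{2e}.
\end{align*}

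Next, we add this to \Cref{lem:adagrad}, which lets us replace $\sum_\tau \Phi_\tau'(Q_\tau)g_\tau(\bfx_\tau)$ on the right by the regret, the comparator term and the AdaGrad terms. We obtain
\begin{align*}
\sum_{\tau=1}^t (f_\tau(\bfx_\tau)-f_\tau(\bfx^\star)) + \Phi_{t+1}(Q_{t+1}) \leq{}& \sum_{\tau=1}^t \Phi_\tau'(Q_\tau)g_\tau(\bfx^\star) + 2DL\sqrt{t} + 2DL\sqrt{\textstyle\sum_{\tau=1}^t\Phi_\tau'(Q_\tau)^2} \\
&+ D - \frac12\sum_{\tau=1}^t\Phi_\tau'(Q_\tau)R_\tau + \frac{1+\log t}{2e}.
\end{align*}
We have to be careful here: \Cref{lem:adagrad} gives an inequality whose left side contains $\sum\Phi'g(\bfx_\tau)$, so the correct manipulation is to add the two inequalities and cancel the common term.

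The regularizer term then cancels the problematic square-root term. By definition of $R_\tau$ with $\Psi(x)=4DL\sqrt{x}$, the product $\Phi_\tau'(Q_\tau)R_\tau$ equals $12\gamma_\tau G^2\Phi_\tau'(Q_\tau)$ plus a telescoping difference of $\Psi$. Hence
\begin{align*}
\frac12\sum_{\tau=1}^t \Phi_\tau'(Q_\tau)R_\tau \geq \frac12\Psi\Bigl(\sum_{\tau=1}^t\Phi_\tau'(Q_\tau)^2\Bigr) = 2DL\sqrt{\textstyle\sum_{\tau=1}^t\Phi_\tau'(Q_\tau)^2},
\end{align*}
which exactly offsets the $2DL\sqrt{\cdot}$ term. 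Dropping the nonnegative leftover $6\gamma_\tau G^2\Phi'$ terms is harmless.

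Finally, we take expectations of the comparator term. Here $Q_\tau$ depends only on $f$'s and on $g_1,\ldots,g_{\tau-1}$, so $\Phi_\tau'(Q_\tau)$ is $\calF_{\tau-1}$-measurable and positive. The point $\bfx^\star$ is independent of the $g$'s, and $g_\tau$ is independent of $\calF_{\tau-1}$, so $\bbE[\Phi_\tau'(Q_\tau)g_\tau(\bfx^\star)] = \bbE[\Phi_\tau'(Q_\tau)\,\bbE[g_\tau(\bfx^\star)\mid\calF_{\tau-1}]]$. This is at most $0$ by feasibility in expectation \eqref{def:xstar stoc}. The remaining constants give $2DL\sqrt{t}+D+(1+\log t)/(2e)\leq 2DL\sqrt t+\log t+1+D$.

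The main obstacle is this conditioning step. We must verify measurability of $\Phi_\tau'(Q_\tau)$ with respect to $\calF_{\tau-1}$, noting that $\gamma_\tau$ is deterministic and $Q_\tau$ uses $R_{\tau-1}$, which depends only on past $Q$'s. We must also justify that the $f$'s are included in the filtration, since they are chosen independently of the $g$'s. Since everything is bounded on finite horizons, integrability is immediate.
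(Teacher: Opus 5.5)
Your proposal is correct and follows the paper's proof essentially step for step. Like the paper, you sum the drift bound of \Cref{lem:drift:stoc + exp + cvx} and add it to \Cref{lem:adagrad} to cancel $\sum_\tau \Phi_\tau'(Q_\tau)g_\tau(\bfx_\tau)$, use the telescoping of $\Psi$ with $\Psi(0)=0$ to offset the $2DL\sqrt{\cdot}$ term, bound the time-variation cost via \Cref{lem:gamma sum exp}, and remove the comparator term in expectation using the tower rule and the $\calF_{\tau-1}$-measurability of $\Phi_\tau'(Q_\tau)$; your remarks on measurability and integrability just spell out what the paper uses implicitly.
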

\begin{proof}
    By \Cref{lem:adagrad}, we have
    \begin{align*}
    \begin{aligned}
        &\sum_{\tau=1}^t (f_\tau(\bfx_\tau)-f_\tau(\bfx^\star)) + \sum_{\tau=1}^t \Phi_\tau'(Q_\tau)g_\tau(\bfx_\tau) \\
        &\leq\sum_{\tau=1}^t\Phi_\tau'(Q_\tau)g_\tau(\bfx^\star) + 2DL\sqrt{t} + 2DL\sqrt{\sum_{\tau=1}^t \Phi_\tau'(Q_\tau)^2}
        + D.
    \end{aligned}
    \end{align*}
    By summing the inequality in \Cref{lem:drift:stoc + exp + cvx} over $\tau=1, \ldots,t$, since $\Phi_1(Q_1) = 0$, we have
    \begin{align*}
        \Phi_{t+1}(Q_{t+1}) - \Phi_1(Q_1) \leq \sum_{\tau=1}^t \Phi_\tau'(Q_\tau)\left(g_\tau(\bfx_\tau) - \frac{R_\tau}{2}\right) + \sum_{\tau=1}^t\frac{\gamma_\tau - \gamma_{\tau+1}}{e\gamma_{\tau+1}}.
    \end{align*}
    Since $\Phi_1(Q_1) = 0$, it follows that
    \begin{align*}
        &\sum_{\tau=1}^t (f_\tau(\bfx_\tau)-f_\tau(\bfx^\star)) + \Phi_{t+1}(Q_{t+1}) \\
        &\leq\sum_{\tau=1}^t\Phi_\tau'(Q_\tau)g_\tau(\bfx^\star) + 2DL\sqrt{t} + 2DL\sqrt{\sum_{\tau=1}^t \Phi_\tau'(Q_\tau)^2}
        + D + \sum_{\tau=1}^t\frac{\gamma_\tau - \gamma_{\tau+1}}{e\gamma_{\tau+1}} - \frac{1}{2}\sum_{\tau=1}^t \Phi_\tau'(Q_\tau)R_\tau.
    \end{align*}
    Note that 
    \begin{align*}
        \frac{1}{2}\sum_{\tau=1}^t \Phi_\tau'(Q_\tau)R_\tau 
        &\geq \frac{1}{2}\sum_{\tau=1}^t \Phi_\tau'(Q_\tau)\cdot\frac{\Psi(\sum_{s=1}^\tau \Phi_s'(Q_s)^2)-\Psi(\sum_{s=1}^{\tau-1}\Phi_s'(Q_s)^2)}{\Phi_\tau'(Q_\tau)} \\
        &= \frac{1}{2}\Psi\left(\sum_{\tau=1}^t \Phi_\tau'(Q_\tau)^2\right) - \frac{1}{2}\Psi\left(0\right)\\
        &=2DL\sqrt{\sum_{\tau=1}^t \Phi_\tau'(Q_\tau)^2}.
    \end{align*}
    Thus, we deduce that
    \begin{align*}
        \sum_{\tau=1}^t (f_\tau(\bfx_\tau)-f_\tau(\bfx^\star)) + \Phi_{t+1}(Q_{t+1}) 
        &\leq \sum_{\tau=1}^t\Phi_\tau'(Q_\tau)g_\tau(\bfx^\star)+2DL\sqrt{t}
         + \sum_{\tau=1}^t\frac{\gamma_\tau - \gamma_{\tau+1}}{e\gamma_{\tau+1}} + D\\
        &\leq \sum_{\tau=1}^t\Phi_\tau'(Q_\tau)g_\tau(\bfx^\star)+2DL\sqrt{t}
         + \log t + 1 + D.
    \end{align*}
    where the second inequality follows from \Cref{lem:gamma sum exp} and $1/(2e) \leq 1$. Now, taking expectations on both sides yields
    \begin{align*}
        &\bbE\left[\sum_{\tau=1}^t (f_\tau(\bfx_\tau)-f_\tau(\bfx^\star))\right] + \bbE\left[\Phi_{t+1}(Q_{t+1})\right]\\
        &\leq \sum_{\tau=1}^t \bbE\left[\Phi_\tau'(Q_\tau)g_\tau(\bfx^\star)\right]+2DL\sqrt{t}
         +  \log t + 1 + D\\
         &= \sum_{\tau=1}^t \bbE\left[\Phi_\tau'(Q_\tau)\bbE[g_\tau(\bfx^\star)|\calF_{\tau-1}]\right]+2DL\sqrt{t}
         + \log t + 1 + D \\
         &\leq 2DL\sqrt{t}
         +  \log t + 1 + D
    \end{align*}
    where the equality follows from the tower rule and $\Phi_\tau'(Q_\tau)$ is $\calF_{\tau-1}$-measurable, and the second inequality follows from the feasibility assumption that $\bbE[g_\tau(\bfx^\star)] \leq 0$ for fixed $\bfx^\star$ and that $\{g_\tau\}_{\tau\geq 1}$ are i.i.d. samples. This concludes the proof.
\end{proof}

Next, we introduce \Cref{lem:R sum:stoc + exp + cvx}, which shows that the cumulative sum of $R_\tau$ up to round $t$ scales as $\sqrt{t \log t}$ in expectation.
\begin{lemma}\label{lem:R sum:stoc + exp + cvx}
    Consider the setting of \Cref{thm:stoc + exp + cvx}. Suppose that we run \Cref{alg:cumulative} with $\gamma_t,\Psi$ given by \eqref{eq:psi + gamma exp}, and $\eta_t$ given by \eqref{eq:eta cvx}. Then, for all $t\geq 1$, we have
    \begin{align*}
        \bbE\left[\sum_{\tau=1}^t R_\tau\right] =  \bigO\left(\sqrt{t \log t}\right).
    \end{align*}
\end{lemma}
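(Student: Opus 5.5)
The plan is to bound the two parts of $R_\tau$ separately. By definition,
\[
\sum_{\tau=1}^t R_\tau = 12G^2\sum_{\tau=1}^t\gamma_\tau + \sum_{\tau=1}^t \frac{\Psi(S_\tau)-\Psi(S_{\tau-1})}{\Phi_\tau'(Q_\tau)},\qquad S_\tau := \sum_{s=1}^\tau \Phi_s'(Q_s)^2 .
\]
The first sum is deterministic: since $\gamma_\tau\le 1/(12G\sqrt{\tau})$, it is at most $G\sum_{\tau\le t}\tau^{-1/2}\le 2G\sqrt{t}$. For the second sum I would use the same rationalization as in \eqref{eq:lem1:(I) decomp 2:stoc + exp + cvx}. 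For $\Psi(x)=4DL\sqrt{x}$ each summand equals
\[
\frac{4DL}{\sqrt{S_\tau}+\sqrt{S_{\tau-1}}}\le \frac{4DL}{\sqrt{S_\tau}} .
\]
So everything reduces to showing $\bbE\bigl[\sum_{\tau\le t} S_\tau^{-1/2}\bigr]=\bigO(\sqrt{t\log t})$, i.e., that $S_\tau$ grows roughly linearly in $\tau$ up to a logarithmic factor, in an averaged sense.

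Lower-bounding $S_\tau$ means lower-bounding $\Phi_s'(Q_s)=\gamma_s e^{\gamma_s Q_s}$, i.e., lower-bounding $Q_s$, which can become negative because $R_s$ is subtracted. Two facts would drive this. First, $S_\tau\ge \Phi_1'(Q_1)^2=\gamma_1^2>0$ since $Q_1=0$, so each $R_\tau$ is at most the constant $12\gamma_\tau G^2+4DL/\gamma_1$. Second, and more important, the regularizer is self-limiting: when $Q$ is very negative, $S_\tau$ stalls. I would make this quantitative with a bootstrap on the running minimum of $Q$. Define $m_\tau=\min\{0,\min_{s\le\tau}Q_s\}$. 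Each step satisfies $Q_{s+1}\ge Q_s-G-R_s$, and $R_s\lesssim \gamma_s G^2+DL/\sqrt{S_s}$. I expect to derive a recursion showing that $-Q_\tau \lesssim \gamma_\tau^{-1}\log(\tau)=\bigO(\sqrt{\tau}\log\tau)$. The reason is that, by convexity of $e^{2\gamma x}$ and Jensen over the window since the last time $Q$ was near zero, $S_\tau \gtrsim \tau\gamma_\tau^2 e^{2\gamma_\tau \bar Q}$, where $\bar Q$ is an average of recent $Q$'s. Feeding this back bounds $\sqrt{S_\tau}$ from below by order $\sqrt{\tau}\,\gamma_\tau\, \tau^{-c}$, and then $\sum_\tau S_\tau^{-1/2}$ can be compared with $\sum_\tau \sqrt{\log \tau/\tau}$.

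As a second route, and a consistency check, I would also use the identity $\sum_{\tau\le t}R_\tau = \sum_{\tau\le t} g_\tau(\bfx_\tau)-Q_{t+1}$. This turns the claim into showing $\bbE[-Q_{t+1}]\ge -\bigO(\sqrt{t\log t})$ after accounting for $\sum g_\tau(\bfx_\tau)$. In expectation I would handle $\sum g_\tau(\bfx_\tau)$ by \Cref{lem:adagrad} combined with \Cref{lem:Regret+Phi:stoc + exp + cvx}, which bound the $\Phi'$-weighted violation. Jensen applied to $\bbE[e^{\gamma_{t+1}Q_{t+1}}]$ from \Cref{lem:Regret+Phi:stoc + exp + cvx} gives an upper bound on $\bbE[Q_{t+1}]$ of order $\gamma_{t+1}^{-1}\log t=\bigO(\sqrt{t}\log t)$. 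The $\sqrt{\log t}$ (rather than $\log t$) would come from balancing, via Cauchy--Schwarz, $\sum_\tau S_\tau^{-1/2}\le \sqrt{t\sum_\tau S_\tau^{-1}}$ together with $\sum_\tau \Phi'_\tau(Q_\tau)^2/S_\tau\le 1+\log(S_t/S_1)$.

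The main obstacle is the lower bound on $Q_\tau$, equivalently on $S_\tau$. The dual recursion has a potential runaway: a small $S$ makes $R$ large, which pushes $Q$ down, which makes $\Phi'$ exponentially small, which keeps $S$ small. The key point to establish is that this loop is broken at scale $-Q\sim \gamma^{-1}\log\tau$, because $R_\tau\le 4DL/\sqrt{S_\tau}$ depends only on the accumulated $S$, not on the current tiny $\Phi'$. I would attempt this first through the logarithmic-potential bound $\sum_\tau \Phi'^2_\tau/S_\tau\le 1+\log(S_t/\gamma_1^2)$ combined with Cauchy--Schwarz. I would resort to the explicit bootstrap on $m_\tau$ only if that fails.
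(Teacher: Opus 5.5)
There is a genuine gap, and it starts with an algebra slip in the rationalization. Since $S_\tau-S_{\tau-1}=\Phi_\tau'(Q_\tau)^2$,
\[
\frac{\Psi(S_\tau)-\Psi(S_{\tau-1})}{\Phi_\tau'(Q_\tau)}
=\frac{4DL\,(S_\tau-S_{\tau-1})}{\Phi_\tau'(Q_\tau)\bigl(\sqrt{S_\tau}+\sqrt{S_{\tau-1}}\bigr)}
=\frac{4DL\,\Phi_\tau'(Q_\tau)}{\sqrt{S_\tau}+\sqrt{S_{\tau-1}}},
\]
not $4DL/(\sqrt{S_\tau}+\sqrt{S_{\tau-1}})$. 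You dropped the factor $\Phi_\tau'(Q_\tau)$ in the numerator, and everything downstream rests on this.

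Your reduced target $\bbE[\sum_{\tau\le t}S_\tau^{-1/2}]=O(\sqrt{t\log t})$ is false in general. Take the valid instance $g_t\equiv -G$. Then $R_t\ge 0$ forces $Q_\tau\le -G(\tau-1)$, so $\Phi_\tau'(Q_\tau)\le\gamma_\tau e^{-\gamma_\tau G(\tau-1)}$, which for large $\tau$ is of order $\tau^{-1/2}e^{-(\tau-1)/(12\sqrt{\tau})}$. Hence $S_\tau$ stays bounded and $\sum_{\tau\le t}S_\tau^{-1/2}=\Theta(t)$. The same example refutes your bootstrap claim $-Q_\tau=O(\gamma_\tau^{-1}\log\tau)$.

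So the ``main obstacle'' you identify cannot be overcome, but it also does not exist. The regularizer's second part is proportional to the \emph{current} $\Phi_\tau'(Q_\tau)$, contrary to your remark that it depends only on the accumulated $S$. It therefore vanishes automatically when $Q$ is very negative, and no lower bound on $Q_\tau$ or $S_\tau$ is needed.

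Your second route does not work either. Via $\Violation(t)=Q_{t+1}+\sum_\tau R_\tau$, an upper bound on $\bbE[\sum_\tau g_\tau(\bfx_\tau)]$ is exactly what \Cref{thm:stoc + exp + cvx} derives \emph{from} this lemma. Also, \Cref{lem:adagrad} only controls the $\Phi'$-weighted sum, not the unweighted one.

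With the correct summand, the ingredients you mention at the end close the argument directly, and this is the paper's proof.
\begin{enumerate}
\item Cauchy--Schwarz and \Cref{lem:integral 1/x} give $\sum_{\tau\le t}\Phi_\tau'(Q_\tau)/\sqrt{S_\tau}\le\sqrt{t}\,\bigl(\sum_{\tau\le t}\Phi_\tau'(Q_\tau)^2/S_\tau\bigr)^{1/2}\le\sqrt{t}\,\sqrt{1+\log(S_t/\gamma_1^2)}$. This requires only an \emph{upper} bound on $S_t$.
\item Use $S_t\le\bigl(\sum_{\tau\le t}\Phi_\tau'(Q_\tau)\bigr)^2$ and apply Jensen to the concave map $x\mapsto\sqrt{2\log(x/\gamma_1)+1}$.
\item Bound $\bbE[\Phi_\tau'(Q_\tau)]=\gamma_\tau\bigl(\bbE[\Phi_\tau(Q_\tau)]+1\bigr)=O(\tau)$. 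This follows from \Cref{lem:Regret+Phi:stoc + exp + cvx} together with the trivial bound $\sum_{s<\tau}(f_s(\bfx_s)-f_s(\bfx^\star))\ge -DL(\tau-1)$.
\end{enumerate}
Your own Cauchy--Schwarz step $\sum_\tau S_\tau^{-1/2}\le\sqrt{t\sum_\tau S_\tau^{-1}}$ does not connect to the log-potential bound, because $\sum_\tau S_\tau^{-1}$ is not $\sum_\tau\Phi_\tau'(Q_\tau)^2/S_\tau$. It is precisely the missing $\Phi_\tau'(Q_\tau)$ factor that makes the two fit together.
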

\begin{proof}
    We first bound $\bbE[\Phi_{t+1}'(Q_{t+1})]$, which is frequently used throughout the proof. By Lipschitzness, we have $f_\tau(\bfx_\tau) - f_\tau(\bfx^\star) \geq -LD$. Then it leads to $\sum_{\tau=1}^t (f_\tau(\bfx_\tau) - f_\tau(\bfx^\star))\geq -LDt$ almost surely. By \Cref{lem:Regret+Phi:stoc + exp + cvx}, we have
    \begin{align*}
        \bbE[\Phi_{t+1}(Q_{t+1})] 
        &\leq 3DLt +  \log t + 1 + D.
    \end{align*}
    This leads to
    \begin{align}\label{eq:Phi' bound:stoc + exp + cvx}
    \begin{aligned}
        \bbE[\Phi_{t+1}'(Q_{t+1})]
        &\leq \gamma_{t+1}\left(3DLt + \log t +  D + 2\right)\leq 3DLt + \log t +  D + 2.
    \end{aligned}
    \end{align}
    Now, we bound $\sum_{\tau=1}^t R_\tau$. Since $\gamma_\tau \leq 1/(12G\sqrt{\tau})$, we have
    \begin{align*}
        \bbE\left[\sum_{\tau=1}^t R_\tau\right] 
        \leq \sum_{\tau=1}^t\frac{G}{\sqrt{\tau}} + \sum_{\tau=1}^t
        \bbE\left[\frac{\Psi\left(\sum_{s=1}^\tau\Phi_s'(Q_s)^2\right)-\Psi\left(\sum_{s=1}^{\tau-1}\Phi_s'(Q_s)^2\right)}{\Phi_\tau'(Q_\tau)}\right].
    \end{align*}
    It is clear that $\sum_{\tau=1}^t G/\sqrt{\tau} \leq 2G\sqrt{t}$. Moreover, it follows that
    \begin{align*}
        &\sum_{\tau=1}^t\frac{\Psi\left(\sum_{s=1}^\tau\Phi_s'(Q_s)^2\right)-\Psi\left(\sum_{s=1}^{\tau-1}\Phi_s'(Q_s)^2\right)}{\Phi_\tau'(Q_\tau)} \\
        &= 4DL\sum_{\tau=1}^t \frac{\sqrt{ \sum_{s=1}^\tau\Phi_s'(Q_s)^2}-\sqrt{ \sum_{s=1}^{\tau-1}\Phi_s'(Q_s)^2}}{\Phi_\tau'(Q_\tau)}\\
        &= 4DL\sum_{\tau=1}^t\frac{\Phi_\tau'(Q_\tau)}{\sqrt{ \sum_{s=1}^\tau\Phi_s'(Q_s)^2}+\sqrt{ \sum_{s=1}^{\tau-1}\Phi_s'(Q_s)^2}}\\
        &\leq 4DL\sum_{\tau=1}^t\frac{\Phi_\tau'(Q_\tau)}{\sqrt{ \sum_{s=1}^\tau\Phi_s'(Q_s)^2}} \\
        &\leq 4DL\sqrt{t}\sqrt{\sum_{\tau=1}^t\frac{\Phi_\tau'(Q_\tau)^2}{\sum_{s=1}^\tau \Phi_s'(Q_s)^2}}\\
        &= 4DL\sqrt{t}\sqrt{\sum_{\tau=2}^t\frac{\sum_{s=1}^\tau\Phi_s'(Q_s)^2 - \sum_{s=1}^{\tau-1}\Phi_s'(Q_s)^2}{\sum_{s=1}^\tau \Phi_s'(Q_s)^2} + 1} \\
        &\leq 4DL\sqrt{t}\sqrt{\log\frac{\sum_{\tau=1}^t \Phi_\tau'(Q_\tau)^2}{\Phi_1'(Q_1)^2} +1}\\
        &\leq 4DL\sqrt{t}\sqrt{2\log\frac{\sum_{\tau=1}^t \Phi_\tau'(Q_\tau)}{\Phi_1'(Q_1)} +1}.
    \end{align*}
    where the second inequality follows from the Cauchy-Schwarz inequality, the third inequality is due to \Cref{lem:integral 1/x}, and the last inequality follows from $\sum_{\tau=1}^t a_\tau^2 \leq (\sum_{\tau=1}^t a_\tau)^2$ for any nonnegative sequence $\{a_\tau\}_{\tau=1}^t$. Now, by taking $\bbE$ on both sides, since $\sqrt{x}$ and $\log x$ are concave functions, it follows that
    \begin{align*}
        &\sum_{\tau=1}^t\bbE\left[\frac{\Psi\left(\sum_{s=1}^\tau\Phi_s'(Q_s)^2\right)-\Psi\left(\sum_{s=1}^{\tau-1}\Phi_s'(Q_s)^2\right)}{\Phi_\tau'(Q_\tau)}\right] \\
        &\leq \bbE\left[4DL\sqrt{t}\sqrt{2\log\frac{\sum_{\tau=1}^t \Phi_\tau'(Q_\tau)}{\Phi_1'(Q_1)} +1}\right]\\
        &\leq 4DL\sqrt{t}\sqrt{2\log\frac{\sum_{\tau=1}^t\bbE[\Phi_\tau'(Q_\tau)]}{\Phi_1'(Q_1)} +1} \\
        &\leq 4DL\sqrt{t}\sqrt{2\log\frac{t(3DLt + \log t +  D + 2)}{\gamma_1} + 1}
    \end{align*}
    where the second inequality follows from Jensen's inequality, and the last inequality follows from \eqref{eq:Phi' bound:stoc + exp + cvx}. Consequently, we deduce that
    \begin{align*}
        \bbE\left[\sum_{\tau=1}^t R_\tau\right] \leq 2G\sqrt{t} + 4DL\sqrt{t}\sqrt{2\log\frac{t(3DLt + \log t +  D + 2)}{\gamma_1}+1}.
    \end{align*}
    It can be written as $\bbE\left[\sum_{\tau=1}^t R_\tau\right] =  \bigO\left(\sqrt{t \log t}\right)$.
\end{proof}

\subsection{Proof of Theorem~\ref{thm:stoc + exp + cvx}}
Now, we are ready to prove \Cref{thm:stoc + exp + cvx}.
\begin{proof}
    By \Cref{lem:Regret+Phi:stoc + exp + cvx}, since $\Phi_{t+1}(Q_{t+1}) \geq -1$, we have
    \begin{align*}
        \bbE[\Regret(t)]=\bbE\left[\sum_{\tau=1}^t (f_\tau(\bfx_\tau) - f_\tau(\bfx^\star))\right]\leq 2DL\sqrt{t} +  \log t + 2 + D.
    \end{align*}
    Next, to bound $\bbE[\Violation(t)]$, we observe that $\bbE\left[\sum_{\tau=1}^t g_\tau(\bfx_\tau)\right] = \bbE[Q_{t+1}] + \sum_{\tau=1}^t \bbE[R_\tau]$. Thus, we first bound $\bbE[Q_{t+1}]$. As done in the proof of \Cref{lem:R sum:stoc + exp + cvx}, we have
    \begin{align*}
    \begin{aligned} 
        \bbE[\Phi_{t+1}(Q_{t+1})]\leq 3DLt +  \log t + 1 + D.
    \end{aligned}
    \end{align*}
    By Jensen's inequality, we have $e^{\gamma_{t+1}\bbE[Q_{t+1}]}-1 \leq \bbE[e^{\gamma_{t+1}Q_{t+1}}-1] = \bbE[\Phi_{t+1}(Q_{t+1})]$. Thus, we have
    \begin{align*}
        \bbE[Q_{t+1}] 
        &\leq \frac{1}{\gamma_{t+1}}\log\left(3DLt +  \log t + 2 + D\right)\\
        &\leq \left(12G\sqrt{t+1} + 24DL + 1\right)\log\left(3DLt +  \log t + 2 + D\right)
    \end{align*}
    It can be written as
    \[
        \bbE[Q_{t+1}] =  \bigO\left(\sqrt{t}\log t\right).
    \]
    Moreover, by \Cref{lem:R sum:stoc + exp + cvx}, we have $\bbE\left[\sum_{\tau=1}^t R_\tau\right] =  \bigO(\sqrt{t \log t})$. 
    Consequently, we have
    \[
        \bbE[\Regret(t)] =  \bigO\left(\sqrt{t}\right), \quad \bbE[\Violation(t)] =  \bigO\left(\sqrt{t}\log t\right),
    \]
    as required.
\end{proof}

\section{Analysis under Stochastic Constraints: High-Probability Bounds}\label{sec:anal:stoc + hp + cvx}
In this section, we present the analysis of \Cref{thm:stoc + hp + cvx}, which establishes high-probability guarantees under convex losses and stochastic constraints. The key challenge is to obtain a high-probability bound on
\[
\sum_{\tau=1}^t \Phi_\tau'(Q_\tau)g_\tau(\bfx^\star),
\]
which is required to derive a high-probability analogue of \Cref{lem:Regret+Phi:stoc + exp + cvx}. In contrast, in the expected bound setting, the above summation is nonpositive in expectation as a consequence of feasibility, i.e., $\bbE[g_t(\bfx^\star)]\leq 0$. Therefore, we must derive a high-probability upper bound on this summation under the sole assumption that $\bbE[g_t(\bfx^\star)]\leq 0$. To this end,  we introduce the following lemma, which provides an anytime concentration inequality for supermartingales.
\begin{lemma}
    \label{lem:supmg ci}
    Let $\{X_t\}_{t\geq 1}$ be a random process adapted to a filtration $\{\calF_t\}_{t\geq 0}$, satisfying $\bbE[X_t|\calF_{t-1}] \leq 0$ for all $t\geq 1$. Let $\{w_t\}_{t\geq1}$ be a random process such that $w_t$ is positive and $\calF_{t-1}$-measurable for all $t\geq 1$. For some constant $G$, suppose that $|X_t| \leq G$ almost surely for all $t\geq 1$. Define $\{S_t\}_{t\geq 1}$ such that 
    \[
        S_t = \sum_{\tau=1}^t w_\tau X_\tau, \quad \forall t \geq 1.
    \]
    Let $\delta \in (0,1)$. With probability at least $1-\delta$, for all $t \geq 1$,
    \[
        S_t \leq 2G\sqrt{\left(1 + \sum_{\tau=1}^t w_\tau^2\right) \log \frac{(\pi^2/6)(1+\log_2(1 + \sum_{\tau=1}^t w_\tau^2))^2}{\delta}}.
    \]
\end{lemma}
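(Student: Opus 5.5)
The plan is to combine an exponential supermartingale for self-normalized sums with a peeling argument over dyadic shells of the variance proxy $V_t := 1 + \sum_{\tau=1}^t w_\tau^2$; the $(\pi^2/6)(1+\log_2 V_t)^2$ inside the logarithm is exactly the union-bound weight $\sum_k 1/k^2$ coming from the peeling.

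\textbf{Step 1 (exponential supermartingale).} For a fixed $\lambda>0$, define
\[
M_t(\lambda)=\exp\Bigl(\lambda S_t - \tfrac{\lambda^2 G^2}{2}\sum_{\tau=1}^t w_\tau^2\Bigr).
\]
Since $w_\tau$ is $\calF_{\tau-1}$-measurable and $|X_\tau|\le G$ with $\bbE[X_\tau|\calF_{\tau-1}]\le 0$, Hoeffding's lemma, applied conditionally to $\lambda w_\tau X_\tau$, gives
\[
\bbE\bigl[e^{\lambda w_\tau X_\tau}\,\big|\,\calF_{\tau-1}\bigr]\le \exp\bigl(\lambda w_\tau\bbE[X_\tau|\calF_{\tau-1}] + \lambda^2 w_\tau^2 G^2/2\bigr)\le e^{\lambda^2 w_\tau^2G^2/2}.
\]
Hence $M_t(\lambda)$ is a nonnegative supermartingale with $M_0=1$, and Ville's maximal inequality gives $\bbP(\exists t: M_t(\lambda)\ge 1/\delta')\le\delta'$. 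On the complement, for all $t$,
\[
S_t\le \frac{\log(1/\delta')}{\lambda}+\frac{\lambda G^2}{2}V_t .
\]

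\textbf{Step 2 (peeling).} For $k\ge1$, take $\delta_k=\delta/((\pi^2/6)k^2)$, so that $\sum_k\delta_k=\delta$. Use the shell $V_t\in[2^{k-1},2^k)$, which is well defined since $V_t\ge1$, and set
\[
\lambda_k=\frac{1}{G}\sqrt{\frac{2\log(1/\delta_k)}{2^{k}}}.
\]
A union bound over $k$ yields an event of probability at least $1-\delta$ on which, for every $t$ and for the $k$ with $V_t$ in the $k$-th shell,
\[
S_t\le G\sqrt{2^{k}\log(1/\delta_k)/2}+G\sqrt{2^{k}\log(1/\delta_k)/2}\cdot\frac{V_t}{2^k}\le \sqrt2\,G\sqrt{2^k\log(1/\delta_k)}.
\]
Since $2^k\le 2V_t$, this is at most $2G\sqrt{V_t\log(1/\delta_k)}$. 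Because $k\le 1+\log_2 V_t$, we have $\log(1/\delta_k)\le\log\bigl((\pi^2/6)(1+\log_2V_t)^2/\delta\bigr)$, which is the stated bound.

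\textbf{Main obstacle.} The subtlety is that $V_t$ is random and $\lambda$ cannot depend on it, which rules out optimizing $\lambda$ directly; the peeling with deterministic $\lambda_k$ per shell is what resolves this. The second point requiring care is that Ville's inequality must be applied uniformly in $t$ for each fixed $k$, and this is what makes the result anytime. The remaining work is checking the constants: the factor $2$ comes from $\sqrt2\cdot\sqrt2$, using $2^k\le 2V_t$ and $V_t/2^k\le1$.
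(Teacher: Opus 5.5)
Your proposal is correct and follows the paper's proof essentially step for step: a conditional Hoeffding bound makes $\exp\bigl(\lambda S_t - \tfrac{\lambda^2 G^2}{2}\sum_{\tau\le t} w_\tau^2\bigr)$ a nonnegative supermartingale, Ville's maximal inequality gives an anytime bound for each fixed $\lambda$, and dyadic peeling on $1+\sum_{\tau\le t} w_\tau^2$ with $\delta_k\propto 1/k^2$ plus a union bound completes the argument. Your shells $[2^{k-1},2^k)$ and your $\lambda_k$ are the paper's shells $[2^{k},2^{k+1})$ and $\lambda=\sqrt{\log(1/\delta_k)/(G^2 2^{k})}$ reindexed by one, so even the constants coincide.
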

\begin{proof}
    Since $|X_t| \leq G$, we have $|w_tX_t| \leq Gw_t$ almost surely. Fix $\lambda > 0$. By \Cref{lem:hoeffding lemma}, we have for each $t\geq 1$,
    \begin{align*}
        \bbE\left[\exp\left({\lambda w_t X_t}\right) | \calF_{t-1}\right] \leq \exp\left(\frac{\lambda^2 G^2 w_t^2}{2}\right)
    \end{align*}
    Since $w_t$ is $\calF_{t-1}$-measurable, it can be rewritten as
    \begin{align}\label{eq:lem:supmt ci 1}
        \bbE\left[\exp\left({\lambda w_t X_t}- \frac{\lambda^2G^2w_t^2}{2}\right) | \calF_{t-1}\right] \leq 1.
    \end{align}
    Define $\{M_t\}_{t\geq0}$ such that $M_0=1$ and 
    \begin{align*}
        M_t = M_{t-1} \exp\left({\lambda w_t X_t}- \frac{\lambda^2G^2w_t^2}{2}\right), \quad \forall  t \geq 1.
    \end{align*}
    Note that $\{M_t\}_{t\geq 0}$ is adapted to $\{\calF_t\}_{t\geq 0}$. Moreover, it is a supermartingale, since
    \[
        \bbE[M_t|\calF_{t-1}] \leq M_{t-1}\bbE\left[\exp\left({\lambda w_t X_t}- \frac{\lambda^2G^2w_t^2}{2}\right) | \calF_{t-1}\right] \leq M_{t-1}.
    \]
    By \Cref{lem:maximal ineq}, it follows that for any $\epsilon > 0$,
    \begin{align*}
        \bbP\left(\sup_{t\geq 0} M_t \geq \epsilon\right) \leq \frac{\bbE[M_0]}{\epsilon} = \frac{1}{\epsilon}.
    \end{align*}
    Note that $\bbP(\sup_{t\geq 1}M_t \geq \epsilon) \leq \bbP(\sup_{t\geq 0}M_t \geq \epsilon)$. Moreover, $M_t = \exp\left(\lambda S_t - (\lambda^2 G^2/2) \sum_{\tau=1}^t w_\tau^2\right)$ for every $t\geq 1$. By taking $\epsilon = 1/\delta$, it follows that
    \begin{align*}
        &\quad \bbP\left(\exists t\geq 1:\; \exp\left(\lambda S_t - \frac{\lambda^2G^2}{2} \sum_{\tau=1}^t w_\tau^2\right)\geq \frac{1}{\delta} \right) \leq \delta \\
        \Leftrightarrow &\quad \bbP\left(\exists t\geq 1:\;   S_t \geq \frac{\lambda G^2}{2} \sum_{\tau=1}^t w_\tau^2 + \frac{1}{\lambda}\log\frac{1}{\delta} \right) \leq \delta.
    \end{align*}
    Note that this implies that for any $\delta \in (0,1)$,
    \[
        \bbP\left(\exists t\geq 1:\;   S_t \geq \frac{\lambda G^2}{2} \left(1+\sum_{\tau=1}^t w_\tau^2\right) + \frac{1}{\lambda}\log\frac{1}{\delta} \right) \leq \delta.
    \]
    To conclude the proof, we apply the peeling argument. 
    In particular, let the index set $\calI_k$ be defined as for $k \geq 0$,
    \[
        \calI_k = \left\{t\geq 1:\; 2^k\leq 1 + \sum_{\tau=1}^t w_\tau^2 < 2^{k+1}\right\}.
    \]
    Furthermore, for each $t \in \calI_k$, we have
    \begin{align*}
        \frac{\lambda G^2}{2} \left(1+\sum_{\tau=1}^t w_\tau^2\right) + \frac{1}{\lambda}\log\frac{1}{\delta} 
        &< {\lambda G^2}2^{k} + \frac{1}{\lambda}\log\frac{1}{\delta} \\
        &= 2G\sqrt{2^k \log \frac{1}{\delta}} \\
        &\leq 2G\sqrt{\left(1 + \sum_{\tau=1}^t w_\tau^2\right) \log \frac{1}{\delta}}
    \end{align*}
    where the equality follows from taking $\lambda = \sqrt{\log(1/\delta)/G^22^k}$, and the inequalities follow from the definition of $\calI_k$. Then it follows that for any $\delta \in (0,1)$,
    \begin{align*}
        \bbP\left(\exists t \in \calI_k:\;   S_t \geq 2G\sqrt{\left(1 + \sum_{\tau=1}^t w_\tau^2\right) \log \frac{1}{\delta}} \right) \leq \delta.
    \end{align*}
    Moreover, given $\delta \in (0,1)$, define $\delta_k = \frac{\delta}{(\pi^2/6)(k+1)^2}$, hence we have $\sum_{k=0}^\infty \delta_k = \delta$, as $\sum_{k=0}^\infty \frac{1}{(k+1)^2} = \frac{\pi^2}{6}$. By taking $\delta \leftarrow \delta_k$, it follows that
    \begin{align*}
        &\bbP\left(\exists t \in \calI_k:\;   S_t \geq 2G\sqrt{\left(1 + \sum_{\tau=1}^t w_\tau^2\right) \log \frac{1}{\delta_k}} \right) \leq \delta_k\\
        \Leftrightarrow \quad&
        \bbP\left(\exists t \in \calI_k:\;   S_t \geq 2G\sqrt{\left(1 + \sum_{\tau=1}^t w_\tau^2\right) \log \frac{(\pi^2/6)(k+1)^2}{\delta}} \right) \leq \delta_k.
    \end{align*}
    For each $t \in \calI_k$, we know that $k \leq \log_2(1 + \sum_{\tau=1}^t w_\tau^2)$. This implies that
    \begin{align*}
        \bbP\left(\exists t \in \calI_k:\;   S_t \geq 2G\sqrt{\left(1 + \sum_{\tau=1}^t w_\tau^2\right) \log \frac{(\pi^2/6)(1+\log_2(1 + \sum_{\tau=1}^t w_\tau^2))^2}{\delta}} \right) \leq \delta_k.
    \end{align*}
    Since $\{t \geq 1\} = \bigcup_{k=0}^\infty \calI_k$, by taking union bound over $k = 0,1,2,\ldots$, we have
    \begin{align*}
        \bbP\left(\exists t \geq1:\;   S_t \geq 2G\sqrt{\left(1 + \sum_{\tau=1}^t w_\tau^2\right) \log \frac{(\pi^2/6)(1+\log_2(1 + \sum_{\tau=1}^t w_\tau^2))^2}{\delta}} \right) \leq \sum_{k=0}^\infty \delta_k = \delta.
    \end{align*}
    Thus, with probability at least $1-\delta$, for all $t \geq 1$, we have
    \[
        S_t \leq 2G\sqrt{\left(1 + \sum_{\tau=1}^t w_\tau^2\right) \log \frac{(\pi^2/6)(1+\log_2(1 + \sum_{\tau=1}^t w_\tau^2))^2}{\delta}}.
    \]
\end{proof}

Note that \Cref{lem:supmg ci} suggests that
\[
\sum_{\tau=1}^t \Phi_\tau'(Q_\tau)g_\tau(\bfx^\star)
\leq
2G\sqrt{\left(1 + \sum_{\tau=1}^t\Phi_\tau'(Q_\tau)^2\right)\log\frac{(\pi^2/6)(1+\log_2(1+\sum_{\tau=1}^t\Phi_\tau'(Q_\tau)^2))^2}{\delta}}.
\]
This observation motivates us to modify the definition of $\Psi$ compared to the expected bound setting. Specifically, since
\[
\bbE\left[\sum_{\tau=1}^t \Phi_\tau'(Q_\tau)g_\tau(\bfx^\star)\right]\leq 0
\]
in the expected bound setting, the only term that needs to be canceled is
\[
2DL\sqrt{\sum_{\tau=1}^t \Phi_\tau'(Q_\tau)^2},
\]
which leads to the choice $\Psi(x)=4DL\sqrt{x}$. In contrast, under the high-probability setting, the term that must be canceled becomes
\[
2DL\sqrt{\sum_{\tau=1}^t \Phi_\tau'(Q_\tau)^2}
+
2G\sqrt{\left(1 + \sum_{\tau=1}^t\Phi_\tau'(Q_\tau)^2\right)\log\frac{(\pi^2/6)(1+\log_2(1+\sum_{\tau=1}^t\Phi_\tau'(Q_\tau)^2))^2}{\delta}}.
\]
This leads to the modified choice of $\Psi$ in \eqref{eq:psi + gamma hp}. While the overall structure of the algorithm remains unchanged, the high-probability setting requires a more sophisticated choice of both $\Psi$ and $\gamma_t$. We now proceed to establish high-probability analogues of the previous lemmas under these updated parameter choices.

\begin{lemma}\label{lem:drift:stoc + hp + cvx}
Suppose that we run \Cref{alg:cumulative} with $\gamma_t,\Psi$ given by \eqref{eq:psi + gamma hp}. Then, for all $t\geq 1$, we have
    \begin{equation}\label{eq:time-varying drift}
        \Phi_{t+1}(Q_{t+1}) - \Phi_t(Q_{t}) \leq \Phi_t'(Q_{t})\left(g_t(\bfx_t) - \frac{R_t}{2}\right) + \frac{\gamma_t - \gamma_{t+1}}{e\gamma_{t+1}}.
    \end{equation}
\end{lemma}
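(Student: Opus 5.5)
We will follow the proof of \Cref{lem:drift:stoc + exp + cvx} essentially verbatim. The only places where the specific $\Psi$ and $\gamma_t$ enter are (i) the bound $\gamma_t \leq 1/(12G)$, (ii) the bound on $3\gamma_t$ times the $\Psi$-increment ratio, and (iii) the monotonicity $\gamma_{t+1}\leq\gamma_t$ needed for \Cref{prop:Phi}. So we check these three items under \eqref{eq:psi + gamma hp}.

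\textbf{Decomposition and term (I).} We split the drift as in \eqref{eq:lem1:(I)+(II):stoc} into (I) $=\Phi_t(Q_{t+1})-\Phi_t(Q_t)$ and (II) $=\Phi_{t+1}(Q_{t+1})-\Phi_t(Q_{t+1})$.
\begin{itemize}
\item For (I), write it as $\gamma_t^{-1}\Phi_t'(Q_t)(e^{\gamma_t(g_t(\bfx_t)-R_t)}-1)$.
\item Since $\Psi$ is increasing, $R_t\geq 0$, so $\gamma_t(g_t(\bfx_t)-R_t)\leq \gamma_t G\leq 1$. This lets us apply $e^x\leq 1+x+x^2$.
\item Expanding $(g_t-R_t)^2\leq 3(G^2+144\gamma_t^2G^4+\Delta_t^2)$, where $\Delta_t$ is the $\Psi$-increment ratio, and using $\gamma_t\leq 1/(12G)$ reproduces \eqref{eq:lem1:(I) decomp 1:stoc + exp + cvx} unchanged. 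That contributes $6\gamma_tG^2=R_t^{(0)}/2$, where $R_t^{(0)}=12\gamma_tG^2$.
\item It then suffices to show $3\gamma_t\Delta_t\leq 1/2$. Then $3\gamma_t\Delta_t^2\leq \Delta_t/2$, and (I) $\leq \Phi_t'(Q_t)(g_t(\bfx_t)-R_t/2)$.
\end{itemize}

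\textbf{Main step: bounding $\gamma_t\Delta_t$.} Set $S_t=\sum_{\tau\le t}\Phi_\tau'(Q_\tau)^2$ and $a=\Phi_t'(Q_t)$, so $S_t=S_{t-1}+a^2$. The $4DL\sqrt{x}$ part of $\Psi$ is handled exactly as in \eqref{eq:lem1:(I) decomp 2:stoc + exp + cvx}, giving a contribution at most $4DL$.

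For the second part, write $h(x)=4G\sqrt{(1+x)\ell(x)}$ with $\ell(x)=\log\bigl((\pi^2/6)(1+\log_2(1+x))^2/\delta\bigr)$. We bound $h(S_t)-h(S_{t-1})$ by two pieces:
\begin{itemize}
\item The change in the square-root factor. Use $\sqrt{1+S_t}-\sqrt{1+S_{t-1}}\leq a^2/\sqrt{1+S_t}\leq a$, giving at most $4Ga\sqrt{\ell(S_t)}$.
\item The change in $\sqrt{\ell}$. This increment is nonnegative and small, because $\ell$ grows only doubly-logarithmically. Concretely, $\sqrt{\ell(S_t)}-\sqrt{\ell(S_{t-1})}\leq (\ell(S_t)-\ell(S_{t-1}))/\sqrt{\ell(S_t)}$. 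Also $\ell(S_t)-\ell(S_{t-1})\leq 2\log\frac{1+\log_2(1+S_t)}{1+\log_2(1+S_{t-1})}\lesssim a^2/(1+S_t)$.
\end{itemize}
Together these give $\Delta_t\lesssim 4DL+8G\sqrt{\ell(S_t)}$.

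We then need $\ell(S_t)$ controlled by $\log(12t^2/\delta)$. Since $\gamma_\tau\leq1$, the growth of $Q_\tau$ is at most $G$ per round, so $\Phi_\tau'(Q_\tau)\leq e^{\gamma_\tau G(\tau-1)}$. Crudely, $\log_2(1+S_t)=O(\gamma t G)+\log t$, so $(1+\log_2(1+S_t))^2\lesssim t^2$ up to constants. We expect this constant-chasing to be the main obstacle. If the direct bound fails, we fall back to using $\gamma_\tau\le 1/(12G\sqrt\tau)$, which gives $\gamma_\tau G(\tau-1)\le\sqrt t/12$ and hence $(1+\log_2(1+S_t))^2\leq 12t^2$ after checking small cases.

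With this in hand, $3\gamma_t\Delta_t\leq 3\gamma_t\cdot 4\bigl(DL+8G\sqrt{\log(12t^2/\delta)}\bigr)\leq 1/2$, using $\gamma_t\leq 1/\bigl(24(DL+8G\sqrt{\log(12t^2/\delta)})\bigr)$.

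\textbf{Term (II).} Each entry of the min defining $\gamma_t$ in \eqref{eq:psi + gamma hp} is nonincreasing in $t$, so $\gamma_{t+1}\leq\gamma_t$. \Cref{prop:Phi} then gives (II) $\leq(\gamma_t-\gamma_{t+1})/(e\gamma_{t+1})$. Summing the bounds on (I) and (II) yields \eqref{eq:time-varying drift}.
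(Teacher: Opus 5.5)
Your plan is the paper's proof almost step for step. It uses the same split into (I) and (II), and the same $e^x\le 1+x+x^2$ and $(x+y+z)^2\le 3(x^2+y^2+z^2)$ expansion reducing everything to $3\gamma_t\Delta_t\le 1/2$. It also uses the same crude estimate $S_t\le te^{\sqrt t/6}$ to replace $\ell(S_t)$ by $\log(12t^2/\delta)$; your ``fallback'' is exactly what the paper does. Finally, (II) is handled by \Cref{prop:Phi}, as in the paper.

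The one place you deviate is the increment of $4G\sqrt{\psi}$. The paper bounds $\frac{d}{dx}\sqrt{\psi(x)}\le 4\sqrt{\ell(x_2)}/\sqrt{1+x}$ on $[x_1,x_2]$ and integrates (\Cref{prop:psi}). This gives $\sqrt{\psi(S_t)}-\sqrt{\psi(S_{t-1})}\le 8\sqrt{\ell(S_t)}\,\Phi_t'(Q_t)$ in one shot. You instead split the product into two discrete pieces.

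Your first piece is fine, but the second relies on an inequality that is false as stated. The bound $\ell(S_t)-\ell(S_{t-1})\lesssim a^2/(1+S_t)$ does not hold with a universal constant. Take fixed $u=1+S_{t-1}$ and let $s=(1+S_t)/u\to\infty$. The left side equals $2\log\bigl(1+\log_2 s/(1+\log_2 u)\bigr)\to\infty$, while the right side $1-1/s$ stays below $1$. It holds here only if you show $a^2\lesssim 1+S_{t-1}$ from the dynamics, which you do not.

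Putting $1+S_{t-1}$ in the denominator makes the bound true, but then leaves a factor $a/\sqrt{1+S_{t-1}}$ that is not bounded a priori. The clean repair is the paper's derivative trick applied to this piece alone. Since $\sqrt{\ell}$ is increasing,
\[
\sqrt{1+S_{t-1}}\bigl(\sqrt{\ell(S_t)}-\sqrt{\ell(S_{t-1})}\bigr)\le\int_{S_{t-1}}^{S_t}\sqrt{1+x}\,\frac{d}{dx}\sqrt{\ell(x)}\,dx\le\frac{2\bigl(\sqrt{1+S_t}-\sqrt{1+S_{t-1}}\bigr)}{\log 2\,\sqrt{\log(\pi^2/6)}}\le 4.1\,a .
\]
Alternatively, use that $r\mapsto 2\log(1+\log_2(1+r^2))/r$ is bounded by about $1.4$, with $r=a/\sqrt{1+S_{t-1}}$.

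There is also a smaller slip. The bound $(1+\log_2(1+S_t))^2\le 12t^2$ only gives $\ell(S_t)\le\log(2\pi^2t^2/\delta)$. To land on $\log(12t^2/\delta)$ you need $(\pi^2/6)(1+\tfrac{7t}{6\log 2})^2\le 12t^2$, as in the paper.

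Both issues are harmless once fixed. The corrected two-piece bound uses only about $15G\sqrt{\log(12t^2/\delta)}$ of the $32G\sqrt{\log(12t^2/\delta)}$ budget that the choice of $\gamma_t$ provides, so the lemma follows.
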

\begin{proof}
    Note that the left-hand side of \eqref{eq:time-varying drift} can be decomposed as
    \begin{align}\label{eq:lem1:(I)+(II):hp}
        \Phi_{t+1}(Q_{t+1}) - \Phi_t(Q_t) = \underbrace{\Phi_t(Q_{t+1}) - \Phi_t(Q_t)}_{\textrm{(I)}} + \underbrace{\Phi_{t+1}(Q_{t+1}) - \Phi_{t}(Q_{t+1})}_{\textrm{(II)}}.
    \end{align}
    Now, we first bound (I). Recall that $\Phi_t(x) = e^{\gamma_t x} - 1$ and $\Phi_t'(x) = \gamma_t e^{\gamma_t x}$. It follows that
    \begin{align*}
        \Phi_{t}(Q_{t+1}) - \Phi_t(Q_t) 
        &= e^{\gamma_{t} Q_{t+1}} - e^{\gamma_t Q_t}\\
        &= e^{\gamma_t Q_t}(e^{\gamma_t (Q_{t+1}-Q_t)} - 1)\\
        &= e^{\gamma_t Q_t}(e^{\gamma_t (g_t(\bfx_t) - R_t)} - 1)\\
        &= \frac{1}{\gamma_t}\Phi_t'(Q_t)(e^{\gamma_t (g_t(\bfx_t) - R_t)} - 1).
    \end{align*}
    Note that $e^{x}\leq 1+ x + x^2$ for all $x\leq 1$, $\gamma_t (g_t(\bfx_t) - R_t) \leq \gamma_t G \leq 1$, and $\Phi_t'(Q_t)/\gamma_t > 0$. Then it follows that
    \begin{align}\label{eq:lem1:(I) decomp}
        &\Phi_t(Q_{t+1}) - \Phi_t(Q_t)  \notag\\
        &\leq \frac{1}{\gamma_t} \Phi_t'(Q_t) \left(\gamma_t(g_t(\bfx_t)-R_t) + \gamma_t^2 (g_t(\bfx_t) - R_t)^2\right)\\
        &\leq  \Phi_t'(Q_t) \left(g_t(\bfx_t)-R_t + 3\gamma_tG^2 + 432\gamma_t^3G^4 + 3\gamma_t \left(\frac{\Psi\left(\sum_{\tau=1}^t\Phi_\tau'(Q_\tau)^2\right)-\Psi\left(\sum_{\tau=1}^{t-1}\Phi_\tau'(Q_\tau)^2\right)}{\Phi_t'(Q_t)}\right)^2\right)\notag
    \end{align}
    where the second inequality is due to the Cauchy-Schwarz inequality. Since $\gamma_t \leq 1/(12G)$, we have
    \begin{align}\label{eq:lem1:(I) decomp 1}
        3\gamma_t G^2 + 432\gamma_t^3G^4 \leq 3\gamma_t G^2 + 432\gamma_t G^4\cdot \frac{1}{144G^2} = 6\gamma_t G^2.
    \end{align}
    Moreover, we have
    \begin{align}\label{eq:lem1:(I) decomp 2}
    \begin{aligned}
        &3\gamma_t \frac{\Psi\left(\sum_{\tau=1}^t\Phi_\tau'(Q_\tau)^2\right)-\Psi\left(\sum_{\tau=1}^{t-1}\Phi_\tau'(Q_\tau)^2\right)}{\Phi_t'(Q_t)} \\
        &= \underbrace{12\gamma_tDL\frac{\sqrt{\sum_{\tau=1}^t\Phi_\tau'(Q_\tau)^2} - \sqrt{\sum_{\tau=1}^{t-1}\Phi_\tau'(Q_\tau)^2}}{\Phi_t'(Q_t)}}_{\textrm{(I-a)}} \\
        &\quad+ \underbrace{
        12\gamma_tG
        \frac{\sqrt{\psi\left(\sum_{\tau=1}^t\Phi_\tau'(Q_\tau)^2\right)} - \sqrt{\psi\left(\sum_{\tau=1}^{t-1}\Phi_\tau'(Q_\tau)^2\right)}}{\Phi_t'(Q_t)}
        }_{\textrm{(I-b)}}
    \end{aligned}
    \end{align}
    where $\psi(x) = (1+x)\log\frac{(\pi^2/6)(1+\log_2(1+x))^2}{\delta}$. For (I-a), we have
    \begin{align}\label{eq:lem1:(I-a)}
    \begin{aligned}
        \textrm{(I-a)} 
        &=  \frac{12\gamma_t DL\Phi_t'(Q_t)}{\sqrt{\sum_{\tau=1}^t\Phi_\tau'(Q_\tau)^2} + \sqrt{\sum_{\tau=1}^{t-1}\Phi_\tau'(Q_\tau)^2}}\\
        &\leq 12\gamma_t DL
    \end{aligned}
    \end{align}
    where the inequality follows from $\Phi_t'(Q_t) \leq \sqrt{\sum_{\tau=1}^t\Phi_\tau'(Q_\tau)^2} + \sqrt{\sum_{\tau=1}^{t-1}\Phi_\tau'(Q_\tau)^2}$. To bound (I-b), we observe that $\Phi_t'(Q_t) = \sqrt{(1 + \sum_{\tau=1}^t\Phi_\tau'(Q_\tau)^2)-(1+\sum_{\tau=1}^{t-1}\Phi_\tau'(Q_\tau)^2)}$. Then, by \Cref{prop:psi}, we have
    \begin{align*}
    \begin{aligned}
        \textrm{(I-b)} 
        &\leq 96\gamma_t G \sqrt{\log\frac{(\pi^2/6)(1 + \log_2(1+\sum_{\tau=1}^t\Phi_\tau'(Q_\tau)^2))^2}{\delta}}.
    \end{aligned}
    \end{align*}
    To further bound, we observe that $Q_{t+1} \leq \sum_{\tau=1}^tg_\tau(\bfx_\tau) \leq Gt$ for all $t \geq 1$ and $Q_1 = 0$, implying that $\sum_{\tau=1}^t\Phi_\tau'(Q_\tau)^2 \leq \sum_{\tau=1}^t \gamma_\tau^2e^{2\gamma_\tau G\tau}$. Note that $\gamma_\tau \leq 1/(12G\sqrt{\tau})$ and $\gamma_\tau \leq 1$. Then it follows that
    \[
        \sum_{\tau=1}^t \Phi_\tau'(Q_\tau)^2 \leq te^{\sqrt{t}/6}.
    \]
    Since $\log(1+t)\leq t$, it leads to
    \[
        \log_2\left(1+ \sum_{\tau=1}^t\Phi_\tau'(Q_\tau)^2\right)=\frac{\log\left(1+ te^{\sqrt{t}/6}\right)}{\log 2} \leq \frac{\log(1+t)+t/6}{\log 2} \leq \frac{t+t/6}{\log 2} = \frac{7t}{6\log2}.
    \]
    Hence, it follows that
    \[
        \log \frac{\pi^2}{6\delta}\left(1 + \log_2\left(1+\sum_{\tau=1}^t\Phi_\tau'(Q_\tau)^2\right)\right)^2 \leq \log \frac{\pi^2}{6\delta}\left(t + \frac{7t}{6\log2}\right)^2 \leq \log \frac{12t^2}{\delta}.
    \]
    Finally, we deduce that
    \begin{align}\label{eq:lem1:(I-b)}
        \textrm{(I-b)}\leq 96\gamma_t G\sqrt{\log\frac{12t^2}{\delta}}.
    \end{align}
    By substituting \eqref{eq:lem1:(I-a)} and \eqref{eq:lem1:(I-b)} into \eqref{eq:lem1:(I) decomp 2}, we have
    \begin{align}\label{eq:lem1:(I) decomp 3}
    \begin{aligned}
        3\gamma_t \frac{\Psi\left(\sum_{\tau=1}^t\Phi_\tau'(Q_\tau)^2\right)-\Psi\left(\sum_{\tau=1}^{t-1}\Phi_\tau'(Q_\tau)^2\right)}{\Phi_t'(Q_t)} 
        &\leq \gamma_t\left(12DL + 96G\sqrt{\log\frac{12t^2}{\delta}}\right)\\
        &\leq \frac{1}{2}
    \end{aligned}
    \end{align}
    where the second inequality follows from $\gamma_t \leq 1/(24(DL+8G\sqrt{\log(12t^2/\delta)}))$. Note that $\Psi$ is an increasing function, implying that $\Psi\left(\sum_{\tau=1}^t\Phi_\tau'(Q_\tau)^2\right)-\Psi\left(\sum_{\tau=1}^{t-1}\Phi_\tau'(Q_\tau)^2\right)$ is nonnegative. Then, by applying \eqref{eq:lem1:(I) decomp 1} and \eqref{eq:lem1:(I) decomp 3} to \eqref{eq:lem1:(I) decomp}, (I) is bounded as
    \begin{align*}
        &\textrm{(I)} = \Phi_{t}(Q_{t+1}) - \Phi_t(Q_t) \\
        &\leq \Phi_t'(Q_t) \left(g_t(\bfx_t)-R_t + 6\gamma_tG^2 +  \frac{\Psi\left(\sum_{\tau=1}^t\Phi_\tau'(Q_\tau)^2\right)-\Psi\left(\sum_{\tau=1}^{t-1}\Phi_\tau'(Q_\tau)^2\right)}{2\Phi_t'(Q_t)}\right) \\
        &= \Phi_t'(Q_t)\left(g_t(\bfx_t) - \frac{1}{2}R_t\right).
    \end{align*}
    Next, we bound (II). Since $\gamma_{t+1}\leq \gamma_t$, by \Cref{prop:Phi}, we have
    \[
    \textrm{(II)}=\Phi_{t+1}(Q_{t+1}) - \Phi_t(Q_{t+1}) \leq \frac{\gamma_t - \gamma_{t+1}}{e\gamma_{t+1}}.
    \]
    By applying bounds on (I) and (II) to \eqref{eq:lem1:(I)+(II):hp}, we conclude the proof.
\end{proof}

The following lemma shows that the cost incurred by using time-varying Lyapunov functions with the modified $\gamma_t$ defined in \eqref{eq:psi + gamma hp} is negligible in the sense that its summation scales as $\log t$.
\begin{lemma}\label{lem:gamma sum hp}
    Consider $\gamma_t$ given by \eqref{eq:psi + gamma hp}. Then, we have
    \[
        \sum_{\tau=1}^t \frac{\gamma_\tau - \gamma_{\tau+1}}{\gamma_{\tau+1}} \leq \frac{3(1+\log t)}{2}.
    \]
\end{lemma}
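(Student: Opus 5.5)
Following the proof of \Cref{lem:gamma sum exp}, we rewrite each summand as
\[
\frac{\gamma_\tau-\gamma_{\tau+1}}{\gamma_{\tau+1}}=\frac{1/\gamma_{\tau+1}-1/\gamma_\tau}{1/\gamma_\tau},
\]
where now $1/\gamma_\tau=\max\{a_\tau,b_\tau,1\}$ with $a_\tau=12G\sqrt{\tau}$ and $b_\tau=24(DL+8G\sqrt{\log(12\tau^2/\delta)})$. Both $a_\tau$ and $b_\tau$ are nondecreasing in $\tau$, so $1/\gamma_\tau$ is nondecreasing and every summand is nonnegative.

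The key elementary fact is that for nondecreasing sequences,
\[
\max\{a_{\tau+1},b_{\tau+1},1\}-\max\{a_\tau,b_\tau,1\}\leq (a_{\tau+1}-a_\tau)+(b_{\tau+1}-b_\tau).
\]
Dividing by $1/\gamma_\tau$, and using $1/\gamma_\tau\geq a_\tau$ for the first term and $1/\gamma_\tau\geq b_\tau$ for the second, gives
\[
\frac{\gamma_\tau-\gamma_{\tau+1}}{\gamma_{\tau+1}}\leq \frac{a_{\tau+1}-a_\tau}{a_\tau}+\frac{b_{\tau+1}-b_\tau}{b_\tau}.
\]
The first piece is at most $1/(2\tau)$, exactly as in \Cref{lem:gamma sum exp}, and contributes $(1+\log t)/2$ after summation.

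For the second piece, we drop the $DL$ term from both the numerator increment and the denominator. Setting $h_\tau=\sqrt{\log(12\tau^2/\delta)}$, this gives
\[
\frac{b_{\tau+1}-b_\tau}{b_\tau}\leq \frac{h_{\tau+1}-h_\tau}{h_\tau}.
\]
By concavity of the square root, $\sqrt{x}-\sqrt{y}\leq (x-y)/(2\sqrt{y})$. Hence
\[
\frac{h_{\tau+1}-h_\tau}{h_\tau}\leq \frac{\log(12(\tau+1)^2/\delta)-\log(12\tau^2/\delta)}{2\log(12\tau^2/\delta)}=\frac{\log\bigl((1+1/\tau)^2\bigr)}{2\log(12\tau^2/\delta)}\leq \frac{2/\tau}{2\log(12\tau^2/\delta)}.
\]
Since $\delta<1$, we have $\log(12\tau^2/\delta)\geq \log 12\geq 1$, so this is at most $1/\tau$. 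Summing gives at most $1+\log t$.

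Adding the two contributions yields $\frac{1+\log t}{2}+(1+\log t)=\frac{3(1+\log t)}{2}$, which is the claimed bound.

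The main step to get right is the max-difference inequality. It follows from a case analysis on which argument attains $\max\{a_{\tau+1},b_{\tau+1},1\}$, using monotonicity of both sequences; in each case the difference is bounded by the increment of the sequence attaining the maximum. In particular, if $a_{\tau+1}$ attains it, the difference is at most $a_{\tau+1}-a_\tau$, which we bound using $1/\gamma_\tau\geq a_\tau$, and symmetrically for $b$. The remaining care is in checking that the constant $12$ inside the logarithm makes the denominator at least $1$.
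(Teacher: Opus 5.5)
Your proposal is correct and follows essentially the same route as the paper's proof. Both split the increment of $1/\gamma_\tau$ via the max-difference inequality into the $12G\sqrt{\tau}$ and $192G\sqrt{\log(12\tau^2/\delta)}$ pieces, then bound the second piece using concavity of the square root, $\log(1+x)\le x$, and $\log(12\tau^2/\delta)\ge 1$, which gives $\frac{3}{2\tau}$ per term.
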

\begin{proof}
    Recall that $1/\gamma_\tau = \max\{12G\sqrt{\tau}, 24(DL+8G\sqrt{\log(12\tau^2/\delta)}), 1\}$. Note that $\max\{x,y,z\} - \max\{x',y',z\} \leq x-x' + y-y'$ for any $x > x'$ and $y > y'$. Moreover, we know that $1/\gamma_\tau \geq 12G\sqrt{\tau}$ and $1/\gamma_\tau \geq 192G\sqrt{\log (12\tau^2/\delta)}$. Then we have
    \begin{align*}
        \frac{\gamma_\tau - \gamma_{\tau+1}}{\gamma_{\tau+1}} 
        &= \frac{\frac{1}{\gamma_{\tau+1}} - \frac{1}{\gamma_\tau}}{\frac{1}{\gamma_\tau}}\\
        &\leq \frac{\sqrt{\tau+1} -\sqrt{\tau}}{\sqrt{\tau}} + \frac{\sqrt{\log(12(\tau+1)^2/\delta)} - \sqrt{\log(12\tau^2/\delta)}}{\sqrt{\log(12\tau^2/\delta)}} \\
        &\leq \frac{1}{2\tau} + \frac{\log(1 + 1/\tau)}{\log(12\tau^2/\delta)}\\
        &\leq \frac{1}{2\tau} + \frac{1}{\tau\log(12\tau^2/\delta)} \\
        &\leq \frac{3}{2\tau}
    \end{align*}
    where the last two inequality follow from $\log(1 + x) \leq x$ for any $x \geq 0$ and $\log (12\tau^2/\delta) \geq 1$, respectively. Hence, we have
    \begin{align*}
        \sum_{\tau=1}^t \frac{\gamma_\tau - \gamma_{\tau+1}}{\gamma_{\tau+1}}  \leq \sum_{\tau=1}^t \frac{3}{2\tau} \leq \frac{3(1+\log t)}{2}.
    \end{align*}
\end{proof}

\begin{lemma}\label{lem:Regret+Phi:stoc + hp + cvx}
    Consider the setting of \Cref{thm:stoc + hp + cvx}. Suppose that we run \Cref{alg:cumulative} with $\gamma_t,\Psi$ given by \eqref{eq:psi + gamma hp}, and $\eta_t$ given by \eqref{eq:eta cvx}. Then, with probability at least $1-\delta$, for all $t\geq 1$, we have
    \begin{align*}
        \sum_{\tau=1}^t (f_\tau(\bfx_\tau) - f_\tau(\bfx^\star)) + \Phi_{t+1}(Q_{t+1}) \leq 2DL\sqrt{t}
         + \log t+ 1 + D + 2G\sqrt{\log\frac{\pi^2}{6\delta}}.
    \end{align*}
\end{lemma}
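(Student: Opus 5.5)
We follow the proof of \Cref{lem:Regret+Phi:stoc + exp + cvx}, replacing the expectation step with the concentration bound of \Cref{lem:supmg ci}. We first combine \Cref{lem:adagrad} with the drift bound summed over $\tau=1,\ldots,t$. \Cref{lem:adagrad} uses only the primal step size \eqref{eq:eta cvx}, not the form of $\Psi$, so it applies unchanged. For the drift we use \Cref{lem:drift:stoc + hp + cvx}, and we use $\Phi_1(Q_1)=0$. Adding the two cancels $\sum_\tau \Phi_\tau'(Q_\tau)g_\tau(\bfx_\tau)$ and gives, deterministically for every $t$,
\begin{align*}
&\sum_{\tau=1}^t (f_\tau(\bfx_\tau)-f_\tau(\bfx^\star)) + \Phi_{t+1}(Q_{t+1}) \\
&\leq \sum_{\tau=1}^t\Phi_\tau'(Q_\tau)g_\tau(\bfx^\star) + 2DL\sqrt{t} + 2DL\sqrt{\textstyle\sum_{\tau=1}^t\Phi_\tau'(Q_\tau)^2} + D + \sum_{\tau=1}^t\frac{\gamma_\tau-\gamma_{\tau+1}}{e\gamma_{\tau+1}} - \frac12\sum_{\tau=1}^t\Phi_\tau'(Q_\tau)R_\tau.
\end{align*}
By \Cref{lem:gamma sum hp}, the $\gamma$-sum is at most $3(1+\log t)/(2e)\le 1+\log t$, since $3/(2e)<1$.

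Next we lower bound the regularizer term. Since $R_\tau\ge (\Psi(\sum_{s\le\tau}\Phi_s'(Q_s)^2)-\Psi(\sum_{s<\tau}\Phi_s'(Q_s)^2))/\Phi_\tau'(Q_\tau)$, the sum telescopes:
\[
\frac12\sum_{\tau=1}^t\Phi_\tau'(Q_\tau)R_\tau \ge \frac12\Psi\Big(\sum_{\tau=1}^t\Phi_\tau'(Q_\tau)^2\Big)-\frac12\Psi(0).
\]
With $\Psi$ from \eqref{eq:psi + gamma hp} and $W_t:=\sum_{\tau\le t}\Phi_\tau'(Q_\tau)^2$, the right-hand side equals
\[
2DL\sqrt{W_t}+2G\sqrt{(1+W_t)\log\tfrac{(\pi^2/6)(1+\log_2(1+W_t))^2}{\delta}}-2G\sqrt{\log\tfrac{\pi^2}{6\delta}}.
\]
The first term cancels the AdaGrad term $2DL\sqrt{W_t}$. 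The last term, $-\tfrac12\Psi(0)$, is the source of the additive constant $2G\sqrt{\log(\pi^2/(6\delta))}$ in the statement.

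Finally we apply \Cref{lem:supmg ci} with $X_\tau=g_\tau(\bfx^\star)$ and $w_\tau=\Phi_\tau'(Q_\tau)$. Because $\bfx^\star$ is fixed and independent of the constraints, and the $g_\tau$ are i.i.d., we have $\bbE[X_\tau\mid\calF_{\tau-1}]=\bbE[g_\tau(\bfx^\star)]\le 0$ and $|X_\tau|\le G$. The weight $w_\tau$ is positive and $\calF_{\tau-1}$-measurable, since $Q_\tau$ depends only on $g_1,\ldots,g_{\tau-1}$ (and on the $f$'s, which are in every $\calF$). So with probability at least $1-\delta$, simultaneously for all $t$, the term $\sum_\tau\Phi_\tau'(Q_\tau)g_\tau(\bfx^\star)$ is at most exactly the second term of $\tfrac12\Psi(W_t)$ and cancels it. What remains is $2DL\sqrt t+\log t+1+D+2G\sqrt{\log(\pi^2/(6\delta))}$, as claimed.

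The main point requiring care is this last step. The measurability and anytime uniformity of \Cref{lem:supmg ci} are what make the single event of probability $1-\delta$ cover all $t$. The constants in $\Psi$ were chosen precisely so that the concentration bound is matched exactly by $\tfrac12\Psi$; the steps before it are identical to the expected-bound case.
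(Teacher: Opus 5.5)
Your proposal is correct and follows essentially the same route as the paper's proof. Both proofs combine the AdaGrad bound with the summed high-probability drift lemma. Both then lower-bound $\tfrac12\sum_{\tau}\Phi_\tau'(Q_\tau)R_\tau$ by the telescoped $\tfrac12\Psi(W_t)-\tfrac12\Psi(0)$, cancel the comparator term using the anytime supermartingale bound with $w_\tau=\Phi_\tau'(Q_\tau)$ and $X_\tau=g_\tau(\bfx^\star)$, and control the $\gamma$-sum via $3/(2e)\le 1$. The only difference is that you invoke the concentration step last rather than in the middle, which is immaterial.
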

\begin{proof}
    By \Cref{lem:adagrad}, we have
    \begin{align}\label{eq:Regret+Phi:1}
    \begin{aligned}
        &\sum_{\tau=1}^t (f_\tau(\bfx_\tau) - f_\tau(\bfx^\star)) + \sum_{\tau=1}^t \Phi_\tau'(Q_{\tau})g_\tau(\bfx_\tau)\\
        &\leq\sum_{\tau=1}^t\Phi_\tau'(Q_\tau)g_\tau(\bfx^\star) + 2DL\sqrt{t} + 2DL\sqrt{\sum_{\tau=1}^t \Phi_\tau'(Q_\tau)^2}
        + D.
    \end{aligned}
    \end{align}
    Moreover, the first term on the right-hand side can be bounded using \Cref{lem:supmg ci}. In particular, since $\Phi_\tau'(Q_\tau)$ is $\calF_{\tau-1}$-measurable and positive, $g_\tau(\bfx^\star)$ is $\calF_\tau$-measurable satisfying $|g_\tau(\bfx^\star)|\leq G$, and $\bbE[g_\tau(\bfx^\star)|\calF_{\tau-1}] = \bbE[g_\tau(\bfx^\star)] \leq 0$, it leads to the following: with probability at least $1-\delta$, for all $t\geq 1$,
    \begin{align}\label{eq:Regret+Phi:2}
    \begin{aligned}
        &\sum_{\tau=1}^t\Phi_\tau'(Q_\tau)g_\tau(\bfx^\star)\\
        &\leq 2G\sqrt{\left(1 + \sum_{\tau=1}^t\Phi_\tau'(Q_\tau)^2\right)\log\frac{(\pi^2/6)(1+\log_2(1+\sum_{\tau=1}^t\Phi_\tau'(Q_\tau)^2))^2}{\delta}}.
    \end{aligned}
    \end{align}
    Moreover, by summing the inequality in \Cref{lem:drift:stoc + hp + cvx} over $\tau=1, \ldots,t$, since $\Phi_1(Q_1) = 0$, we have
    \begin{align}\label{eq:Regret+Phi:3}
        \Phi_{t+1}(Q_{t+1}) - \Phi_1(Q_1) \leq \sum_{\tau=1}^t \Phi_\tau'(Q_\tau)\left(g_\tau(\bfx_\tau) - \frac{R_\tau}{2}\right) + \sum_{\tau=1}^t\frac{\gamma_\tau - \gamma_{\tau+1}}{e\gamma_{\tau+1}}.
    \end{align}
    By applying \eqref{eq:Regret+Phi:2} and \eqref{eq:Regret+Phi:3} to \eqref{eq:Regret+Phi:1}, we have
    \begin{align*}
        &\sum_{\tau=1}^t (f_\tau(\bfx_\tau)-f_\tau(\bfx^\star)) + \Phi_{t+1}(Q_{t+1}) \\
        &\leq 2G\sqrt{\left(1 + \sum_{\tau=1}^t\Phi_\tau'(Q_\tau)^2\right)\log\frac{(\pi^2/6)(1+\log_2(1+\sum_{\tau=1}^t\Phi_\tau'(Q_\tau)^2))^2}{\delta}} \\
        &\quad+ 2DL\sqrt{t} + 2DL\sqrt{\sum_{\tau=1}^t \Phi_\tau'(Q_\tau)^2}
        + D + \sum_{\tau=1}^t\frac{\gamma_\tau - \gamma_{\tau+1}}{e\gamma_{\tau+1}} - \frac{1}{2}\sum_{\tau=1}^t \Phi_\tau'(Q_\tau)R_\tau.
    \end{align*}
    Note that 
    \begin{align*}
        \frac{1}{2}\sum_{\tau=1}^t \Phi_\tau'(Q_\tau)R_\tau 
        &\geq \frac{1}{2}\sum_{\tau=1}^t \Phi_\tau'(Q_\tau)\cdot\frac{\Psi(\sum_{s=1}^\tau \Phi_s'(Q_s)^2)-\Psi(\sum_{s=1}^{\tau-1}\Phi_s'(Q_s)^2)}{\Phi_\tau'(Q_\tau)} \\
        &= \frac{1}{2}\Psi\left(\sum_{\tau=1}^t \Phi_\tau'(Q_\tau)^2\right) - \frac{1}{2}\Psi\left(0\right)\\
        &=2DL\sqrt{\sum_{\tau=1}^t \Phi_\tau'(Q_\tau)^2} - 2G\sqrt{\log\frac{\pi^2}{6\delta}} \\
        &\quad+ 2G\sqrt{\left(1+\sum_{\tau=1}^t\Phi_\tau'(Q_{\tau})^2\right)\log\frac{(\pi^2/6)(1 + \log_2(1+\sum_{\tau=1}^t\Phi_\tau'(Q_\tau)^2))^2}{\delta}}.
    \end{align*}
    Thus, we deduce that
    \begin{align*}
        \sum_{\tau=1}^t (f_\tau(\bfx_\tau)-f_\tau(\bfx^\star)) + \Phi_{t+1}(Q_{t+1}) 
        &\leq 2DL\sqrt{t}
         + \sum_{\tau=1}^t\frac{\gamma_\tau - \gamma_{\tau+1}}{e\gamma_{\tau+1}} + D+ 2G\sqrt{\log\frac{\pi^2}{6\delta}} \\
        &\leq 2DL\sqrt{t}
         + \log t+ 1 + D + 2G\sqrt{\log\frac{\pi^2}{6\delta}}
    \end{align*}
    where the second inequality follows from \Cref{lem:gamma sum hp} and $3/(2e) \leq1$. This concludes the proof.
\end{proof}

\begin{lemma}\label{lem:R sum:stoc + hp + cvx}
    Consider the setting of \Cref{thm:stoc + hp + cvx}. Suppose that we run \Cref{alg:cumulative} with $\gamma_t,\Psi$ given by \eqref{eq:psi + gamma hp}, and $\eta_t$ given by \eqref{eq:eta cvx}. Suppose that \Cref{lem:Regret+Phi:stoc + hp + cvx} holds. Then, for all $t\geq 1$, we have
    \begin{align*}
        \sum_{\tau=1}^t R_\tau =  \bigO\left(\sqrt{t}\log(t/\delta)\right).
    \end{align*}
\end{lemma}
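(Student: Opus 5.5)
We mirror the proof of \Cref{lem:R sum:stoc + exp + cvx}, working on the high-probability event of \Cref{lem:Regret+Phi:stoc + hp + cvx} instead of taking expectations. The plan has three steps: bound $\Phi_{t+1}'(Q_{t+1})$ pathwise, bound the $12\gamma_\tau G^2$ part of $R_\tau$, and bound the $\Psi$-increment part by splitting $\Psi$ into its two summands.

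\textbf{Step 1: pathwise bound on $\Phi_\tau'(Q_\tau)$.} On the event of \Cref{lem:Regret+Phi:stoc + hp + cvx}, use $f_\tau(\bfx_\tau)-f_\tau(\bfx^\star)\geq -DL$ to get, for all $t$,
\[
\Phi_{t+1}(Q_{t+1})\leq 3DLt+\log t+1+D+2G\sqrt{\log(\pi^2/(6\delta))}=:B_t .
\]
Since $\gamma_{t+1}\leq 1$, this gives $\Phi_{t+1}'(Q_{t+1})\leq B_t+1$. Because $B_t$ is nondecreasing in $t$, we obtain
\[
\sum_{\tau=1}^t\Phi_\tau'(Q_\tau)\leq t(B_t+1),
\]
which is polynomial in $t$ and $\log(1/\delta)$.

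\textbf{Step 2: the $12\gamma_\tau G^2$ terms.} Since $\gamma_\tau\leq 1/(12G\sqrt\tau)$, these contribute at most $\sum_{\tau\le t} G/\sqrt\tau\leq 2G\sqrt t$.

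\textbf{Step 3, first summand of $\Psi$.} The part of the increment coming from $4DL\sqrt{x}$ is bounded exactly as before. By Cauchy--Schwarz and \Cref{lem:integral 1/x}, its sum over $\tau\le t$ is at most
\[
4DL\sqrt t\sqrt{2\log\frac{\sum_{\tau}\Phi_\tau'(Q_\tau)}{\gamma_1}+1}=\bigO\bigl(\sqrt{t\log(t/\delta)}\bigr),
\]
where we plug in Step 1 directly; no Jensen is needed here.

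\textbf{Step 3, second summand of $\Psi$ (main obstacle).} This is the part coming from $4G\sqrt{\psi(x)}$ with $\psi(x)=(1+x)\log\frac{(\pi^2/6)(1+\log_2(1+x))^2}{\delta}$. Write $S_\tau=1+\sum_{s\le\tau}\Phi_s'(Q_s)^2$ and $\ell(x)=\log\frac{(\pi^2/6)(1+\log_2 x)^2}{\delta}$. Then
\[
\sqrt{S_\tau\ell(S_\tau)}-\sqrt{S_{\tau-1}\ell(S_{\tau-1})}\leq \sqrt{\ell(S_\tau)}\bigl(\sqrt{S_\tau}-\sqrt{S_{\tau-1}}\bigr)+\sqrt{S_{\tau-1}}\bigl(\sqrt{\ell(S_\tau)}-\sqrt{\ell(S_{\tau-1})}\bigr).
\]
\begin{itemize}
\item First piece: bound $\ell(S_\tau)\leq\log(12t^2/\delta)$, as in the proof of \Cref{lem:drift:stoc + hp + cvx}. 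Dividing by $\Phi_\tau'(Q_\tau)$ and using $\sqrt{S_\tau}-\sqrt{S_{\tau-1}}\le \Phi_\tau'(Q_\tau)^2/\sqrt{S_\tau}$, this piece sums to at most $\sqrt{\log(12t^2/\delta)}\sqrt{t}\sqrt{\log S_t}$ by Cauchy--Schwarz. By Step 1, $\log S_t=\bigO(\log(t/\delta))$.
\item Second piece: use the concavity of $\ell$ in $\log x$ scale together with $\sqrt{a}-\sqrt{b}\le (a-b)/\sqrt{b}$ and $\log\log$-type derivatives. This yields at most $\bigO\bigl(\Phi_\tau'(Q_\tau)^2/(\sqrt{S_{\tau-1}}\log S_{\tau-1})\bigr)\cdot\sqrt{S_{\tau-1}}$, which after dividing by $\Phi_\tau'(Q_\tau)$ is dominated by the first piece. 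Alternatively, one can invoke \Cref{prop:psi} to bound the whole increment by $\bigO(\sqrt{\ell(S_\tau)})(\sqrt{S_\tau}-\sqrt{S_{\tau-1}})$ directly.
\end{itemize}
Either way, the second summand contributes $\bigO\bigl(\sqrt{t}\sqrt{\log(t/\delta)}\sqrt{\log(t/\delta)}\bigr)=\bigO(\sqrt t\log(t/\delta))$. I expect this step to be the main obstacle: controlling the derivative of the $\log\log$ factor cleanly, ideally through \Cref{prop:psi}.

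Combining Steps 2 and 3 gives $\sum_{\tau\le t}R_\tau=\bigO(\sqrt t\log(t/\delta))$ on the event of \Cref{lem:Regret+Phi:stoc + hp + cvx}.
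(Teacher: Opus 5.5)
Your proposal is correct and is essentially the paper's proof. The shared steps are: the pathwise bound on $\Phi_{t+1}'(Q_{t+1})$ on the event of \Cref{lem:Regret+Phi:stoc + hp + cvx}; the $2G\sqrt{t}$ bound on the $12\gamma_\tau G^2$ terms; and the split of $\Psi$ into its two summands, with the $\sqrt{\psi}$ part handled by \Cref{prop:psi} and then the same Cauchy--Schwarz/\Cref{lem:integral 1/x} argument. Your bound $\ell(S_\tau)\le\log(12t^2/\delta)$, taken from the drift lemma rather than obtained via the bound on $S_t$, is an inessential variation.

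Use the \Cref{prop:psi} route, not your product-rule sketch, for the second piece. The tangent bound at $S_{\tau-1}$ actually gives about $\Phi_\tau'(Q_\tau)/\bigl(\sqrt{S_{\tau-1}}(1+\log_2 S_{\tau-1})\bigr)$. Your version has $\log S_{\tau-1}$ in place of $1+\log_2 S_{\tau-1}$, which vanishes at $\tau=1$. More importantly, this quantity is not termwise dominated by the first piece when $\Phi_\tau'(Q_\tau)^2\gg S_{\tau-1}$. That route would need a case split that uses $\sqrt{\ell(S_\tau)}-\sqrt{\ell(S_{\tau-1})}\le\sqrt{\ell(S_\tau)}$ in that regime.
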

\begin{proof}
    We first bound $\Phi_{t+1}'(Q_{t+1})$, which is frequently used throughout the proof. By Lipschitzness, we have $f_\tau(\bfx_\tau) - f_\tau(\bfx^\star) \geq -LD$. Then it leads to $\sum_{\tau=1}^t (f_\tau(\bfx_\tau) - f_\tau(\bfx^\star))\geq -LDt$. By \Cref{lem:Regret+Phi:stoc + hp + cvx}, we have
    \begin{align*}
        \Phi_{t+1}(Q_{t+1}) 
        &\leq 3DLt + \log t + 1 + D + 2G\sqrt{\log\frac{\pi^2}{6\delta}}.
    \end{align*}
    This leads to
    \begin{align}\label{eq:Phi' bound}
    \begin{aligned}
        \Phi_{t+1}'(Q_{t+1}) 
        &\leq \gamma_{t+1}\left(3DLt + \log t + 2 + D + 2G\sqrt{\log\frac{\pi^2}{6\delta}}\right)\\
        &\leq 3DLt + \log t + 2 + D + 2G\sqrt{\log\frac{\pi^2}{6\delta}}.
    \end{aligned}
    \end{align}
    Now, we bound $\sum_{\tau=1}^t R_\tau$. Since $\gamma_\tau \leq 1/(12G\sqrt{\tau})$, we have
    \begin{align*}
        \sum_{\tau=1}^t R_\tau \leq \sum_{\tau=1}^t\frac{G}{\sqrt{\tau}} + \sum_{\tau=1}^t\frac{\Psi\left(\sum_{s=1}^\tau\Phi_s'(Q_s)^2\right)-\Psi\left(\sum_{s=1}^{\tau-1}\Phi_s'(Q_s)^2\right)}{\Phi_\tau'(Q_\tau)}.
    \end{align*}
    It is clear that $\sum_{\tau=1}^t G/\sqrt{\tau} \leq 2G\sqrt{t}$. Moreover, it follows that
    \begin{align*}
        &\sum_{\tau=1}^t\frac{\Psi\left(\sum_{s=1}^\tau\Phi_s'(Q_s)^2\right)-\Psi\left(\sum_{s=1}^{\tau-1}\Phi_s'(Q_s)^2\right)}{\Phi_\tau'(Q_\tau)} \\
        &= \underbrace{4DL\sum_{\tau=1}^t \frac{\sqrt{ \sum_{s=1}^\tau\Phi_s'(Q_s)^2}-\sqrt{ \sum_{s=1}^{\tau-1}\Phi_s'(Q_s)^2}}{\Phi_\tau'(Q_\tau)}}_{\textrm{(I)}} \\
        &\quad+\underbrace{4G\sum_{\tau=1}^t \frac{\sqrt{\psi(\sum_{s=1}^\tau\Phi_s'(Q_s)^2)} - \sqrt{ \psi(\sum_{s=1}^{\tau-1}\Phi_s'(Q_s)^2)}}{\Phi_\tau'(Q_\tau)}}_{\textrm{(II)}}.
    \end{align*}
    We bound (I) as follows.
    \begin{align*}
        \textrm{(I)} 
        &= 4DL\sum_{\tau=1}^t\frac{\Phi_\tau'(Q_\tau)}{\sqrt{ \sum_{s=1}^\tau\Phi_s'(Q_s)^2}+\sqrt{ \sum_{s=1}^{\tau-1}\Phi_s'(Q_s)^2}}\\
        &\leq 4DL\sum_{\tau=1}^t\frac{\Phi_\tau'(Q_\tau)}{\sqrt{ \sum_{s=1}^\tau\Phi_s'(Q_s)^2}} \\
        &\leq 4DL\sqrt{t}\sqrt{\sum_{\tau=1}^t\frac{\Phi_\tau'(Q_\tau)^2}{\sum_{s=1}^\tau \Phi_s'(Q_s)^2}}\\
        &= 4DL\sqrt{t}\sqrt{\sum_{\tau=2}^t\frac{\sum_{s=1}^\tau\Phi_s'(Q_s)^2 - \sum_{s=1}^{\tau-1}\Phi_s'(Q_s)^2}{\sum_{s=1}^\tau \Phi_s'(Q_s)^2} + 1} \\
        &\leq 4DL\sqrt{t}\sqrt{\log\frac{\sum_{\tau=1}^t \Phi_\tau'(Q_\tau)^2}{\Phi_1'(Q_1)^2} +1}\\
        &\leq4DL\sqrt{t}\sqrt{\log\frac{t(3DLt + \log t + 2 + D + 2G\sqrt{\log(\pi^2/(6\delta))})^2}{\gamma_1^2} + 1}.
    \end{align*}
    where the second inequality follows from the Cauchy-Schwarz inequality, the third inequality is due to \Cref{lem:integral 1/x}, and the last inequality follows from \eqref{eq:Phi' bound}, i.e., since $\Phi_1'(Q_1)^2=\gamma_1^2 \leq 1$,
    \begin{align*}
        \sum_{\tau=1}^t \Phi_\tau'(Q_\tau)^2=\sum_{\tau=0}^{t-1} \Phi_{\tau+1}'(Q_{\tau+1})^2 
        &\leq 1+(t-1)(3DLt + \log t + 2 + D + 2G\sqrt{\log(\pi^2/(6\delta))})^2\\
        &\leq t(3DLt + \log t + 2 + D + 2G\sqrt{\log(\pi^2/(6\delta))})^2.
    \end{align*}
    To bound (II), by \Cref{prop:psi}, 
    \begin{align*}
        \textrm{(II)} 
        &\leq 32G\sum_{\tau=1}^t \sqrt{\log\frac{(\pi^2/6)(1+\log_2(1+\sum_{s=1}^\tau \Phi_s'(Q_s)^2))^2}{\delta}}\\
        &\quad\times\frac{\sqrt{1+ \sum_{s=1}^\tau \Phi_s'(Q_s)^2}-\sqrt{1+ \sum_{s=1}^{\tau-1} \Phi_s'(Q_s)^2}}{\Phi_\tau'(Q_\tau)} \\
        &\leq 32G\sqrt{\log\frac{(\pi^2/6)(1+\log_2(1+\sum_{\tau=1}^t \Phi_\tau'(Q_\tau)^2))^2}{\delta}}\\
        &\quad\times\sum_{\tau=1}^t \frac{\sqrt{1+ \sum_{s=1}^\tau \Phi_s'(Q_s)^2}-\sqrt{1+ \sum_{s=1}^{\tau-1} \Phi_s'(Q_s)^2}}{\Phi_\tau'(Q_\tau)} \\
        &\leq 32G\sqrt{t}\sqrt{\log\frac{(\pi^2/6)(1+\log_2(1+t(3DLt + \log t + 2 + D + 2G\sqrt{\log(\pi^2/(6\delta))})^2))^2}{\delta}}\\
        &\quad \times\sqrt{\log \frac{t(3DLt + \log t + 2 + D + 2G\sqrt{\log(\pi^2/(6\delta))})^2}{\gamma_1^2} + 1}.
    \end{align*}
    where the first inequality follows from \Cref{prop:psi}, and the third inequality follows from the same argument used for bounding (I). Hence, we deduce that
    \begin{align*}
        \sum_{\tau=1}^t R_\tau 
        &\leq 2G\sqrt{t} + 4DL\sqrt{t}\sqrt{\log\frac{t(3DLt + \log t + 2 + D + 2G\sqrt{\log(\pi^2/(6\delta))})^2}{\gamma_1^2} + 1} \\
        &\quad+32G\sqrt{t}\sqrt{\log\frac{(\pi^2/6)(1+\log_2(1+t(3DLt + \log t + 2 + D + 2G\sqrt{\log(\pi^2/(6\delta))})^2))^2}{\delta}}\\
        &\quad \times\sqrt{\log \frac{t(3DLt + \log t + 2 + D + 2G\sqrt{\log(\pi^2/(6\delta))})^2}{\gamma_1^2} + 1}.
    \end{align*}
    It can be written as $\sum_{\tau=1}^t R_\tau =  \bigO\left(\sqrt{t}\log(t/\delta)\right)$.
\end{proof}

\subsection{Proof of Theorem~\ref{thm:stoc + hp + cvx}}
\begin{proof}
    We assume that \Cref{lem:Regret+Phi:stoc + hp + cvx} holds, which occurs with probability at least $1-\delta$. Since $\Phi_{t+1}(Q_{t+1}) \geq -1$, we have
    \begin{align*}
        \Regret(t)=\sum_{\tau=1}^t (f_\tau(\bfx_\tau) - f_\tau(\bfx^\star))\leq 2DL\sqrt{t} + \log t + D + 2G\sqrt{\log\frac{\pi^2}{6\delta}} + 2 
    \end{align*}
    Next, to bound $\Violation(t)$, we observe that $\sum_{\tau=1}^t g_\tau(\bfx_\tau) = Q_{t+1} + \sum_{\tau=1}^t R_\tau$. Thus, we first bound $Q_{t+1}$. As done in the proof of \Cref{lem:R sum:stoc + hp + cvx}, we have
    \begin{align*}
    \begin{aligned} 
        \Phi_{t+1}(Q_{t+1}) 
        &\leq 3DLt + \log t + 1 + D + 2G\sqrt{\log\frac{\pi^2}{6\delta}}.
    \end{aligned}
    \end{align*}
    By the definition of $\Phi_{t+1}$, since $Q_{t+1} = (1/\gamma_{t+1})\log(\Phi_{t+1}(Q_{t+1})+1)$, we have
    \begin{align*}
        Q_{t+1} 
        &\leq \frac{1}{\gamma_{t+1}}\log\left(3DLt + \log t + 2 + D + 2G\sqrt{\log\frac{\pi^2}{6\delta}}\right)\\
        &\leq \left(12G\sqrt{t+1} + 24\left(DL+8G\sqrt{\log(12(t+1)^2/\delta)}\right) + 1\right) \\
        &\quad \times
        \log\left(3DLt + \log t + 2 + D + 2G\sqrt{\log\frac{\pi^2}{6\delta}}\right).
    \end{align*}
    It can be written as
    \[
        Q_{t+1} =  \bigO\left(\sqrt{t}\log(t/\delta) + \log^{3/2}(t/\delta)\right)
    \]
    Moreover, by \Cref{lem:R sum:stoc + hp + cvx}, we have $\sum_{\tau=1}^t R_\tau =  \bigO(\sqrt{t}\log(t/\delta))$. 
    Consequently, we have
    \[
        \Regret(t) =  \bigO\left(\sqrt{t} + \sqrt{\log(1/\delta)}\right), \quad \Violation(t) =  \bigO\left(\sqrt{t}\log(t/\delta) + \log^{3/2}(t/\delta)\right).
    \]
\end{proof}

\section{Analysis under Strongly Convex Losses}\label{sec:anal:stoc + strcvx}
In this section, we present the analysis of Theorems~\ref{thm:stoc + exp + strcvx} and~\ref{thm:stoc + hp + strcvx}. The key difference in the analysis is to derive an analogue of \Cref{lem:adagrad} with an improved guarantee under the modified primal step size defined in \eqref{eq:eta strcvx}. 
\begin{lemma}\label{lem:strcvx regret}
        Consider $\mu$-strongly convex losses for some $\mu>0$. Suppose that we run \Cref{alg:cumulative} with $\eta_\tau$ given by \eqref{eq:eta strcvx}. Then, for all $t\geq 1$, we have 
        \begin{align*}
        &\sum_{\tau=1}^t (f_\tau(\bfx_\tau) - f_\tau(\bfx^\star)) + \sum_{\tau=1}^t \Phi_\tau'(Q_{\tau})g_\tau(\bfx_\tau) \\
        &\leq \sum_{\tau=1}^t \Phi_\tau'(Q_\tau)g_\tau(\bfx^\star) + \frac{L^2(1+\log t)}{\mu} + 2DL\sqrt{\sum_{\tau=1}^t \Phi_\tau'(Q_\tau)^2}.
    \end{align*}
\end{lemma}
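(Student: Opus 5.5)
Let $\hat f_\tau = f_\tau + \Phi_\tau'(Q_\tau) g_\tau$, which is $\mu$-strongly convex because $f_\tau$ is $\mu$-strongly convex, $g_\tau$ is convex and $\Phi_\tau'(Q_\tau)>0$. Write $h_\tau = \nabla \hat f_\tau(\bfx_\tau)$. The plan is to rerun the projected subgradient argument of \Cref{lem:adagrad}, using strong convexity in place of plain convexity:
\[
\hat f_\tau(\bfx_\tau)-\hat f_\tau(\bfx^\star)\le h_\tau^\top(\bfx_\tau-\bfx^\star)-\tfrac{\mu}{2}\|\bfx_\tau-\bfx^\star\|^2 .
\]
Combining this with the nonexpansive projection step gives
\[
\hat f_\tau(\bfx_\tau)-\hat f_\tau(\bfx^\star)\le \frac{\|\bfx_\tau-\bfx^\star\|^2-\|\bfx_{\tau+1}-\bfx^\star\|^2}{2\eta_\tau}-\frac{\mu}{2}\|\bfx_\tau-\bfx^\star\|^2+\frac{\eta_\tau}{2}\|h_\tau\|^2 .
\]
Set $S_\tau=\sqrt{\sum_{s\le\tau}\Phi_s'(Q_s)^2}$, so that $1/\eta_\tau=\mu\tau+(2L/D)S_\tau$. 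Summing over $\tau$ and telescoping (Abel summation), the distance terms are bounded by
\[
\sum_{\tau=1}^t \frac{\|\bfx_\tau-\bfx^\star\|^2}{2}\Bigl(\frac1{\eta_\tau}-\frac1{\eta_{\tau-1}}-\mu\Bigr), \qquad \frac1{\eta_0}:=0 .
\]
Since $\frac1{\eta_\tau}-\frac1{\eta_{\tau-1}}-\mu=(2L/D)(S_\tau-S_{\tau-1})\ge 0$, using $\|\bfx_\tau-\bfx^\star\|^2\le D^2$ bounds this sum by $DL\,S_t$. This is the key point of the step size: the $\mu t$ part of $1/\eta_\tau$ is exactly absorbed by strong convexity, and only the adaptive part contributes, at cost $DL S_t$.

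Next comes the gradient term. Using $\|h_\tau\|^2\le 2L^2+2L^2\Phi_\tau'(Q_\tau)^2$, I split it into two pieces. The first is $\sum_\tau \eta_\tau L^2\le \sum_\tau L^2/(\mu\tau)\le L^2(1+\log t)/\mu$, which uses $\eta_\tau\le 1/(\mu\tau)$. The second is $\sum_\tau \eta_\tau L^2\Phi_\tau'(Q_\tau)^2\le \sum_\tau \frac{DL}{2}\,\Phi_\tau'(Q_\tau)^2/S_\tau$, which uses $\eta_\tau\le D/(2LS_\tau)$. By the standard inequality $\sum a_\tau^2/\sqrt{\sum_{s\le\tau}a_s^2}\le 2\sqrt{\sum a_\tau^2}$, this second piece is at most $DL\,S_t$.

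Adding the contributions gives $L^2(1+\log t)/\mu+2DL\,S_t$, which matches the statement exactly. Finally, I expand $\hat f_\tau$ and move $\sum_\tau \Phi_\tau'(Q_\tau)g_\tau(\bfx^\star)$ to the right-hand side.

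The main obstacle is bookkeeping the constants so that the two pieces land exactly on $L^2/\mu$ and $2DL$. In particular, the splitting $\|h\|^2\le 2L^2(1+\Phi'^2)$ must be paired with the factor $\eta/2$ so the constants come out right. Care is also needed with the first index, where $1/\eta_0=0$ makes the $\tau=1$ coefficient $(2L/D)S_1$; this is consistent with the telescoping above. A small point to confirm is that $S_\tau>0$, so that $\eta_\tau$ is well defined; this holds since $\Phi_1'(Q_1)=\gamma_1>0$.
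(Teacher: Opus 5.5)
Your proposal is correct and follows the paper's proof essentially step for step. The shared steps are strong convexity of the surrogate $\hat f_\tau$ and the projected-subgradient inequality. Strong convexity then cancels the $\mu\tau$ part of $1/\eta_\tau$, while the adaptive part telescopes to $DL\sqrt{\sum_{\tau\le t}\Phi_\tau'(Q_\tau)^2}$. Finally, $\|\nabla \hat f_\tau(\mathbf{x}_\tau)\|^2\le 2L^2(1+\Phi_\tau'(Q_\tau)^2)$ splits the gradient term into $L^2(1+\log t)/\mu$ and a second $DL\sqrt{\sum_{\tau\le t}\Phi_\tau'(Q_\tau)^2}$. The only cosmetic difference is that you do a single Abel summation with $1/\eta_0:=0$, whereas the paper telescopes the $\mu$-part and the $\sqrt{\sum_{s\le\tau}\Phi_s'(Q_s)^2}$-weighted part separately.
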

\begin{proof}
    For simplicity, let $\hat f_\tau = f_\tau + \Phi_\tau'(Q_\tau)g_\tau$ for each $\tau \in [t]$. Note that since $\Phi_\tau'(Q_\tau) > 0$, $\Phi_\tau'(Q_\tau)g_\tau$ is convex, and thus $\hat f_\tau$ is $\mu$-strongly convex by \Cref{lem:strcvx properties}. Since projection onto $\calX$ is a contraction mapping, we have for any $\bfx^\star \in \calX$,
    \begin{align*}
        \|\bfx_{\tau+1} - \bfx^\star\|^2 
        &\leq \|\bfx_\tau - \bfx^\star\|^2 + \eta_\tau^2\|\nabla \hat f_\tau(\bfx_\tau)\|^2 - 2\eta_\tau\nabla \hat f_\tau(\bfx_\tau)^\top (\bfx_\tau - \bfx^\star).
    \end{align*}
    By strong convexity (\Cref{lem:strcvx properties}), we have 
    \begin{align*}
        \hat f_\tau(\bfx_\tau) - \hat f_\tau(\bfx^\star) + \frac{\mu}{2}\|\bfx_\tau - \bfx^\star\|^2 \leq \nabla \hat f_\tau(\bfx_\tau)^\top (\bfx_\tau -\bfx^\star).
    \end{align*}
    Recall that
    \begin{align*}
        \eta_\tau = \frac{1}{\mu\tau + (2L/D)\sqrt{\sum_{s=1}^\tau \Phi_s'(Q_s)^2}}.
    \end{align*}
    Then it follows that
    \begin{align*}
        &\hat f_\tau(\bfx_\tau) - \hat f_\tau(\bfx^\star) \\
        &\leq \left(\frac{1}{2\eta_\tau} - \frac{\mu}{2}\right)\|\bfx_\tau -\bfx^\star\|^2 - \frac{1}{2\eta_\tau}\|\bfx_{\tau+1}-\bfx^\star\|^2 + \frac{\eta_\tau}{2}\|\nabla \hat f_\tau(\bfx_\tau)\|^2\\
        &= \frac{\mu(\tau-1)}{2}\|\bfx_\tau-\bfx^\star\|^2 - \frac{\mu \tau}{2}\|\bfx_{\tau+1}-\bfx^\star\|^2 \\
        &\quad+ \frac{L}{D}\sqrt{\sum_{s=1}^\tau \Phi_s'(Q_s)^2} (\|\bfx_\tau - \bfx^\star\|^2 - \|\bfx_{\tau+1} - \bfx^\star\|^2) + \frac{\eta_\tau}{2}\|\nabla \hat f_\tau(\bfx_\tau)\|^2\\
        &\leq \frac{\mu(\tau-1)}{2}\|\bfx_\tau-\bfx^\star\|^2 - \frac{\mu \tau}{2}\|\bfx_{\tau+1}-\bfx^\star\|^2 \\
        &\quad+ \frac{L}{D}\sqrt{\sum_{s=1}^\tau \Phi_s'(Q_s)^2} (\|\bfx_\tau - \bfx^\star\|^2 - \|\bfx_{\tau+1} - \bfx^\star\|^2) + \eta_\tau L^2 + \eta_\tau L^2\Phi_\tau'(Q_\tau)^2
    \end{align*}
    where the last inequality follows from that $\|\nabla \hat f_\tau(\bfx_\tau)\|^2 \leq 2\|\nabla f_\tau(\bfx_\tau)\|^2 + 2\Phi_\tau'(Q_\tau)^2\|\nabla g_\tau(\bfx_\tau)\|^2 \leq 2L^2(1+\Phi_\tau'(Q_\tau)^2)$.  By summing over $\tau = 1,\ldots, t$, we have
    \begin{align*}
        \sum_{\tau=1}^t(\hat f_\tau(\bfx_\tau) - \hat f_\tau(\bfx^\star)) 
        &\leq \frac{L}{D}\sum_{\tau=1}^t\sqrt{\sum_{s=1}^\tau \Phi_s'(Q_s)^2} (\|\bfx_\tau - \bfx^\star\|^2 - \|\bfx_{\tau+1} - \bfx^\star\|^2) \\
        &\quad+ L^2\sum_{\tau=1}^t\eta_\tau + L^2\sum_{\tau=1}^t\eta_\tau \Phi_\tau'(Q_\tau)^2.
    \end{align*}
    Note that 
    \begin{align*}
        \sum_{\tau=1}^t\sqrt{\sum_{s=1}^\tau \Phi_s'(Q_s)^2} (\|\bfx_\tau - \bfx^\star\|^2 - \|\bfx_{\tau+1} - \bfx^\star\|^2) 
        &\leq D^2\sqrt{\sum_{\tau=1}^t \Phi_\tau'(Q_\tau)^2}.
    \end{align*}
    Moreover, since $\sum_{\tau=1}^t (1/\tau) \leq 1+\log t$, we have
    \begin{align*}
        \sum_{\tau=1}^t \eta_\tau \leq \sum_{\tau=1}^t \frac{1}{\mu\tau} \leq \frac{1 + \log t}{\mu}.
    \end{align*}
    Similarly, since $\sum_{\tau=1}^t y_\tau / \sqrt{\sum_{s=1}^\tau y_s} \leq 2\sqrt{\sum_{\tau=1}^t y_\tau}$ for any positive sequence $\{y_\tau\}_{\tau\geq 1}$, we have
    \begin{align*}
        \sum_{\tau=1}^t \eta_\tau \Phi_\tau'(Q_\tau)^2 \leq \frac{D}{2L}\sum_{\tau=1}^t \frac{\Phi_\tau'(Q_\tau)^2}{\sqrt{\sum_{s=1}^\tau \Phi_s'(Q_s)^2}} \leq \frac{D}{L} \sqrt{\sum_{\tau=1}^t \Phi_\tau'(Q_\tau)^2}.
    \end{align*}
    Combining these yields
    \begin{align*}
        \sum_{\tau=1}^t(\hat f_\tau(\bfx_\tau) - \hat f_\tau(\bfx^\star)) \leq  \frac{L^2(1+\log t)}{\mu} + 2DL\sqrt{\sum_{\tau=1}^t \Phi_\tau'(Q_\tau)^2}.
    \end{align*}
    Recall that $\hat f_\tau = f_\tau + \Phi_\tau'(Q_\tau)g_\tau$. Therefore, we have
    \begin{align*}
        &\sum_{\tau=1}^t (f_\tau(\bfx_\tau) - f_\tau(\bfx^\star)) + \sum_{\tau=1}^t \Phi_\tau'(Q_{\tau})g_\tau(\bfx_\tau) \\
        &\leq \sum_{\tau=1}^t \Phi_\tau'(Q_\tau)g_\tau(\bfx^\star) + \frac{L^2(1 + \log t)}{\mu} + 2DL\sqrt{\sum_{\tau=1}^t \Phi_\tau'(Q_\tau)^2}.
    \end{align*}
\end{proof}

\subsection{Expected Bounds}
The proofs of the following lemmas are deferred to Appendix~\ref{app:deferred}.
\begin{lemma}\label{lem:Regret+Phi:stoc + exp + strcvx}
    Consider the setting of \Cref{thm:stoc + exp + strcvx}. Suppose that we run \Cref{alg:cumulative} with $\gamma_t,\Psi$ given by \eqref{eq:psi + gamma exp}, and $\eta_t$ given by \eqref{eq:eta strcvx}. Then, for all $t\geq 1$, we have
    \begin{align*}
        \bbE\left[\sum_{\tau=1}^t (f_\tau(\bfx_\tau) - f_\tau(\bfx^\star))\right] + \bbE\left[\Phi_{t+1}(Q_{t+1})\right] \leq \frac{L^2(1+\log t)}{\mu} + \log t + 1.
    \end{align*}
\end{lemma}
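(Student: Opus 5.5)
The proof will repeat the argument of \Cref{lem:Regret+Phi:stoc + exp + cvx}, with \Cref{lem:strcvx regret} used in place of \Cref{lem:adagrad}. The choices of $\Psi$ and $\gamma_t$ are the same as in \eqref{eq:psi + gamma exp}. The drift bound \Cref{lem:drift:stoc + exp + cvx} depends only on $\Psi$ and $\gamma_t$, not on the primal step size, so it still applies here verbatim.

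\textbf{Step 1.} Start from \Cref{lem:strcvx regret}, which gives
\[
\sum_{\tau=1}^t \bigl(f_\tau(\bfx_\tau)-f_\tau(\bfx^\star)\bigr)+\sum_{\tau=1}^t\Phi_\tau'(Q_\tau)g_\tau(\bfx_\tau) \le \sum_{\tau=1}^t\Phi_\tau'(Q_\tau)g_\tau(\bfx^\star)+\frac{L^2(1+\log t)}{\mu}+2DL\sqrt{\textstyle\sum_{\tau=1}^t\Phi_\tau'(Q_\tau)^2}.
\]

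\textbf{Step 2.} Sum the drift inequality of \Cref{lem:drift:stoc + exp + cvx} over $\tau=1,\ldots,t$, using $\Phi_1(Q_1)=0$. Adding the result to Step 1 cancels $\sum_\tau\Phi_\tau'(Q_\tau)g_\tau(\bfx_\tau)$ and leaves
\[
\text{Regret}+\Phi_{t+1}(Q_{t+1}) \le \sum_\tau\Phi_\tau'(Q_\tau)g_\tau(\bfx^\star)+\frac{L^2(1+\log t)}{\mu}+2DL\sqrt{\textstyle\sum_\tau\Phi_\tau'(Q_\tau)^2}+\sum_\tau\frac{\gamma_\tau-\gamma_{\tau+1}}{e\gamma_{\tau+1}}-\frac12\sum_\tau\Phi_\tau'(Q_\tau)R_\tau .
\]

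\textbf{Step 3.} Bound the regularizer term. Since $R_\tau\ge 0$ and the $\Psi$-increments telescope,
\[
\tfrac12\sum_\tau\Phi_\tau'(Q_\tau)R_\tau\ge \tfrac12\Psi\Bigl(\textstyle\sum_\tau\Phi_\tau'(Q_\tau)^2\Bigr)-\tfrac12\Psi(0)=2DL\sqrt{\textstyle\sum_\tau\Phi_\tau'(Q_\tau)^2}.
\]
This exactly cancels the square-root term. The surviving constant is also smaller than in the convex case: \Cref{lem:strcvx regret} contributes no additive $D$ or $2DL\sqrt{t}$.

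\textbf{Step 4.} By \Cref{lem:gamma sum exp}, the $\gamma$-cost is at most $(1+\log t)/(2e)\le \log t+1$.

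\textbf{Step 5.} Take expectations. Because $\Phi_\tau'(Q_\tau)$ is $\calF_{\tau-1}$-measurable and $g_\tau$ is independent of $\calF_{\tau-1}$, the tower rule gives
\[
\bbE\bigl[\Phi_\tau'(Q_\tau)g_\tau(\bfx^\star)\bigr]=\bbE\bigl[\Phi_\tau'(Q_\tau)\,\bbE[g_\tau(\bfx^\star)]\bigr]\le 0 .
\]
This yields the claimed bound $L^2(1+\log t)/\mu+\log t+1$.

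The only point needing care is checking that \Cref{lem:drift:stoc + exp + cvx} carries over. Its proof uses only $\gamma_t\le 1/(12G)$, $\gamma_t\le 1/(24DL)$, monotonicity of $\gamma_t$ and the form of $\Psi$; none of these involve $\eta_t$, so I expect no real obstacle.
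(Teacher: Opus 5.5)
Your proposal is correct and follows the paper's proof step for step: \Cref{lem:strcvx regret} in place of \Cref{lem:adagrad}, the summed drift bound of \Cref{lem:drift:stoc + exp + cvx}, cancellation of $2DL\sqrt{\sum_\tau \Phi_\tau'(Q_\tau)^2}$ by the regularizer, \Cref{lem:gamma sum exp} for the $\gamma$-cost, and the tower rule for the comparator term. One small fix in Step 3: the fact you need is not $R_\tau \ge 0$ but $R_\tau \ge \bigl(\Psi(\sum_{s\le\tau}\Phi_s'(Q_s)^2)-\Psi(\sum_{s<\tau}\Phi_s'(Q_s)^2)\bigr)/\Phi_\tau'(Q_\tau)$, which holds because $12\gamma_\tau G^2 \ge 0$.
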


\begin{lemma}\label{lem:R sum:stoc + exp + strcvx}
    Consider the setting of \Cref{thm:stoc + exp + strcvx}. Suppose that we run \Cref{alg:cumulative} with $\gamma_t,\Psi$ given by \eqref{eq:psi + gamma exp}, and $\eta_t$ given by \eqref{eq:eta strcvx}. Then, for all $t\geq 1$, we have
    \begin{align*}
        \bbE\left[\sum_{\tau=1}^t R_\tau\right] =  \bigO\left(\sqrt{t \log t}\right).
    \end{align*}
\end{lemma}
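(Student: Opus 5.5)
The argument mirrors the proof of \Cref{lem:R sum:stoc + exp + cvx}. The only change is that the bound on $\bbE[\Phi_{t+1}'(Q_{t+1})]$ now comes from \Cref{lem:Regret+Phi:stoc + exp + strcvx} instead of \Cref{lem:Regret+Phi:stoc + exp + cvx}.

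\emph{Step 1: bound the Lyapunov derivative in expectation.} Lipschitzness gives $f_\tau(\bfx_\tau)-f_\tau(\bfx^\star)\geq -LD$ almost surely. Hence $\sum_{\tau=1}^t (f_\tau(\bfx_\tau)-f_\tau(\bfx^\star))\geq -LDt$. Plugging this into \Cref{lem:Regret+Phi:stoc + exp + strcvx} yields
\[
\bbE[\Phi_{t+1}(Q_{t+1})]\leq LDt+\frac{L^2(1+\log t)}{\mu}+\log t+1 .
\]
Since $\Phi_{t+1}'(x)=\gamma_{t+1}(\Phi_{t+1}(x)+1)$ and $\gamma_{t+1}\leq 1$, we get
\[
\bbE[\Phi_{t+1}'(Q_{t+1})]\leq LDt+\frac{L^2(1+\log t)}{\mu}+\log t+2 .
\]
This is polynomial in $t$, and that is all the later steps need. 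The bound also holds for $\tau=1$, since $\Phi_1'(Q_1)=\gamma_1\leq 1$.

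\emph{Step 2: split $R_\tau$.} The regularizer is $R_\tau = 12\gamma_\tau G^2 + (\text{increment of }\Psi)/\Phi_\tau'(Q_\tau)$. For the first part, $\gamma_\tau\leq 1/(12G\sqrt{\tau})$ gives $\sum_{\tau=1}^t 12\gamma_\tau G^2\leq 2G\sqrt{t}$. For the second part, $\Psi(x)=4DL\sqrt{x}$ is unchanged from the convex case. The same pathwise chain therefore applies: rationalize the difference of square roots, apply Cauchy--Schwarz, and then use \Cref{lem:integral 1/x}. This bounds the sum by
\[
4DL\sqrt{t}\sqrt{2\log\frac{\sum_{\tau=1}^t\Phi_\tau'(Q_\tau)}{\Phi_1'(Q_1)}+1}.
\]
This step uses only the structure of $R_\tau$, not the primal step size, so it carries over verbatim even though $\eta_t$ is now given by \eqref{eq:eta strcvx}.

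\emph{Step 3: take expectations.} By Jensen's inequality for the concave maps $\sqrt{\cdot}$ and $\log$, the expected sum is at most
\[
4DL\sqrt{t}\sqrt{2\log\frac{\sum_{\tau=1}^t\bbE[\Phi_\tau'(Q_\tau)]}{\gamma_1}+1}.
\]
Step 1 bounds the numerator by $t\,(LDt+L^2(1+\log t)/\mu+\log t+2)$. The logarithm is then $\bigO(\log t)$, and so $\bbE[\sum_{\tau=1}^t R_\tau]=\bigO(\sqrt{t\log t})$.

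\emph{Main subtlety.} The only point requiring care is Step 1. One must check that the strongly convex analogue of the combined regret-plus-Lyapunov bound still gives a polynomial-in-$t$ bound on $\bbE[\Phi_\tau'(Q_\tau)]$. The crude $-LDt$ lower bound on regret handles this, because only the logarithm of the bound matters. Everything else is a reuse of the argument for \Cref{lem:R sum:stoc + exp + cvx}.
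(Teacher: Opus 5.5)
Your proposal is correct and follows the paper's proof essentially step for step. Both arguments bound $\bbE[\Phi_{t+1}'(Q_{t+1})]$ polynomially via \Cref{lem:Regret+Phi:stoc + exp + strcvx} and the crude $-LDt$ regret lower bound, reuse the pathwise Cauchy--Schwarz and \Cref{lem:integral 1/x} chain for the $\Psi$-increments (which does not involve $\eta_t$), and finish with Jensen; your additive constant $+2$ from $\Phi_{t+1}'=\gamma_{t+1}(\Phi_{t+1}+1)$ is slightly more careful than the paper's $+1$, though this is immaterial inside the logarithm.
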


\subsubsection{Proof of Theorem~\ref{thm:stoc + exp + strcvx}}
\begin{proof}
    We closely follow the proof of \Cref{thm:stoc + exp + cvx}, where the only difference is that we use \Cref{lem:Regret+Phi:stoc + exp + strcvx} and \Cref{lem:R sum:stoc + exp + strcvx}. By \Cref{lem:Regret+Phi:stoc + exp + strcvx}, 
    \begin{align*}
        \bbE[\Regret(t)] = \bbE\left[\sum_{\tau=1}^t (f_\tau(\bfx_\tau) - f_\tau(\bfx^\star))\right] \leq \frac{L^2(1+\log t)}{\mu} + \log t + 2.
    \end{align*}
    By applying the same argument in the proof of \Cref{thm:stoc + exp + cvx}, we have
    \begin{align*}
        \bbE[Q_{t+1}] 
        &\leq \frac{1}{\gamma_{t+1}}\log\left(DLt + \frac{L^2(1+\log t)}{\mu} + \log t + 1\right)\\
        &\leq \left(12G\sqrt{t+1} + 24DL + 1\right)\log\left(DLt + \frac{L^2(1+\log t)}{\mu} + \log t + 1\right)
    \end{align*}
    which can be written as $\bbE[Q_{t+1}] = \bigO\left(\sqrt{t}\log t\right)$. Moreover, by \Cref{lem:R sum:stoc + exp + strcvx}, we have $\bbE\left[\sum_{\tau=1}^t R_\tau\right] =\bigO(\sqrt{t\log t})$. Since $\bbE[\Violation(t)] = \bbE[Q_{t+1}] + \bbE\left[\sum_{\tau=1}^t R_\tau\right]$, we have
    \[
        \bbE[\Violation(t)] = \bigO\left(\sqrt{t}\log t\right).
    \]
\end{proof}

\subsection{High-Probability Bounds}
The proofs of the following lemmas are deferred to Appendix~\ref{app:deferred}.
\begin{lemma}\label{lem:Regret+Phi:stoc + hp + strcvx}
    Consider the setting of \Cref{thm:stoc + hp + strcvx}. Suppose that we run \Cref{alg:cumulative} with $\gamma_t,\Psi$ as given by \eqref{eq:psi + gamma hp}, and $\eta_t$ as given by \eqref{eq:eta strcvx}. Then, with probability at least $1-\delta$, for all $t\geq 1$, we have
    \begin{align*}
        \sum_{\tau=1}^t (f_\tau(\bfx_\tau) - f_\tau(\bfx^\star)) + \Phi_{t+1}(Q_{t+1}) \leq \frac{L^2(1+\log t)}{\mu} + \log t + 1 + 2G\sqrt{\log\frac{\pi^2}{6\delta}}.
    \end{align*}
\end{lemma}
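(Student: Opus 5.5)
The proof follows that of \Cref{lem:Regret+Phi:stoc + hp + cvx}, with \Cref{lem:strcvx regret} used in place of \Cref{lem:adagrad}. Fix $t\geq 1$. By \Cref{lem:strcvx regret},
\begin{align*}
&\sum_{\tau=1}^t (f_\tau(\bfx_\tau)-f_\tau(\bfx^\star)) + \sum_{\tau=1}^t \Phi_\tau'(Q_\tau)g_\tau(\bfx_\tau)\\
&\leq \sum_{\tau=1}^t \Phi_\tau'(Q_\tau)g_\tau(\bfx^\star) + \frac{L^2(1+\log t)}{\mu} + 2DL\sqrt{\sum_{\tau=1}^t\Phi_\tau'(Q_\tau)^2}.
\end{align*}
Summing \Cref{lem:drift:stoc + hp + cvx} over $\tau=1,\ldots,t$ and using $\Phi_1(Q_1)=0$ gives
\[
\Phi_{t+1}(Q_{t+1}) \leq \sum_{\tau=1}^t \Phi_\tau'(Q_\tau)g_\tau(\bfx_\tau) - \frac{1}{2}\sum_{\tau=1}^t\Phi_\tau'(Q_\tau)R_\tau + \sum_{\tau=1}^t\frac{\gamma_\tau-\gamma_{\tau+1}}{e\gamma_{\tau+1}}.
\]
This drift lemma depends only on $\gamma_t$ and $\Psi$, not on the step size, so it applies unchanged. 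Adding the two inequalities cancels the $\sum_\tau\Phi_\tau'(Q_\tau)g_\tau(\bfx_\tau)$ terms.

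Next, the comparator term $\sum_{\tau=1}^t\Phi_\tau'(Q_\tau)g_\tau(\bfx^\star)$ is controlled by \Cref{lem:supmg ci}. Take $w_\tau=\Phi_\tau'(Q_\tau)$, which is positive and $\calF_{\tau-1}$-measurable, and $X_\tau=g_\tau(\bfx^\star)$, which satisfies $|X_\tau|\leq G$ and $\bbE[X_\tau\mid\calF_{\tau-1}]=\bbE[g_\tau(\bfx^\star)]\leq 0$ because the constraints are i.i.d. and $\bfx^\star$ is fixed. Then, with probability at least $1-\delta$ simultaneously for all $t$, this term is at most $2G\sqrt{\psi(\sum_{\tau=1}^t\Phi_\tau'(Q_\tau)^2)}$, where $\psi$ is the function from the proof of \Cref{lem:drift:stoc + hp + cvx}. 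All subsequent steps are deterministic on this event.

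For the regularizer, use $R_\tau\geq$ the $\Psi$-increment divided by $\Phi_\tau'(Q_\tau)$; the $12\gamma_\tau G^2$ part is nonnegative and is simply dropped. The sum then telescopes:
\begin{align*}
\frac{1}{2}\sum_{\tau=1}^t\Phi_\tau'(Q_\tau)R_\tau &\geq \frac{1}{2}\Psi\Bigl(\sum_{\tau=1}^t\Phi_\tau'(Q_\tau)^2\Bigr)-\frac{1}{2}\Psi(0)\\
&= 2DL\sqrt{\sum_{\tau=1}^t\Phi_\tau'(Q_\tau)^2} + 2G\sqrt{\psi\Bigl(\sum_{\tau=1}^t\Phi_\tau'(Q_\tau)^2\Bigr)} - 2G\sqrt{\log\frac{\pi^2}{6\delta}}.
\end{align*}
The first term cancels the $2DL\sqrt{\cdot}$ term from \Cref{lem:strcvx regret}, and the second cancels the concentration bound. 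Only $2G\sqrt{\log(\pi^2/(6\delta))}$ remains.

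The remaining term is the time-varying cost. By \Cref{lem:gamma sum hp} and $3/(2e)\leq 1$, $\sum_{\tau=1}^t(\gamma_\tau-\gamma_{\tau+1})/(e\gamma_{\tau+1})\leq \log t+1$. Collecting terms gives
\[
\sum_{\tau=1}^t (f_\tau(\bfx_\tau)-f_\tau(\bfx^\star)) + \Phi_{t+1}(Q_{t+1}) \leq \frac{L^2(1+\log t)}{\mu}+\log t+1+2G\sqrt{\log\frac{\pi^2}{6\delta}}.
\]
The argument is mostly bookkeeping. The step needing care is the concentration application: the high-probability event must hold uniformly over $t$ and must be exactly matched by the $\psi$-part of $\Psi$. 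The anytime form of \Cref{lem:supmg ci} handles both requirements.
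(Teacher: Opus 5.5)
Your proposal is correct and follows the paper's proof essentially step for step. You combine \Cref{lem:strcvx regret} with the summed drift bound of \Cref{lem:drift:stoc + hp + cvx}, control the comparator term via \Cref{lem:supmg ci}, cancel it and the $2DL\sqrt{\cdot}$ term with the telescoped $\Psi$-part of the regularizer (leaving $\tfrac12\Psi(0)=2G\sqrt{\log(\pi^2/(6\delta))}$), and bound the time-varying cost by \Cref{lem:gamma sum hp}; the only difference is that you spell out the application of \Cref{lem:supmg ci} and its uniformity over $t$, which the paper's deferred proof leaves implicit by reference to the convex case.
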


\begin{lemma}\label{lem:R sum:stoc + hp + strcvx}
    Consider the setting of \Cref{thm:stoc + hp + strcvx}. Suppose that we run \Cref{alg:cumulative} with $\gamma_t,\Psi$ as given by \eqref{eq:psi + gamma hp}, and $\eta_t$ as given by \eqref{eq:eta strcvx}. Suppose that \Cref{lem:Regret+Phi:stoc + hp + strcvx} holds. Then, for all $t\geq 1$, we have
    \begin{align*}
        \sum_{\tau=1}^t R_\tau =  \bigO\left(\sqrt{t}\log(t/\delta)\right).
    \end{align*}
\end{lemma}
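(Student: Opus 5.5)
The argument follows the proof of \Cref{lem:R sum:stoc + hp + cvx} almost line by line; the only change is the source of the bound on $\Phi_{t+1}'(Q_{t+1})$. That earlier proof used the surrogate regret bound of \Cref{lem:Regret+Phi:stoc + hp + cvx} only to get a pathwise bound on $\Phi_{t+1}(Q_{t+1})$. Here we take it instead from \Cref{lem:Regret+Phi:stoc + hp + strcvx}, which we assume holds. Since $f_\tau(\bfx_\tau)-f_\tau(\bfx^\star)\geq -LD$ by Lipschitzness, we get
\[
\Phi_{t+1}(Q_{t+1})\leq DLt+\frac{L^2(1+\log t)}{\mu}+\log t+1+2G\sqrt{\log\tfrac{\pi^2}{6\delta}}=:B_t .
\]
Because $\gamma_{t+1}\leq 1$, it follows that $\Phi_{t+1}'(Q_{t+1})=\gamma_{t+1}(\Phi_{t+1}(Q_{t+1})+1)\leq B_t+1$. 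This gives $\sum_{\tau=1}^t\Phi_\tau'(Q_\tau)^2\leq t(B_t+1)^2$, which is polynomial in $t$ and grows only like $\sqrt{\log(1/\delta)}$ in $\delta$.

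Next we split $R_\tau$ into three pieces, exactly as before. The constant part satisfies $12\gamma_\tau G^2\leq G/\sqrt{\tau}$, so it sums to at most $2G\sqrt{t}$. The $4DL\sqrt{x}$ part of $\Psi$ gives term (I). Writing each increment over $\Phi_\tau'(Q_\tau)$ as a ratio, then applying Cauchy--Schwarz and \Cref{lem:integral 1/x}, term (I) is at most
\[
4DL\sqrt{t}\sqrt{\log\frac{\sum_{\tau=1}^t\Phi_\tau'(Q_\tau)^2}{\gamma_1^2}+1},
\]
which is $\bigO(\sqrt{t\log(t/\delta)})$. The $4G\sqrt{\psi(x)}$ part of $\Psi$ gives term (II). By \Cref{prop:psi}, each increment is bounded by $8\sqrt{\log(\cdots)}$ times the increment of $\sqrt{1+x}$. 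Pulling out the largest logarithmic factor, which sits at $\tau=t$ by monotonicity, and reusing the argument for (I), we get
\[
\textrm{(II)}\lesssim 32G\sqrt{t}\sqrt{\log\frac{(1+\log_2(1+t(B_t+1)^2))^2}{\delta}}\cdot\sqrt{\log\frac{t(B_t+1)^2}{\gamma_1^2}+1}.
\]
This is $\bigO(\sqrt{t}\log(t/\delta))$.

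Adding the three pieces gives the claim. The only point that needs care is substituting the strongly convex bound $B_t$ into the logarithms, and we must check that the factor $1/\mu$ enters only inside them. That holds, so the rate in $t$ and $\delta$ is unchanged. There is no real obstacle here; the strong convexity affects only the constants.
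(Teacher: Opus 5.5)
Your proposal is correct and follows the paper's proof essentially step for step. You bound $\Phi_{t+1}'(Q_{t+1})$ pathwise via \Cref{lem:Regret+Phi:stoc + hp + strcvx} and Lipschitzness, which matches the paper's bound $DLt + L^2(1+\log t)/\mu + \log t + 2 + 2G\sqrt{\log(\pi^2/(6\delta))}$, and then rerun the decomposition of \Cref{lem:R sum:stoc + hp + cvx} using \Cref{prop:psi}, Cauchy--Schwarz, and \Cref{lem:integral 1/x}; your remark that $1/\mu$ enters only inside the logarithms is exactly why the rate is unchanged.
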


\subsubsection{Proof of Theorem~\ref{thm:stoc + hp + strcvx}}
\begin{proof}
    We closely follow the proof of \Cref{thm:stoc + hp + cvx}, where the only difference is that we use \Cref{lem:Regret+Phi:stoc + hp + strcvx} and \Cref{lem:R sum:stoc + hp + strcvx}. We assume that \Cref{lem:Regret+Phi:stoc + hp + strcvx} holds, which occurs with probability at least $1-\delta$. By \Cref{lem:Regret+Phi:stoc + hp + strcvx}, 
    \begin{align*}
        \Regret(t) = \sum_{\tau=1}^t (f_\tau(\bfx_\tau) - f_\tau(\bfx^\star)) \leq \frac{L^2(1+\log t)}{\mu} + \log t + 2 + 2G\sqrt{\log\frac{\pi^2}{6\delta}}.
    \end{align*}
    By applying the same argument in the proof of \Cref{thm:stoc + hp + cvx}, we have
    \begin{align*}
        Q_{t+1}
        &\leq \frac{1}{\gamma_{t+1}}\log\left(DLt + \frac{L^2(1+\log t)}{\mu} + \log t + 2 + 2G\sqrt{\log\frac{\pi^2}{6\delta}}\right)\\
        &\leq \left(12G\sqrt{t+1} + 24\left(DL+8G\sqrt{\log(12(t+1)^2/\delta)}\right) + 1\right)\\
        &\quad\times\log\left(DLt + \frac{L^2(1+\log t)}{\mu} + \log t + 2 + 2G\sqrt{\log\frac{\pi^2}{6\delta}}\right)
    \end{align*}
    which can be written as $Q_{t+1} = \bigO\left(\sqrt{t}\log t + \log^{3/2} (t/\delta)\right)$. Moreover, by \Cref{lem:R sum:stoc + hp + strcvx}, we have $\sum_{\tau=1}^t R_\tau=\bigO(\sqrt{t}\log(t/\delta))$. Since $\Violation(t) = Q_{t+1} + \sum_{\tau=1}^t R_\tau$, we have
    \[
        \Violation(t) = \bigO\left(\sqrt{t}\log (t/\delta) + \log^{3/2}(t/\delta)\right),
    \]
    as required.
\end{proof}

\section{Conclusion}\label{sec:conclusion}
In this paper, we study COCO with stochastic constraints without Slater's condition. We propose a unified primal-dual algorithm with an anytime regularizer and show that it achieves nearly optimal regret and constraint violation guarantees under stochastic constraints. The key insight is that the regularizer stabilizes the dual process by offsetting an accumulated term involving the derivative of an exponential Lyapunov function. We also establish high-probability guarantees and extend the analysis to strongly convex losses and adversarial constraints.
Several directions are worth further investigation. One direction is to obtain dynamic regret bounds under stochastic constraints, where dynamic regret is defined with respect to a comparator sequence $\{\bfx_t^\star\}_{t=1}^T$ satisfying $\{\bfx_t^\star\}_{t=1}^T \in \argmin_{\{\bfx_t\}_{t=1}^T} \sum_{t=1}^T f_t(\bfx_t) \st \bbE[g_t(\bfx_t)] \leq 0$ for all $t\in[T]$, while this paper considers a fixed comparator over time. Another direction is to develop projection-free algorithms, as the proposed algorithm requires a projection oracle onto $\calX$.


\newpage
\appendix
\newpage

\section{Deferred Proofs of Section~\ref{sec:anal:stoc + strcvx}}\label{app:deferred}

\begin{proof}[Proof of Lemma~\ref{lem:Regret+Phi:stoc + exp + strcvx}]
    We closely follow the proof of \Cref{lem:Regret+Phi:stoc + exp + cvx}, where the only difference is that we use \Cref{lem:strcvx regret} instead of \Cref{lem:adagrad}. Since we take $\eta_t$ as \eqref{eq:eta strcvx} and $\gamma_t, \Psi$ as \eqref{eq:psi + gamma exp}, Lemmas~\ref{lem:strcvx regret} and~\ref{lem:drift:stoc + exp + cvx} hold. By \Cref{lem:strcvx regret}, we have
    \begin{align*}
    \begin{aligned}
        &\sum_{\tau=1}^t (f_\tau(\bfx_\tau) - f_\tau(\bfx^\star)) + \sum_{\tau=1}^t \Phi_\tau'(Q_{\tau})g_\tau(\bfx_\tau) \\
        &\leq \sum_{\tau=1}^t \Phi_\tau'(Q_\tau)g_\tau(\bfx^\star) + \frac{L^2(1+\log t)}{\mu} + 2DL\sqrt{\sum_{\tau=1}^t \Phi_\tau'(Q_\tau)^2}.
    \end{aligned}
    \end{align*}
    By \Cref{lem:drift:stoc + exp + cvx},
    \begin{align*}
        \Phi_{t+1}(Q_{t+1}) - \Phi_1(Q_1) \leq \sum_{\tau=1}^t \Phi_\tau'(Q_\tau)\left(g_\tau(\bfx_\tau) - \frac{R_\tau}{2}\right) + \sum_{\tau=1}^t\frac{\gamma_\tau - \gamma_{\tau+1}}{e\gamma_{\tau+1}}.
    \end{align*}
    Note that
    \begin{align*}
        \frac{1}{2}\sum_{\tau=1}^t \Phi_\tau'(Q_\tau)R_\tau 
        &\geq 2DL\sqrt{\sum_{\tau=1}^t \Phi_\tau'(Q_\tau)^2}.
    \end{align*}
    Then it follows that
    \begin{align*}
        &\sum_{\tau=1}^t (f_\tau(\bfx_\tau)-f_\tau(\bfx^\star)) + \Phi_{t+1}(Q_{t+1}) \\
        &\leq\sum_{\tau=1}^t\Phi_\tau'(Q_\tau)g_\tau(\bfx^\star) + \frac{L^2(1+\log t)}{\mu} + \sum_{\tau=1}^t\frac{\gamma_\tau - \gamma_{\tau+1}}{e\gamma_{\tau+1}} \\
        &\leq\sum_{\tau=1}^t\Phi_\tau'(Q_\tau)g_\tau(\bfx^\star) + \frac{L^2(1+\log t)}{\mu} + \log t + 1
    \end{align*}
    where the second inequality follows from \Cref{lem:gamma sum exp} and $1/(2e) \leq 1$. By taking $\bbE$ on both sides, since $\sum_{\tau=1}^t\bbE[\Phi_\tau'(Q_\tau)g_\tau(\bfx^\star)] \leq  0$ by applying the same argument in the proof of \Cref{lem:Regret+Phi:stoc + exp + cvx}, we conclude the proof.
\end{proof}

\begin{proof}[Proof of Lemma~\ref{lem:R sum:stoc + exp + strcvx}]
    We closely follow the proof of \Cref{lem:R sum:stoc + exp + cvx}, where the only difference is that we use \Cref{lem:Regret+Phi:stoc + exp + strcvx} instead of \Cref{lem:Regret+Phi:stoc + exp + cvx}. By Lipschitzness and \Cref{lem:Regret+Phi:stoc + exp + strcvx}, 
    \begin{align*}
        \bbE[\Phi_{t+1}'(Q_{t+1})] 
        &\leq DLt + \frac{L^2(1+\log t)}{\mu} + \log t + 1.
    \end{align*}
    Note that
    \begin{align*}
        \bbE\left[\sum_{\tau=1}^t R_\tau\right] 
        \leq \sum_{\tau=1}^t\frac{G}{\sqrt{\tau}} + \sum_{\tau=1}^t
        \bbE\left[\frac{\Psi\left(\sum_{s=1}^\tau\Phi_s'(Q_s)^2\right)-\Psi\left(\sum_{s=1}^{\tau-1}\Phi_s'(Q_s)^2\right)}{\Phi_\tau'(Q_\tau)}\right].
    \end{align*}
     It is clear that $\sum_{\tau=1}^t G/\sqrt{\tau} \leq 2G\sqrt{t}$. Moreover, it follows that
     \begin{align*}
         &\sum_{\tau=1}^t\bbE\left[\frac{\Psi\left(\sum_{s=1}^\tau\Phi_s'(Q_s)^2\right)-\Psi\left(\sum_{s=1}^{\tau-1}\Phi_s'(Q_s)^2\right)}{\Phi_\tau'(Q_\tau)}\right]\\
         &\leq 4DL\sqrt{t}\sqrt{2\log\frac{\sum_{\tau=1}^t\bbE[\Phi_\tau'(Q_\tau)]}{\Phi_1'(Q_1)} +1}\\
         &\leq 4DL\sqrt{t}\sqrt{2\log\frac{t(DLt + L^2(1+\log t)/\mu +\log t + 1)}{\Phi_1'(Q_1)} +1}.
     \end{align*}
     Finally, we have $\bbE\left[\sum_{\tau=1}^t R_\tau\right]  = \bigO\left(\sqrt{t \log t}\right).$
\end{proof}

\begin{proof}[Proof of Lemma~\ref{lem:Regret+Phi:stoc + hp + strcvx}]
    We closely follow the proof of \Cref{lem:Regret+Phi:stoc + hp + cvx}, where the only difference is that we use \Cref{lem:strcvx regret} instead of \Cref{lem:adagrad}. Since we take $\eta_t$ as \eqref{eq:eta strcvx} and $\gamma_t, \Psi$ as \eqref{eq:psi + gamma hp}, Lemmas~\ref{lem:strcvx regret} and~\ref{lem:drift:stoc + hp + cvx} hold. By \Cref{lem:strcvx regret}, we have
    \begin{align*}
    \begin{aligned}
        &\sum_{\tau=1}^t (f_\tau(\bfx_\tau) - f_\tau(\bfx^\star)) + \sum_{\tau=1}^t \Phi_\tau'(Q_{\tau})g_\tau(\bfx_\tau) \\
        &\leq \sum_{\tau=1}^t \Phi_\tau'(Q_\tau)g_\tau(\bfx^\star) + \frac{L^2(1+\log t)}{\mu} + 2DL\sqrt{\sum_{\tau=1}^t \Phi_\tau'(Q_\tau)^2}.
    \end{aligned}
    \end{align*}
    By \Cref{lem:drift:stoc + hp + cvx},
    \begin{align*}
        \Phi_{t+1}(Q_{t+1}) - \Phi_1(Q_1) \leq \sum_{\tau=1}^t \Phi_\tau'(Q_\tau)\left(g_\tau(\bfx_\tau) - \frac{R_\tau}{2}\right) + \sum_{\tau=1}^t\frac{\gamma_\tau - \gamma_{\tau+1}}{e\gamma_{\tau+1}}.
    \end{align*}
    Note that
        \begin{align*}
        \frac{1}{2}\sum_{\tau=1}^t \Phi_\tau'(Q_\tau)R_\tau 
        &\geq2DL\sqrt{\sum_{\tau=1}^t \Phi_\tau'(Q_\tau)^2} - 2G\sqrt{\log\frac{\pi^2}{6\delta}} \\
        &\quad+ 2G\sqrt{\left(1+\sum_{\tau=1}^t\Phi_\tau'(Q_{\tau})^2\right)\log\frac{(\pi^2/6)(1 + \log_2(1+\sum_{\tau=1}^t\Phi_\tau'(Q_\tau)^2))^2}{\delta}}.
    \end{align*}
    Then it follows that with probability at least $1-\delta$,
    \begin{align*}
        \sum_{\tau=1}^t (f_\tau(\bfx_\tau)-f_\tau(\bfx^\star)) + \Phi_{t+1}(Q_{t+1}) 
        &\leq \frac{L^2(1+\log t)}{\mu} + \sum_{\tau=1}^t\frac{\gamma_\tau - \gamma_{\tau+1}}{e\gamma_{\tau+1}} + 2G\sqrt{\log\frac{\pi^2}{6\delta}}\\
        &\leq \frac{L^2(1+\log t)}{\mu} + \log t + 1 + 2G\sqrt{\log\frac{\pi^2}{6\delta}}\\
    \end{align*}
    where the second inequality follows from \Cref{lem:gamma sum hp} and $3/(2e) \leq 1$.
\end{proof}

\begin{proof}[Proof of Lemma~\ref{lem:R sum:stoc + hp + strcvx}]
    We closely follow the proof of \Cref{lem:R sum:stoc + hp + cvx}, where the only difference is that we use \Cref{lem:Regret+Phi:stoc + hp + strcvx} instead of \Cref{lem:Regret+Phi:stoc + hp + cvx}. By Lipschitzness and \Cref{lem:Regret+Phi:stoc + hp + strcvx}, 
    \begin{align*}
        \Phi_{t+1}'(Q_{t+1})
        &\leq DLt + \frac{L^2(1+\log t)}{\mu} + \log t + 2 + 2G\sqrt{\log\frac{\pi^2}{6\delta}}.
    \end{align*}
    Note that
    \begin{align*}
        \sum_{\tau=1}^t R_\tau \leq \sum_{\tau=1}^t\frac{G}{\sqrt{\tau}} + \sum_{\tau=1}^t\frac{\Psi\left(\sum_{s=1}^\tau\Phi_s'(Q_s)^2\right)-\Psi\left(\sum_{s=1}^{\tau-1}\Phi_s'(Q_s)^2\right)}{\Phi_\tau'(Q_\tau)}.
    \end{align*}
     It is clear that $\sum_{\tau=1}^t G/\sqrt{\tau} \leq 2G\sqrt{t}$. Moreover, by applying the same argument in the proof of \Cref{lem:R sum:stoc + hp + cvx},
     \begin{align*}
         &\sum_{\tau=1}^t\frac{\Psi\left(\sum_{s=1}^\tau\Phi_s'(Q_s)^2\right)-\Psi\left(\sum_{s=1}^{\tau-1}\Phi_s'(Q_s)^2\right)}{\Phi_\tau'(Q_\tau)}\\
         &\leq\left(4DL + 32G\sqrt{\log\frac{(\pi^2/6)(1+\log_2(1+\sum_{\tau=1}^t \Phi_\tau'(Q_\tau)^2))^2}{\delta}}\right)\sqrt{t}\sqrt{\log\frac{\sum_{\tau=1}^t \Phi_\tau'(Q_\tau)^2}{\Phi_1'(Q_1)^2} +1}.
     \end{align*}
     Note that since $\Phi_{t+1}'(Q_{t+1}) = \bigO\left(t + \sqrt{\log(1/\delta)}\right)$, we have $\sum_{\tau=1}^t \Phi_\tau'(Q_\tau)^2 = \bigO\left(t^3 + t\log(1/\delta)\right)$. Therefore, we have
     \begin{align*}
         \sum_{\tau=1}^t\frac{\Psi\left(\sum_{s=1}^\tau\Phi_s'(Q_s)^2\right)-\Psi\left(\sum_{s=1}^{\tau-1}\Phi_s'(Q_s)^2\right)}{\Phi_\tau'(Q_\tau)} = \bigO\left(\sqrt{t}\log (t/\delta)\right).
     \end{align*}
     Consequently, we have $\sum_{\tau=1}^t R_\tau = \bigO\left(\sqrt{t}\log(t/\delta)\right)$.
\end{proof}

\section{Analysis under Adversarial Constraints}\label{app:anal:adv}
\subsection{Convex Losses}

\begin{lemma}\label{lem:drift:adv + exp + cvx}
Suppose that we run \Cref{alg:hard} with $\gamma_t,\Psi$ as given by \eqref{eq:psi + gamma exp}. Then, for all $t\geq 1$, we have
    \begin{equation*} 
        \Phi_{t+1}(Q_{t+1}) - \Phi_t(Q_{t}) \leq \Phi_t'(Q_{t})\left( \tilde g_t(\bfx_t) - \frac{R_t}{2}\right) + \frac{\gamma_t - \gamma_{t+1}}{e\gamma_{t+1}}.
    \end{equation*}
\end{lemma}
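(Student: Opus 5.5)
The argument repeats the proof of \Cref{lem:drift:stoc + exp + cvx} almost verbatim, with $g_t$ replaced by $\tilde g_t$. The dual update of \Cref{alg:hard} is $Q_{t+1}=Q_t+\tilde g_t(\bfx_t)-R_t$, and $R_t$ and $\Phi_t$ are defined exactly as before. I would again split the drift as
\[
\Phi_{t+1}(Q_{t+1})-\Phi_t(Q_t)=\bigl(\Phi_t(Q_{t+1})-\Phi_t(Q_t)\bigr)+\bigl(\Phi_{t+1}(Q_{t+1})-\Phi_t(Q_{t+1})\bigr).
\]
For the first term I would write it as $\frac{1}{\gamma_t}\Phi_t'(Q_t)\bigl(e^{\gamma_t(\tilde g_t(\bfx_t)-R_t)}-1\bigr)$.

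The one point that must be checked is the applicability of $e^x\le 1+x+x^2$ for $x\le 1$. Since $0\le \tilde g_t(\bfx_t)=[g_t(\bfx_t)]_+\le G$ and $R_t\ge 0$ (because $\Psi$ is increasing), we have $\gamma_t(\tilde g_t(\bfx_t)-R_t)\le \gamma_t G\le 1$. We also need $|\tilde g_t(\bfx_t)|\le G$ for the squared term. Expanding $(\tilde g_t(\bfx_t)-R_t)^2$ with $(x+y+z)^2\le 3(x^2+y^2+z^2)$ then gives the same three pieces as before: $3\gamma_t G^2$, $432\gamma_t^3G^4$, and $3\gamma_t\bigl(\Psi$-increment$/\Phi_t'(Q_t)\bigr)^2$.

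These pieces are bounded exactly as before. Using $\gamma_t\le 1/(12G)$, the first two sum to at most $6\gamma_tG^2$. For the third, rationalizing the $\sqrt{\cdot}$ difference with $\Psi(x)=4DL\sqrt{x}$ and $\gamma_t\le 1/(24DL)$ shows that $3\gamma_t$ times the $\Psi$-increment ratio is at most $1/2$. That bound gives the third piece as at most half of the $\Psi$-increment ratio. Recalling $R_t=12\gamma_tG^2+\Psi$-increment$/\Phi_t'(Q_t)$, the first term is therefore bounded by $\Phi_t'(Q_t)(\tilde g_t(\bfx_t)-R_t/2)$.

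For the second term, $\gamma_{t+1}\le\gamma_t$ and \Cref{prop:Phi} give at most $(\gamma_t-\gamma_{t+1})/(e\gamma_{t+1})$. This part does not depend on the constraint at all. I do not expect a real obstacle: the only new check is that the clipped constraint $\tilde g_t$ keeps the bounds $\tilde g_t\le G$ and $|\tilde g_t|\le G$, which is immediate from $|g_t|\le G$.
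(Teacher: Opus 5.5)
Your proposal is correct and takes the same route as the paper. The paper's proof simply notes $|\tilde g_t(\bfx_t)|\le |g_t(\bfx_t)|\le G$ and reruns the argument of \Cref{lem:drift:stoc + exp + cvx}; you spell out that argument, including the check $\gamma_t(\tilde g_t(\bfx_t)-R_t)\le \gamma_t G\le 1$ via $R_t\ge 0$ and the $\gamma_{t+1}\le\gamma_t$ step through \Cref{prop:Phi}.
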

\begin{proof}
    Note that $|\tilde g_t(\bfx_t)| \leq |g_t(\bfx_t)| \leq G$. Therefore, by applying the same argument as in the proof of \Cref{lem:drift:stoc + exp + cvx}, we conclude the proof.
\end{proof}

\begin{lemma}\label{lem:adagrad adv}
        Consider the setting of \Cref{thm:adv + exp + cvx}. Suppose that we run \Cref{alg:hard} with $\eta_t$ given by \eqref{eq:eta cvx}. Then, for all $t\geq 1$, we have
        \begin{align*}
            &\sum_{\tau=1}^t (f_\tau(\bfx_\tau) - f_\tau(\bfx^\star)) + \sum_{\tau=1}^t \Phi_\tau'(Q_{\tau})\tilde g_\tau(\bfx_\tau) \\
            &\leq \sum_{\tau=1}^t \Phi_\tau'(Q_\tau)\tilde g_\tau(\bfx^\star) + 2DL \sqrt{t}+ 2DL \sqrt{\sum_{\tau=1}^t\Phi_\tau'(Q_\tau)^2} + D.
        \end{align*}
\end{lemma}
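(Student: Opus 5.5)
The argument is the one used for \Cref{lem:adagrad}, applied to the surrogate losses $\hat f_\tau = f_\tau + \Phi_\tau'(Q_\tau)\tilde g_\tau$ that \Cref{alg:hard} actually uses. First I record the needed properties of $\tilde g_\tau = [g_\tau]_+$. It is convex, being the maximum of two convex functions. The vector $\nabla \tilde g_\tau(\bfx_\tau)$ chosen in \Cref{alg:hard} is a valid subgradient of $\tilde g_\tau$ at $\bfx_\tau$. If $g_\tau(\bfx_\tau)>0$, then for every $\bfy$ we have $\tilde g_\tau(\bfy)\ge g_\tau(\bfy)\ge g_\tau(\bfx_\tau)+\nabla g_\tau(\bfx_\tau)^\top(\bfy-\bfx_\tau)$, and the left endpoint of this chain's right side equals $\tilde g_\tau(\bfx_\tau)+\nabla g_\tau(\bfx_\tau)^\top(\bfy-\bfx_\tau)$. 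If $g_\tau(\bfx_\tau)\le 0$, then $\tilde g_\tau(\bfx_\tau)=0\le \tilde g_\tau(\bfy)$, so $\bm 0$ is a subgradient. In either case $\|\nabla\tilde g_\tau(\bfx_\tau)\|\le L$. Since $\Phi_\tau'(Q_\tau)>0$, each $\hat f_\tau$ is convex, and $\nabla f_\tau(\bfx_\tau)+\Phi_\tau'(Q_\tau)\nabla\tilde g_\tau(\bfx_\tau)$ is a subgradient of $\hat f_\tau$ at $\bfx_\tau$. This is exactly the direction used in the primal update of \Cref{alg:hard}.

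With these properties the projected-subgradient argument goes through unchanged. Nonexpansiveness of $\proj_\calX$ and the subgradient inequality give
\[
\hat f_\tau(\bfx_\tau)-\hat f_\tau(\bfx^\star)\le \frac{\|\bfx_\tau-\bfx^\star\|^2-\|\bfx_{\tau+1}-\bfx^\star\|^2}{2\eta_\tau}+\frac{\eta_\tau}{2}\|\nabla\hat f_\tau(\bfx_\tau)\|^2 .
\]
The step size $\eta_t$ in \eqref{eq:eta cvx} is defined through the norms of the subgradients actually used, so here I read it with $\nabla\tilde g_\tau$ in place of $\nabla g_\tau$. The telescoping bound $D^2/(2\eta_t)$ and the AdaGrad summation inequality $\sum_\tau y_\tau/\sqrt{1+\sum_{s\le\tau}y_s}\le 2\sqrt{1+\sum_\tau y_\tau}$ then apply verbatim. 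They give
\[
\sum_{\tau=1}^t\bigl(\hat f_\tau(\bfx_\tau)-\hat f_\tau(\bfx^\star)\bigr)\le \sqrt2 D\sqrt{\textstyle\sum_{\tau=1}^t\|\nabla\hat f_\tau(\bfx_\tau)\|^2}+D .
\]

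To finish, I bound $\|\nabla\hat f_\tau(\bfx_\tau)\|^2\le 2L^2\bigl(1+\Phi_\tau'(Q_\tau)^2\bigr)$, which holds because $\|\nabla\tilde g_\tau(\bfx_\tau)\|\le L$. Applying $\sqrt{a+b}\le\sqrt a+\sqrt b$ then yields the terms $2DL\sqrt t+2DL\sqrt{\sum_{\tau=1}^t\Phi_\tau'(Q_\tau)^2}$. Expanding $\hat f_\tau$ and moving $\sum_{\tau=1}^t\Phi_\tau'(Q_\tau)\tilde g_\tau(\bfx^\star)$ to the right-hand side gives the claimed inequality.

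The only real obstacle is the first step: confirming that the clipped subgradient is a legitimate subgradient of the convex function $\tilde g_\tau$, so that the convexity inequality still holds. Everything after that is a direct copy of the proof of \Cref{lem:adagrad}. In the later analysis one will also use $\tilde g_\tau(\bfx^\star)=0$, which holds by \eqref{def:xstar adv}, so the comparator term vanishes; that fact is not needed for the present statement.
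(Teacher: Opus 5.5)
Your proposal is correct and is the paper's argument: verify that $\nabla f_\tau(\bfx_\tau)+\Phi_\tau'(Q_\tau)\nabla\tilde g_\tau(\bfx_\tau)$ is a subgradient of the convex surrogate $f_\tau+\Phi_\tau'(Q_\tau)\tilde g_\tau$ with $\|\nabla\tilde g_\tau(\bfx_\tau)\|\le L$, then rerun the proof of \Cref{lem:adagrad}; your observation that $\eta_t$ must be computed from the clipped subgradients makes explicit something the paper leaves implicit. One small correction: the additive constant $D$ requires the sharper AdaGrad inequality $\sum_\tau y_\tau/\sqrt{1+\sum_{s\le\tau}y_s}\le 2\bigl(\sqrt{1+\sum_\tau y_\tau}-1\bigr)$ used in \Cref{lem:adagrad}, since the version you wrote only gives $\sqrt{2}D$ in place of $D$.
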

\begin{proof}
    Recall that $\nabla \tilde g_t(\bfx_t) = \nabla g_t(\bfx_t)$ if $g_t(\bfx_t)>0$, and $\bm 0$ otherwise. It follows that $\|\nabla \tilde g_t(\bfx_t)\|\leq L$. Moreover, we have that $\nabla f_t(\bfx_t) + \Phi_t'(Q_t)\nabla \tilde g_t(\bfx_t)$ is a subgradient of $f_t + \Phi_t'(Q_t)\tilde g_t$. Then, by applying the same argument as in \Cref{lem:adagrad}, we conclude the proof.
\end{proof}

\begin{lemma}\label{lem:Regret+Phi:adv + exp + cvx}
    Consider the setting of \Cref{thm:adv + exp + cvx}. Suppose that we run \Cref{alg:hard} with $\gamma_t,\Psi$ given by \eqref{eq:psi + gamma exp}, and $\eta_t$ given by \eqref{eq:eta cvx}. Then, for all $t\geq 1$, we have
    \begin{align*}
        \sum_{\tau=1}^t (f_\tau(\bfx_\tau) - f_\tau(\bfx^\star))+ \Phi_{t+1}(Q_{t+1}) \leq 2DL\sqrt{t} +  \log t + 1 + D.
    \end{align*}
\end{lemma}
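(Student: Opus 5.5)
We follow the proof of \Cref{lem:Regret+Phi:stoc + exp + cvx} essentially line by line, with \Cref{lem:adagrad adv} in place of \Cref{lem:adagrad} and \Cref{lem:drift:adv + exp + cvx} in place of \Cref{lem:drift:stoc + exp + cvx}. The only structural change is that no expectation is needed: the comparator term is handled deterministically.

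First, \Cref{lem:adagrad adv} gives
\[
\sum_{\tau=1}^t (f_\tau(\bfx_\tau)-f_\tau(\bfx^\star)) + \sum_{\tau=1}^t \Phi_\tau'(Q_\tau)\tilde g_\tau(\bfx_\tau) \leq \sum_{\tau=1}^t \Phi_\tau'(Q_\tau)\tilde g_\tau(\bfx^\star) + 2DL\sqrt{t} + 2DL\sqrt{\sum_{\tau=1}^t\Phi_\tau'(Q_\tau)^2} + D.
\]

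Second, we sum \Cref{lem:drift:adv + exp + cvx} over $\tau=1,\ldots,t$ and use $\Phi_1(Q_1)=0$. This gives
\[
\Phi_{t+1}(Q_{t+1}) \leq \sum_{\tau=1}^t \Phi_\tau'(Q_\tau)\tilde g_\tau(\bfx_\tau) - \frac12\sum_{\tau=1}^t\Phi_\tau'(Q_\tau)R_\tau + \sum_{\tau=1}^t \frac{\gamma_\tau-\gamma_{\tau+1}}{e\gamma_{\tau+1}}.
\]
Adding the two inequalities cancels the terms $\sum_\tau \Phi_\tau'(Q_\tau)\tilde g_\tau(\bfx_\tau)$.

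Third, we lower bound the regularizer contribution. Since $R_\tau \geq (\Psi(\sum_{s\le\tau}\Phi_s'(Q_s)^2)-\Psi(\sum_{s<\tau}\Phi_s'(Q_s)^2))/\Phi_\tau'(Q_\tau)$, the sum telescopes. With $\Psi(x)=4DL\sqrt{x}$ and $\Psi(0)=0$, this yields
\[
\frac12\sum_{\tau=1}^t\Phi_\tau'(Q_\tau)R_\tau \geq 2DL\sqrt{\sum_{\tau=1}^t\Phi_\tau'(Q_\tau)^2},
\]
which exactly cancels the square-root term from the first step. The cost of the time-varying $\gamma_t$ is controlled by \Cref{lem:gamma sum exp}: $\sum_{\tau=1}^t(\gamma_\tau-\gamma_{\tau+1})/(e\gamma_{\tau+1}) \leq (1+\log t)/(2e) \leq \log t + 1$.

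The remaining term is the comparator sum $\sum_\tau \Phi_\tau'(Q_\tau)\tilde g_\tau(\bfx^\star)$. In the stochastic case this was the delicate step. Here it is trivial, because $\bfx^\star$ is round-wise feasible as in \eqref{def:xstar adv}: $g_\tau(\bfx^\star)\le 0$ gives $\tilde g_\tau(\bfx^\star)=[g_\tau(\bfx^\star)]_+=0$ for all $\tau\in[t]$, so the term vanishes identically. Combining all pieces gives the claimed bound $2DL\sqrt{t}+\log t+1+D$ deterministically.

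The only point that needs care is whether the lemma is meant for $t\le T$, since round-wise feasibility is assumed only over $[T]$. We state it for $t\le T$, which is all that \Cref{thm:adv + exp + cvx} needs. We also check, as the proof of \Cref{lem:adagrad adv} already notes, that the subgradient used in \Cref{alg:hard} is a valid subgradient of the convex surrogate $f_t+\Phi_t'(Q_t)\tilde g_t$, since $[\cdot]_+\circ g_t$ is convex.
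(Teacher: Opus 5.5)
Your proposal is correct and follows the paper's proof essentially step for step. You combine \Cref{lem:adagrad adv} with the telescoped \Cref{lem:drift:adv + exp + cvx}, cancel $2DL\sqrt{\sum_{\tau}\Phi_\tau'(Q_\tau)^2}$ against the telescoping lower bound on $\frac12\sum_\tau \Phi_\tau'(Q_\tau)R_\tau$, control the $\gamma$-variation cost via \Cref{lem:gamma sum exp}, and drop the comparator term because $\tilde g_\tau(\bfx^\star)=0$; your remark that this last step needs $t\le T$, since round-wise feasibility is only assumed on $[T]$, is a fair refinement the paper glosses over and suffices for \Cref{thm:adv + exp + cvx}.
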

\begin{proof}
    We closely follow the proof of \Cref{lem:Regret+Phi:stoc + exp + cvx}. 
    By \Cref{lem:adagrad adv},
    \begin{align*}
            &\sum_{\tau=1}^t (f_\tau(\bfx_\tau) - f_\tau(\bfx^\star)) + \sum_{\tau=1}^t \Phi_\tau'(Q_{\tau})\tilde g_\tau(\bfx_\tau) \\
            &\leq \sum_{\tau=1}^t \Phi_\tau'(Q_\tau)\tilde g_\tau(\bfx^\star) + 2DL \sqrt{t}+ 2DL \sqrt{\sum_{\tau=1}^t\Phi_\tau'(Q_\tau)^2} + D.
    \end{align*}
    By \Cref{lem:drift:adv + exp + cvx},
    \begin{align*}
        \Phi_{t+1}(Q_{t+1}) - \Phi_1(Q_1) \leq \sum_{\tau=1}^t \Phi_\tau'(Q_\tau)\left(\tilde g_\tau(\bfx_\tau) - \frac{R_\tau}{2}\right) + \sum_{\tau=1}^t\frac{\gamma_\tau - \gamma_{\tau+1}}{e\gamma_{\tau+1}}.
    \end{align*}
    By \Cref{lem:gamma sum exp},
    \begin{align*}
        \sum_{\tau=1}^t\frac{\gamma_\tau - \gamma_{\tau+1}}{e\gamma_{\tau+1}} \leq \frac{1+\log t}{2}
    \end{align*}
    Combining these results yields
    \begin{align*}
        \sum_{\tau=1}^t (f_\tau(\bfx_\tau)-f_\tau(\bfx^\star)) + \Phi_{t+1}(Q_{t+1}) 
        &\leq \sum_{\tau=1}^t\Phi_\tau'(Q_\tau)\tilde g_\tau(\bfx^\star)+2DL\sqrt{t}
         + \log t + 1 + D.
    \end{align*}
    Since we have $g_t(\bfx^\star) \leq 0$ for all $t\geq 1$ in the adversarial constraint setting, we know that $\sum_{\tau=1}^t \Phi_\tau'(Q_\tau) \tilde g_\tau(\bfx^\star) = 0$. This concludes the proof.  
\end{proof}

\begin{lemma}\label{lem:R sum:adv + exp + cvx}
    Consider the setting of \Cref{thm:adv + exp + cvx}. Suppose that we run \Cref{alg:hard} with $\gamma_t,\Psi$ given by \eqref{eq:psi + gamma exp}, and $\eta_t$ given by \eqref{eq:eta cvx}. Then, for all $t\geq 1$, we have
    \begin{align*}
        \sum_{\tau=1}^t R_\tau =  \bigO\left(\sqrt{t \log t}\right).
    \end{align*}
\end{lemma}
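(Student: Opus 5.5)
The proof will mirror that of \Cref{lem:R sum:stoc + exp + cvx}. Here everything is deterministic, because the adversarial constraints carry no randomness. The argument therefore runs pathwise and needs no expectations or Jensen steps.

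\textbf{Step 1: a pathwise bound on the dual weights.} By Lipschitzness, $f_\tau(\bfx_\tau)-f_\tau(\bfx^\star)\geq -DL$. Combined with \Cref{lem:Regret+Phi:adv + exp + cvx}, this gives, for every $t\geq1$,
\[
\Phi_{t+1}(Q_{t+1})\leq 3DLt+\log t+1+D .
\]
Since $\Phi_{t+1}'(Q_{t+1})=\gamma_{t+1}(\Phi_{t+1}(Q_{t+1})+1)$ and $\gamma_{t+1}\leq 1$, we get $\Phi_{t+1}'(Q_{t+1})\leq 3DLt+\log t+2+D$. Summing the squares, and using $\Phi_1'(Q_1)=\gamma_1\le1$, yields
\[
\sum_{\tau=1}^t\Phi_\tau'(Q_\tau)^2\leq t\,(3DLt+\log t+2+D)^2 ,
\]
which is polynomial in $t$.

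\textbf{Step 2: splitting $R_\tau$.} Write $R_\tau=12\gamma_\tau G^2+\bigl(\Psi(S_\tau)-\Psi(S_{\tau-1})\bigr)/\Phi_\tau'(Q_\tau)$, where $S_\tau=\sum_{s\le\tau}\Phi_s'(Q_s)^2$. The first part sums to at most $\sum_{\tau\le t} G/\sqrt{\tau}\leq 2G\sqrt t$, because $\gamma_\tau\leq 1/(12G\sqrt\tau)$.

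For the second part, use $\Psi(x)=4DL\sqrt x$ and rationalize:
\[
\frac{\Psi(S_\tau)-\Psi(S_{\tau-1})}{\Phi_\tau'(Q_\tau)}
=\frac{4DL\,\Phi_\tau'(Q_\tau)}{\sqrt{S_\tau}+\sqrt{S_{\tau-1}}}
\leq \frac{4DL\,\Phi_\tau'(Q_\tau)}{\sqrt{S_\tau}} .
\]
By Cauchy--Schwarz, the sum of these terms is at most $4DL\sqrt t\,\bigl(\sum_{\tau}\Phi_\tau'(Q_\tau)^2/S_\tau\bigr)^{1/2}$. By \Cref{lem:integral 1/x}, the inner sum is at most $1+\log(S_t/\Phi_1'(Q_1)^2)=1+\log(S_t/\gamma_1^2)$.

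\textbf{Step 3: conclusion.} Plug the polynomial bound on $S_t$ from Step~1 into this estimate. This gives
\[
\sum_{\tau=1}^t R_\tau\leq 2G\sqrt t+4DL\sqrt t\sqrt{1+\log\frac{t(3DLt+\log t+2+D)^2}{\gamma_1^2}}=\bigO(\sqrt{t\log t}),
\]
since $\gamma_1$ is a constant independent of $t$.

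The only step needing care is Step~1: it must give a bound on $\Phi_\tau'(Q_\tau)$ that holds for every $\tau$, not just at $t$. This is immediate here, because \Cref{lem:Regret+Phi:adv + exp + cvx} holds for all $t$ deterministically, so I expect no real obstacle.
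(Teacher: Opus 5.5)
Your proposal is correct and follows essentially the same route as the paper. Both arguments use a pathwise bound on $\Phi_\tau'(Q_\tau)$ obtained from \Cref{lem:Regret+Phi:adv + exp + cvx} and Lipschitzness, the $2G\sqrt{t}$ bound on the $12\gamma_\tau G^2$ part, and rationalization plus Cauchy--Schwarz plus \Cref{lem:integral 1/x} for the $\Psi$ part. The only cosmetic difference is that you bound $\sum_{\tau\le t}\Phi_\tau'(Q_\tau)^2$ directly by $t(3DLt+\log t+2+D)^2$, as the paper does in the high-probability lemma, whereas the paper's adversarial proof passes through $\sum_\tau a_\tau^2\le(\sum_\tau a_\tau)^2$; either way the logarithm is of a polynomial in $t$.
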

\begin{proof}
    We closely follow the proof of \Cref{lem:R sum:stoc + exp + cvx}, where the only difference is that we use \Cref{lem:Regret+Phi:adv + exp + cvx} instead of \Cref{lem:Regret+Phi:stoc + exp + cvx}. By Lipschitzness and \Cref{lem:Regret+Phi:adv + exp + cvx}, 
    \begin{align*}
        \Phi_{t+1}'(Q_{t+1})
        &\leq 3DLt +  \log t + 2 + D.
    \end{align*}
    Note that
    \begin{align*}
        \sum_{\tau=1}^t R_\tau 
        \leq \sum_{\tau=1}^t\frac{G}{\sqrt{\tau}} + \sum_{\tau=1}^t
        \frac{\Psi\left(\sum_{s=1}^\tau\Phi_s'(Q_s)^2\right)-\Psi\left(\sum_{s=1}^{\tau-1}\Phi_s'(Q_s)^2\right)}{\Phi_\tau'(Q_\tau)}.
    \end{align*}
     It is clear that $\sum_{\tau=1}^t G/\sqrt{\tau} \leq 2G\sqrt{t}$. Moreover, it follows that
     \begin{align*}
         &\sum_{\tau=1}^t\frac{\Psi\left(\sum_{s=1}^\tau\Phi_s'(Q_s)^2\right)-\Psi\left(\sum_{s=1}^{\tau-1}\Phi_s'(Q_s)^2\right)}{\Phi_\tau'(Q_\tau)}\\
         &\leq 4DL\sqrt{t}\sqrt{2\log\frac{\sum_{\tau=1}^t\Phi_\tau'(Q_\tau)}{\Phi_1'(Q_1)} +1}\\
         &\leq 4DL\sqrt{t}\sqrt{2\log\frac{t(3DLt +  \log t + 1 + D)}{\Phi_1'(Q_1)} +1}.
     \end{align*}
     Finally, we have $\sum_{\tau=1}^t R_\tau = \bigO\left(\sqrt{t \log t}\right).$
\end{proof}

\subsubsection{Proof of Theorem~\ref{thm:adv + exp + cvx}}
\begin{proof}
    Under \Cref{alg:hard}, we observe that $\Violation_+(t) = \sum_{\tau=1}^t [g_\tau(\bfx_\tau)]_+ = Q_{t+1} + \sum_{\tau=1}^t R_\tau$. 
    By \Cref{lem:Regret+Phi:adv + exp + cvx}, 
    \begin{align*}
        \Regret(t) = \sum_{\tau=1}^t (f_\tau(\bfx_\tau) - f_\tau(\bfx^\star)) \leq 2DL\sqrt{t} +  \log t + 2 + D.
    \end{align*}
    By applying the same argument in the proof of \Cref{thm:stoc + exp + cvx}, we have
    \begin{align*}
        Q_{t+1}
        &\leq \frac{1}{\gamma_{t+1}}\log\left(3DLt +  \log t + 1 + D\right)\\
        &\leq \left(12G\sqrt{t+1} + 24DL + 1\right)\log\left(3DLt +  \log t + 1 + D\right)
    \end{align*}
    which can be written as $Q_{t+1} = \bigO\left(\sqrt{t}\log t\right)$. Moreover, by \Cref{lem:R sum:adv + exp + cvx}, we have $\sum_{\tau=1}^t R_\tau =\bigO(\sqrt{t\log t})$. Since $\Violation_+(t) = Q_{t+1} + \sum_{\tau=1}^t R_\tau$, we have
    \[
        \Violation_+(t) = \bigO\left(\sqrt{t}\log t\right).
    \]
\end{proof}

\subsection{Strongly Convex Losses}

\begin{lemma}\label{lem:strcvx regret adv}
        Consider the setting of \Cref{thm:adv + exp + strcvx}. Suppose that we run \Cref{alg:hard} with $\eta_\tau$ given by \eqref{eq:eta strcvx}. Then, for all $t\geq 1$, we have 
        \begin{align*}
        &\sum_{\tau=1}^t (f_\tau(\bfx_\tau) - f_\tau(\bfx^\star)) + \sum_{\tau=1}^t \Phi_\tau'(Q_{\tau})\tilde g_\tau(\bfx_\tau) \\
        &\leq \sum_{\tau=1}^t \Phi_\tau'(Q_\tau)\tilde g_\tau(\bfx^\star) + \frac{L^2(1+\log t)}{\mu} + 2DL\sqrt{\sum_{\tau=1}^t \Phi_\tau'(Q_\tau)^2}.
    \end{align*}
\end{lemma}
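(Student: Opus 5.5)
The plan is to reuse the proof of \Cref{lem:strcvx regret} verbatim, with $g_\tau$ replaced by $\tilde g_\tau=[g_\tau]_+$ and the subgradient $\nabla g_\tau(\bfx_\tau)$ replaced by $\nabla\tilde g_\tau(\bfx_\tau)$ as chosen in \Cref{alg:hard}. Two facts make this substitution valid. First, $\tilde g_\tau$ is convex, being the maximum of the convex function $g_\tau$ and $0$. Second, the vector $\nabla\tilde g_\tau(\bfx_\tau)$ is a legitimate subgradient of $\tilde g_\tau$ at $\bfx_\tau$. Check the three cases:
\begin{itemize}
\item if $g_\tau(\bfx_\tau)>0$, then $\tilde g_\tau\ge g_\tau$ everywhere and the two agree at $\bfx_\tau$, so $\nabla g_\tau(\bfx_\tau)$ is a subgradient of $\tilde g_\tau$ there;
\item if $g_\tau(\bfx_\tau)<0$, then $\tilde g_\tau(\bfx_\tau)=0\le\tilde g_\tau(\bfy)$ for all $\bfy$, so $\bm 0$ is a subgradient;
\item if $g_\tau(\bfx_\tau)=0$, the same argument gives $\bm 0$.
\end{itemize}
In all cases $\|\nabla\tilde g_\tau(\bfx_\tau)\|\le L$. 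This is exactly the observation already used in \Cref{lem:adagrad adv}.

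Next, define $\hat f_\tau=f_\tau+\Phi_\tau'(Q_\tau)\tilde g_\tau$. Since $\Phi_\tau'(Q_\tau)>0$ and $\tilde g_\tau$ is convex, $\hat f_\tau$ is $\mu$-strongly convex by \Cref{lem:strcvx properties}. The primal update in \Cref{alg:hard} is a projected step along the subgradient $\nabla f_\tau(\bfx_\tau)+\Phi_\tau'(Q_\tau)\nabla\tilde g_\tau(\bfx_\tau)$ of $\hat f_\tau$. We use the nonexpansiveness of the projection together with the strong convexity inequality
\[
\hat f_\tau(\bfx_\tau)-\hat f_\tau(\bfx^\star)+\tfrac{\mu}{2}\|\bfx_\tau-\bfx^\star\|^2\le\nabla\hat f_\tau(\bfx_\tau)^\top(\bfx_\tau-\bfx^\star).
\]
Plugging in the step size \eqref{eq:eta strcvx} gives the same per-round inequality as in \Cref{lem:strcvx regret}. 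There, the telescoping $\mu$-terms cancel, and the term $\frac{L}{D}\sqrt{\sum_{s\le\tau}\Phi_s'(Q_s)^2}$ multiplies a difference of squared distances. We bound the gradient term by $\|\nabla\hat f_\tau(\bfx_\tau)\|^2\le 2L^2(1+\Phi_\tau'(Q_\tau)^2)$, using the bound on $\|\nabla\tilde g_\tau(\bfx_\tau)\|$.

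Summing over $\tau$ gives three pieces:
\begin{itemize}
\item Abel summation, with the weights nondecreasing and distances at most $D$, bounds the weighted-distance term by $DL\sqrt{\sum_{\tau}\Phi_\tau'(Q_\tau)^2}$;
\item the bound $\sum_\tau\eta_\tau\le(1+\log t)/\mu$;
\item the inequality $\sum_\tau y_\tau/\sqrt{\sum_{s\le\tau}y_s}\le 2\sqrt{\sum_\tau y_\tau}$, which gives $L^2\sum_\tau\eta_\tau\Phi_\tau'(Q_\tau)^2\le DL\sqrt{\sum_\tau\Phi_\tau'(Q_\tau)^2}$.
\end{itemize}
Together these yield
\[
\sum_{\tau=1}^t\bigl(\hat f_\tau(\bfx_\tau)-\hat f_\tau(\bfx^\star)\bigr)\le\frac{L^2(1+\log t)}{\mu}+2DL\sqrt{\sum_{\tau=1}^t\Phi_\tau'(Q_\tau)^2}.
\]
Expanding $\hat f_\tau$ and moving $\sum_\tau\Phi_\tau'(Q_\tau)\tilde g_\tau(\bfx^\star)$ to the right-hand side gives the claim.

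The only step that is not purely mechanical is confirming that the surrogate built from the clipped function $\tilde g_\tau$ is still strongly convex and that the algorithm's gradient is a valid subgradient of it, especially at the kink $g_\tau(\bfx_\tau)=0$. This was settled in the first paragraph, after which everything follows from \Cref{lem:strcvx regret}.
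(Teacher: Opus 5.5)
Your proposal is correct and takes essentially the same route as the paper. The paper's proof also observes that $\nabla f_t(\bfx_t)+\Phi_t'(Q_t)\nabla\tilde g_t(\bfx_t)$ is a subgradient of $f_t+\Phi_t'(Q_t)\tilde g_t$ with $\|\nabla\tilde g_t(\bfx_t)\|\le L$, and then reruns the argument of \Cref{lem:strcvx regret} unchanged; your case check at the kink and your itemized summation only spell out details the paper leaves implicit.
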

\begin{proof}
    Recall that $\nabla \tilde g_t(\bfx_t) = \nabla g_t(\bfx_t)$ if $g_t(\bfx_t)>0$, and $\bm 0$ otherwise. It follows that $\|\nabla \tilde g_t(\bfx_t)\|\leq L$. Moreover, we have that $\nabla f_t(\bfx_t) + \Phi_t'(Q_t)\nabla \tilde g_t(\bfx_t)$ is a subgradient of $f_t + \Phi_t'(Q_t)\tilde g_t$. Then, by applying the same argument as in \Cref{lem:strcvx regret}, we conclude the proof.
\end{proof}

\begin{lemma}\label{lem:Regret+Phi:adv + exp + strcvx}
    Consider the setting of \Cref{thm:adv + exp + strcvx}. Suppose that we run \Cref{alg:hard} with $\gamma_t,\Psi$ as given by \eqref{eq:psi + gamma exp}, and $\eta_t$ as given by \eqref{eq:eta strcvx}. Then, for all $t\geq 1$, we have
    \begin{align*}
        \sum_{\tau=1}^t (f_\tau(\bfx_\tau) - f_\tau(\bfx^\star))+ \Phi_{t+1}(Q_{t+1}) \leq \frac{L^2(1+\log t)}{\mu} + \log t + 1.
    \end{align*}
\end{lemma}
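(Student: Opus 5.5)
We will follow the proof of \Cref{lem:Regret+Phi:adv + exp + cvx} step for step, with \Cref{lem:strcvx regret adv} taking the place of \Cref{lem:adagrad adv}. The argument is deterministic, since in the adversarial setting the comparator term vanishes identically and no expectation is needed.

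First, \Cref{lem:strcvx regret adv} gives, for every $t\geq1$,
\[
\sum_{\tau=1}^t (f_\tau(\bfx_\tau)-f_\tau(\bfx^\star)) + \sum_{\tau=1}^t\Phi_\tau'(Q_\tau)\tilde g_\tau(\bfx_\tau) \leq \sum_{\tau=1}^t\Phi_\tau'(Q_\tau)\tilde g_\tau(\bfx^\star) + \frac{L^2(1+\log t)}{\mu} + 2DL\sqrt{\sum_{\tau=1}^t\Phi_\tau'(Q_\tau)^2}.
\]
Second, we sum \Cref{lem:drift:adv + exp + cvx} over $\tau=1,\ldots,t$. That lemma uses only $|\tilde g_t|\leq G$ and the choice \eqref{eq:psi + gamma exp}, so it is unaffected by the step size. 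Using $\Phi_1(Q_1)=0$, this yields
\[
\Phi_{t+1}(Q_{t+1}) \leq \sum_{\tau=1}^t \Phi_\tau'(Q_\tau)\tilde g_\tau(\bfx_\tau) - \frac12\sum_{\tau=1}^t\Phi_\tau'(Q_\tau)R_\tau + \sum_{\tau=1}^t\frac{\gamma_\tau-\gamma_{\tau+1}}{e\gamma_{\tau+1}}.
\]
Adding the two inequalities cancels the term $\sum_{\tau}\Phi_\tau'(Q_\tau)\tilde g_\tau(\bfx_\tau)$.

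Third, we lower bound the regularizer contribution. Since $R_\tau\geq (\Psi(\sum_{s\le\tau}\Phi_s'(Q_s)^2)-\Psi(\sum_{s<\tau}\Phi_s'(Q_s)^2))/\Phi_\tau'(Q_\tau)$, the sum telescopes. With $\Psi(0)=0$ this gives $\frac12\sum_\tau \Phi_\tau'(Q_\tau)R_\tau\geq 2DL\sqrt{\sum_\tau\Phi_\tau'(Q_\tau)^2}$, which exactly absorbs the square-root term from \Cref{lem:strcvx regret adv}. By \Cref{lem:gamma sum exp}, the time-varying cost satisfies $\sum_\tau(\gamma_\tau-\gamma_{\tau+1})/(e\gamma_{\tau+1})\leq (1+\log t)/(2e)\leq \log t+1$.

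Finally, $\bfx^\star$ is round-wise feasible, so $g_\tau(\bfx^\star)\leq0$ and hence $\tilde g_\tau(\bfx^\star)=[g_\tau(\bfx^\star)]_+=0$ for all $\tau$. The comparator sum therefore vanishes, which gives the claimed bound $\frac{L^2(1+\log t)}{\mu}+\log t+1$.

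There is no real obstacle here. The only point to verify is that the strong-convexity step size \eqref{eq:eta strcvx} produces the $2DL\sqrt{\cdot}$ coefficient with no additive $D$ term. \Cref{lem:strcvx regret adv} already establishes this, and it matches the $\Psi$ coefficient $4DL$ after halving.
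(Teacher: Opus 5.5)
Your proof is correct and follows the paper's argument essentially step for step. You combine \Cref{lem:strcvx regret adv} with the summed drift bound of \Cref{lem:drift:adv + exp + cvx}, absorb the $2DL\sqrt{\cdot}$ term through the telescoping regularizer, bound the $\gamma$-variation cost with \Cref{lem:gamma sum exp}, and remove the comparator sum using $\tilde g_\tau(\bfx^\star)=0$. You also spell out the inequality $\frac12\sum_\tau\Phi_\tau'(Q_\tau)R_\tau\geq 2DL\sqrt{\sum_\tau\Phi_\tau'(Q_\tau)^2}$, which the paper leaves implicit in ``combining these results.''
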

\begin{proof}
            We closely follow the proof of \Cref{lem:Regret+Phi:stoc + exp + cvx}. 
    By \Cref{lem:strcvx regret adv},
    \begin{align*}
            &\sum_{\tau=1}^t (f_\tau(\bfx_\tau) - f_\tau(\bfx^\star)) + \sum_{\tau=1}^t \Phi_\tau'(Q_{\tau})\tilde g_\tau(\bfx_\tau) \\
            &\leq \sum_{\tau=1}^t \Phi_\tau'(Q_\tau)\tilde g_\tau(\bfx^\star) + \frac{L^2(1+\log t)}{\mu} + 2DL\sqrt{\sum_{\tau=1}^t \Phi_\tau'(Q_\tau)^2}.
    \end{align*}
    By \Cref{lem:drift:adv + exp + cvx},
    \begin{align*}
        \Phi_{t+1}(Q_{t+1}) - \Phi_1(Q_1) \leq \sum_{\tau=1}^t \Phi_\tau'(Q_\tau)\left(\tilde g_\tau(\bfx_\tau) - \frac{R_\tau}{2}\right) + \sum_{\tau=1}^t\frac{\gamma_\tau - \gamma_{\tau+1}}{e\gamma_{\tau+1}}.
    \end{align*}
    By \Cref{lem:gamma sum exp},
    \begin{align*}
        \sum_{\tau=1}^t\frac{\gamma_\tau - \gamma_{\tau+1}}{e\gamma_{\tau+1}} \leq \frac{1+\log t}{2}
    \end{align*}
    Combining these results yields
    \begin{align*}
        \sum_{\tau=1}^t (f_\tau(\bfx_\tau)-f_\tau(\bfx^\star)) + \Phi_{t+1}(Q_{t+1}) 
        &\leq \sum_{\tau=1}^t\Phi_\tau'(Q_\tau)\tilde g_\tau(\bfx^\star)+\frac{L^2(1+\log t)}{\mu}
         + \log t + 1.
    \end{align*}
    Since we have $g_t(\bfx^\star) \leq 0$ for all $t\geq 1$ in the adversarial constraint setting, we know that $\sum_{\tau=1}^t \Phi_\tau'(Q_\tau) \tilde g_\tau(\bfx^\star) = 0$. This concludes the proof.
\end{proof}

\begin{lemma}\label{lem:R sum:adv + exp + strcvx}
    Consider the setting of \Cref{thm:adv + exp + strcvx}. Suppose that we run \Cref{alg:hard} with $\gamma_t,\Psi$ as given by \eqref{eq:psi + gamma exp}, and $\eta_t$ as given by \eqref{eq:eta strcvx}. Then, for all $t\geq 1$, we have
    \begin{align*}
        \sum_{\tau=1}^t R_\tau =  \bigO\left(\sqrt{t \log t}\right).
    \end{align*}
\end{lemma}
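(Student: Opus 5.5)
We follow the proof of \Cref{lem:R sum:adv + exp + cvx}. The only change is that we invoke \Cref{lem:Regret+Phi:adv + exp + strcvx} in place of \Cref{lem:Regret+Phi:adv + exp + cvx}. The argument is deterministic, because the adversarial bound holds pathwise; no expectations or Jensen step are needed.

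\textbf{Step 1: a pathwise bound on $\Phi_{t+1}'(Q_{t+1})$.} By Lipschitzness and the diameter bound, $f_\tau(\bfx_\tau)-f_\tau(\bfx^\star)\geq -DL$, so $\sum_{\tau=1}^t (f_\tau(\bfx_\tau)-f_\tau(\bfx^\star))\geq -DLt$. Substituting this into \Cref{lem:Regret+Phi:adv + exp + strcvx} gives
\[
\Phi_{t+1}(Q_{t+1})\leq DLt+\frac{L^2(1+\log t)}{\mu}+\log t+1.
\]
Since $\Phi_{t+1}'(Q_{t+1})=\gamma_{t+1}(\Phi_{t+1}(Q_{t+1})+1)$ and $\gamma_{t+1}\leq 1$, we obtain
\[
\Phi_{t+1}'(Q_{t+1})\leq DLt+\frac{L^2(1+\log t)}{\mu}+\log t+2 .
\]
This is polynomial in $t$. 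The same bound holds for every $\tau\leq t$, with $\tau$ in place of $t$.

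\textbf{Step 2: split $R_\tau$.} Write $R_\tau=12\gamma_\tau G^2+(\text{ratio term})$. Because $\gamma_\tau\leq 1/(12G\sqrt{\tau})$, the first part sums to at most $\sum_{\tau=1}^t G/\sqrt{\tau}\leq 2G\sqrt{t}$.

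For the ratio term, use $\Psi(x)=4DL\sqrt{x}$ and rationalize as in the proof of \Cref{lem:R sum:stoc + exp + cvx}. This bounds it by $4DL\sum_{\tau}\Phi_\tau'(Q_\tau)/\sqrt{\sum_{s\leq\tau}\Phi_s'(Q_s)^2}$. Cauchy--Schwarz and \Cref{lem:integral 1/x} then give
\[
4DL\sqrt{t}\sqrt{\log\frac{\sum_{\tau=1}^t\Phi_\tau'(Q_\tau)^2}{\Phi_1'(Q_1)^2}+1}.
\]

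\textbf{Step 3: close the logarithm.} Here $\Phi_1'(Q_1)=\gamma_1$ is a positive constant. By Step 1, $\sum_{\tau=1}^t\Phi_\tau'(Q_\tau)^2$ is at most $t$ times the square of a quantity of order $O(t+\log t/\mu)$. Hence the logarithm is $O(\log t)$, and the ratio term is $O(\sqrt{t\log t})$. Adding the $2G\sqrt{t}$ from Step 2 gives $\sum_{\tau=1}^t R_\tau=O(\sqrt{t\log t})$.

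The only point that requires care is that the argument of the logarithm stays polynomial in $t$. That is ensured by the pathwise bound in Step 1, whose $L^2(1+\log t)/\mu$ term is lower order, so no real obstacle arises.
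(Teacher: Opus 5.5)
Your proposal is correct and follows the paper's proof essentially line by line. Like the paper, you derive a pathwise polynomial bound on $\Phi_{t+1}'(Q_{t+1})$ from \Cref{lem:Regret+Phi:adv + exp + strcvx} and Lipschitzness, bound $\sum_\tau 12\gamma_\tau G^2 \le 2G\sqrt{t}$, and handle the rationalized ratio term by Cauchy--Schwarz and \Cref{lem:integral 1/x}. The only cosmetic difference is that you keep the logarithm as $\log\bigl(\sum_{\tau}\Phi_\tau'(Q_\tau)^2/\Phi_1'(Q_1)^2\bigr)$, whereas the paper passes to $2\log\bigl(\sum_\tau \Phi_\tau'(Q_\tau)/\Phi_1'(Q_1)\bigr)$; this does not affect the $O(\sqrt{t\log t})$ conclusion.
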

\begin{proof}
    We closely follow the proof of \Cref{lem:R sum:stoc + exp + strcvx}, where the only difference is that we use \Cref{lem:Regret+Phi:adv + exp + strcvx} instead of \Cref{lem:Regret+Phi:stoc + exp + cvx}. By Lipschitzness and \Cref{lem:Regret+Phi:adv + exp + strcvx}, 
    \begin{align*}
        \Phi_{t+1}'(Q_{t+1})
        &\leq DLt + \frac{L^2(1+\log t)}{\mu} + \log t + 2.
    \end{align*}
    Note that
    \begin{align*}
        \sum_{\tau=1}^t R_\tau 
        \leq \sum_{\tau=1}^t\frac{G}{\sqrt{\tau}} + \sum_{\tau=1}^t
        \frac{\Psi\left(\sum_{s=1}^\tau\Phi_s'(Q_s)^2\right)-\Psi\left(\sum_{s=1}^{\tau-1}\Phi_s'(Q_s)^2\right)}{\Phi_\tau'(Q_\tau)}.
    \end{align*}
     It is clear that $\sum_{\tau=1}^t G/\sqrt{\tau} \leq 2G\sqrt{t}$. Moreover, it follows that
     \begin{align*}
         &\sum_{\tau=1}^t\frac{\Psi\left(\sum_{s=1}^\tau\Phi_s'(Q_s)^2\right)-\Psi\left(\sum_{s=1}^{\tau-1}\Phi_s'(Q_s)^2\right)}{\Phi_\tau'(Q_\tau)}\\
         &\leq 4DL\sqrt{t}\sqrt{2\log\frac{\sum_{\tau=1}^t\Phi_\tau'(Q_\tau)}{\Phi_1'(Q_1)} +1}\\
         &\leq 4DL\sqrt{t}\sqrt{2\log\frac{t(DLt + L^2(1+\log t)/\mu +\log t + 1)}{\Phi_1'(Q_1)} +1}.
     \end{align*}
     Finally, we have $\sum_{\tau=1}^t R_\tau = \bigO\left(\sqrt{t \log t}\right).$
\end{proof}

\subsubsection{Proof of Theorem~\ref{thm:adv + exp + strcvx}}
\begin{proof}
    Under \Cref{alg:hard}, we observe that $\Violation_+(t) = \sum_{\tau=1}^t [g_\tau(\bfx_\tau)]_+ = Q_{t+1} + \sum_{\tau=1}^t R_\tau$. 
    By \Cref{lem:Regret+Phi:adv + exp + strcvx}, 
    \begin{align*}
        \Regret(t) = \sum_{\tau=1}^t (f_\tau(\bfx_\tau) - f_\tau(\bfx^\star)) \leq \frac{L^2(1+\log t)}{\mu} + \log t + 2.
    \end{align*}
    By applying the same argument in the proof of \Cref{thm:stoc + exp + strcvx}, we have
    \begin{align*}
        Q_{t+1}
        &\leq \frac{1}{\gamma_{t+1}}\log\left(DLt + \frac{L^2(1+\log t)}{\mu} + \log t + 1\right)\\
        &\leq \left(12G\sqrt{t+1} + 24DL + 1\right)\log\left(DLt + \frac{L^2(1+\log t)}{\mu} + \log t + 1\right)
    \end{align*}
    which can be written as $Q_{t+1} = \bigO\left(\sqrt{t}\log t\right)$. Moreover, by \Cref{lem:R sum:adv + exp + strcvx}, we have $\sum_{\tau=1}^t R_\tau =\bigO(\sqrt{t\log t})$. Since $\Violation(t) = Q_{t+1} + \sum_{\tau=1}^t R_\tau$, we have
    \[
        \Violation_+(t) = \bigO\left(\sqrt{t}\log t\right).
    \]
\end{proof}

\section{Auxiliary Lemmas}\label{app:aux}
\begin{lemma}\label{prop:Phi}
    For $\gamma_1, \gamma_2 >0$, define $\Phi_1, \Phi_2:\bbR\to\bbR$ as $\Phi_1(x) =e^{\gamma_1x}-1$ and $\Phi_2(x) = e^{\gamma_2x}-1$ for each $x\in \bbR$. Suppose that $\gamma_2 \leq \gamma_1$. Then we have $\Phi_2(x) - \Phi_1(x) \leq (\gamma_1 - \gamma_2)/(\gamma_2e)$ for all $x\in \bbR.$
\end{lemma}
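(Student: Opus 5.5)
We must show $e^{\gamma_2 x}-e^{\gamma_1 x}\le (\gamma_1-\gamma_2)/(\gamma_2 e)$ for all real $x$, given $0<\gamma_2\le\gamma_1$. The plan is to handle $x\ge 0$ and $x<0$ separately.

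For $x\ge 0$: since $\gamma_2\le\gamma_1$, we have $e^{\gamma_2 x}\le e^{\gamma_1 x}$. So the difference is nonpositive, and the bound holds trivially because the right-hand side is nonnegative.

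For $x<0$, write $y=-x>0$ and aim to bound $h(y)=e^{-\gamma_2 y}-e^{-\gamma_1 y}$.

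\emph{Mean value step.} The mean value theorem in the exponent parameter gives
\[
e^{-\gamma_2 y}-e^{-\gamma_1 y}=(\gamma_1-\gamma_2)\,y\,e^{-\xi y}
\]
for some $\xi\in[\gamma_2,\gamma_1]$. Since $\xi\ge\gamma_2$, this is at most $(\gamma_1-\gamma_2)\,y\,e^{-\gamma_2 y}$.

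\emph{Maximization step.} Next, maximize $y e^{-\gamma_2 y}$ over $y>0$. The maximum is attained at $y=1/\gamma_2$, with value $1/(\gamma_2 e)$. This gives exactly $(\gamma_1-\gamma_2)/(\gamma_2 e)$.

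Alternatively, one can avoid the mean value theorem: use $1-e^{-u}\le u$ with $u=(\gamma_1-\gamma_2)y$ on the factorization
\[
h(y)=e^{-\gamma_2 y}\bigl(1-e^{-(\gamma_1-\gamma_2)y}\bigr),
\]
and then apply the same maximization. This route is cleaner, and I would use it.

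No step is a real obstacle. The only point needing care is the sign split: the bound must be nontrivial only for negative $x$, which is where the scaled exponentials can differ in the unfavorable direction.
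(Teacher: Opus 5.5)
Your proposal is correct, and your preferred route is the paper's proof: factor out $e^{\gamma_2 x}$, apply $1-e^{-u}\le u$, and then use $ye^{-y}\le 1/e$. The only difference is that the paper skips your split into $x\ge 0$ and $x<0$, because both elementary inequalities hold for all real arguments.
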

\begin{proof}
    Note that
    \begin{align*}
        \Phi_2(x) - \Phi_1(x) 
        &= e^{\gamma_2x} - e^{\gamma_1x} \\
        &= e^{\gamma_2x}(1 - e^{(\gamma_1-\gamma_2)x}) \\
        &\leq e^{\gamma_2x} (\gamma_2 -\gamma_1)x \\
        &=e^{\gamma_2x}(-\gamma_2x) \cdot \frac{\gamma_1 - \gamma_2}{\gamma_2} \\
        &\leq \frac{\gamma_1 - \gamma_2}{e\gamma_2}
    \end{align*}
    where the first inequality follows from $1-e^{-y} \leq y$ for all $y \in \bbR$, and the second inequality follows from the fact that $e^{-y}y \leq 1/e$ for all $y \in \bbR$ and that $(\gamma_1 - \gamma_2)/\gamma_2 \geq 0$.
\end{proof}

\begin{lemma}\label{prop:psi}
    Let $\delta\in (0,1)$. Define $\psi:\bbR_+ \to \bbR$ as $\psi(x) = (1+x)\log\frac{(\pi^2/6)(1 + \log_2 (1+x))^2}{\delta}$ for each $x \in \bbR_+$. Suppose that $0 \leq x_1 < x_2$. Then we have
    \begin{align*}
        \sqrt{\psi(x_2)} - \sqrt{\psi(x_1)} \leq 8\sqrt{\log\frac{(\pi^2/6)(1+\log_2(1+x_2))^2}{\delta}} \left(\sqrt{1+x_2} - \sqrt{1+x_1}\right).
    \end{align*}
    Moreover, we have
    \begin{align*}
        \frac{\sqrt{\psi(x_2)} - \sqrt{\psi(x_1)}}{\sqrt{x_2 - x_1}} \leq 8\sqrt{\log\frac{(\pi^2/6)(1+\log_2(1+x_2))^2}{\delta}}.
    \end{align*}
\end{lemma}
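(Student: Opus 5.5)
Write $\ell(x) = \log\frac{(\pi^2/6)(1+\log_2(1+x))^2}{\delta}$ and $u(x) = 1+x$, so that $\psi = u\,\ell$. The plan is to split the increment of $\sqrt{\psi}$ into a part from $\sqrt{u}$ and a part from $\sqrt{\ell}$, using
\[
\sqrt{\psi(x_2)}-\sqrt{\psi(x_1)} = \sqrt{\ell(x_2)}\bigl(\sqrt{u(x_2)}-\sqrt{u(x_1)}\bigr) + \sqrt{u(x_1)}\bigl(\sqrt{\ell(x_2)}-\sqrt{\ell(x_1)}\bigr).
\]
The first term is already of the desired form with constant $1$. So the whole task is to show that the second term is at most $7\sqrt{\ell(x_2)}\,(\sqrt{u(x_2)}-\sqrt{u(x_1)})$.

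For the second term:
\begin{itemize}
\item Since $\delta<1$ and $\pi^2/6>1$, we have $\ell\ge\log(\pi^2/6)>0$. Also, $\ell$ is increasing and concave in $\log(1+x)$.
\item Write $s=\log_2(1+x)$, so $\ell = c + 2\log(1+s)$ with $c=\log(\pi^2/(6\delta))$.
\item This gives
\[
\sqrt{\ell(x_2)}-\sqrt{\ell(x_1)} \le \frac{\ell(x_2)-\ell(x_1)}{\sqrt{\ell(x_2)}} \le \frac{2\log\frac{1+s_2}{1+s_1}}{\sqrt{\ell(x_2)}} \le \frac{2(s_2-s_1)}{(1+s_1)\sqrt{\ell(x_2)}}.
\]
\item Next, $s_2-s_1=\log_2\frac{u(x_2)}{u(x_1)} = 2\log_2\frac{\sqrt{u(x_2)}}{\sqrt{u(x_1)}} \le \frac{2}{\log 2}\cdot\frac{\sqrt{u(x_2)}-\sqrt{u(x_1)}}{\sqrt{u(x_1)}}$, using $\log y\le y-1$.
\item Multiplying by $\sqrt{u(x_1)}$, the second term is bounded by $\frac{4}{\log 2}\,\frac{\sqrt{u(x_2)}-\sqrt{u(x_1)}}{\sqrt{\ell(x_2)}}$.
\end{itemize}
To put this in the required form we need $1/\sqrt{\ell(x_2)} \le C\sqrt{\ell(x_2)}$, i.e.\ $\ell(x_2)\ge 1/C$. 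Since $\ell\ge\log(\pi^2/6)\approx0.497$, we get $\frac{4}{\log 2\,\ell(x_2)}\le \frac{4}{0.693\cdot0.497}\approx 11.6$. That exceeds $7$, so the constants need more care. I would refine as follows. Keep the factor $\frac{1}{1+s_1}\le1$ and use the sharper $\log\frac{1+s_2}{1+s_1}\le \log(1+(s_2-s_1))$, which is smaller when the increment is large. Alternatively, use $\sqrt{a}-\sqrt{b}\le(a-b)/(\sqrt a+\sqrt b)$ instead of dividing only by $\sqrt{\ell(x_2)}$, which gains a factor up to $2$ since $\ell(x_1)\ge\log(\pi^2/6)$. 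With the denominator $\sqrt{\ell(x_1)}+\sqrt{\ell(x_2)}\ge 2\sqrt{\log(\pi^2/6)}$, the bound becomes roughly $\frac{2}{\log 2}\cdot\frac{1}{\log(\pi^2/6)}\approx 5.8\le7$. Together with the first term's constant $1$, this gives $8$. Tuning these numerical constants is where I expect the only real friction.

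The second claim follows from the first by $\sqrt{1+x_2}-\sqrt{1+x_1} = \frac{x_2-x_1}{\sqrt{1+x_2}+\sqrt{1+x_1}} \le \frac{x_2-x_1}{\sqrt{x_2-x_1}} = \sqrt{x_2-x_1}$. This holds because $\sqrt{1+x_2}\ge\sqrt{x_2-x_1}$.
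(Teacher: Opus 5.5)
Your argument is correct, but it follows a different route from the paper. The paper works with the derivative. It computes
\[
\frac{d}{dx}\sqrt{\psi(x)} = \frac{\sqrt{\ell(x)}}{2\sqrt{1+x}} + \frac{1}{\log 2\,(1+\log_2(1+x))\sqrt{1+x}\,\sqrt{\ell(x)}},
\]
bounds this pointwise on $[x_1,x_2]$ by $4\sqrt{\ell(x_2)}/\sqrt{1+x}$ (using $\ell \ge \log(\pi^2/6)$ and the monotonicity of $\ell$), and then integrates. You instead use a discrete product-rule split of $\sqrt{u\,\ell}$ together with elementary inequalities: $\log(1+y)\le y$ and $\log y\le y-1$. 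This avoids calculus and shows where the constant comes from. The $\sqrt{1+x}$ factor contributes $1$, and the $\ell$ factor contributes $2/(\log 2\cdot \log(\pi^2/6))\approx 5.8$. The total, about $6.8$, is exactly what the paper's derivative bound gives before it rounds $3.4$ up to $4$.

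The cost of the discrete route is the step you flagged. The estimate $\sqrt{\ell_2}-\sqrt{\ell_1}\le (\ell_2-\ell_1)/\sqrt{\ell_2}$ loses a factor of $2$ when $\ell_1\approx \ell_2$; the derivative form $\ell'/(2\sqrt{\ell})$ keeps that factor automatically. Only your second refinement closes the gap. That refinement divides by $\sqrt{\ell_1}+\sqrt{\ell_2}$ and uses $\sqrt{\ell_2}\,(\sqrt{\ell_1}+\sqrt{\ell_2})\ge 2\log(\pi^2/6)$.

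The first tweak, $\log\frac{1+s_2}{1+s_1}\le \log(1+s_2-s_1)$, gives nothing in the critical regime $x_1=0$, $x_2\downarrow 0$, $\delta\uparrow 1$. There $s_1=0$, so the two sides coincide, and the bound with denominator $\sqrt{\ell_2}$ alone stays near $11.6$. In your write-up, drop that tweak and commit to the symmetric denominator. With that change the proof is complete, and your derivation of the second claim is the same as the paper's.
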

\begin{proof}
    For $x \in [x_1,x_2]$, we observe that
    \begin{align*}
        &\frac{d}{dx}\sqrt{\psi(x)} \\
        &= \frac{1}{2\sqrt{x+1}}\sqrt{\log\frac{(\pi^2/6)(1+\log_2(1+x))^2}{\delta}} \\
        &\quad+ \frac{1}{\log2\cdot\sqrt{1+x} (1+\log_2(1+x))\sqrt{\log\frac{(\pi^2/6)(1+\log_2(1+x))^2}{\delta}}} \\
        &\leq \frac{1}{2\sqrt{x+1}}\sqrt{\log\frac{(\pi^2/6)(1+\log_2(1+x))^2}{\delta}} + \frac{1}{\log2\cdot\sqrt{\log(\pi^2/6)} \sqrt{x+1}}\\
        &\leq \frac{4}{\sqrt{x+1}}\sqrt{\log\frac{(\pi^2/6)(1+\log_2(1+x))^2}{\delta}}\\
        &\leq \frac{4}{\sqrt{x+1}}\sqrt{\log\frac{(\pi^2/6)(1+\log_2(1+x_2))^2}{\delta}}.
    \end{align*}
    Then it follows that
    \begin{align*}
        \sqrt{\psi(x_2)} - \sqrt{\psi(x_1)} 
        &\leq \int_{x_1}^{x_2}\frac{d}{dx}\sqrt{\psi(x)}dx\\
        &\leq 8\sqrt{\log\frac{(\pi^2/6)(1+\log_2(1+x_2))^2}{\delta}} \left(\sqrt{1+x_2} - \sqrt{1+x_1}\right).
    \end{align*}
    Note that $\sqrt{1+x_2} - \sqrt{1+x_1} \leq \sqrt{x_2 - x_1}$. This leads to the second statement.
\end{proof}

\begin{lemma}\label{lem:integral 1/x}
    Let $\{z_t\}_{t\geq 0}$ be an increasing sequence, and let $z_0 > 0$. Then, we have $\sum_{\tau=1}^t(z_\tau - z_{\tau-1})/z_\tau \leq \log\left(z_t/z_0\right)$ for all $t\geq 1$.
\end{lemma}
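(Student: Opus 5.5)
Since $\{z_\tau\}$ is increasing and $z_0>0$, every $z_\tau$ is positive, so each summand $(z_\tau-z_{\tau-1})/z_\tau$ is well defined and nonnegative. The plan is to compare each summand with an integral of $1/x$ over $[z_{\tau-1},z_\tau]$ and then telescope.

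Fix $\tau\in[t]$. For every $x\in[z_{\tau-1},z_\tau]$ we have $0<x\le z_\tau$, hence $1/z_\tau\le 1/x$. Integrating over this interval gives
\[
\frac{z_\tau-z_{\tau-1}}{z_\tau}=\int_{z_{\tau-1}}^{z_\tau}\frac{1}{z_\tau}\,dx\le\int_{z_{\tau-1}}^{z_\tau}\frac{1}{x}\,dx=\log z_\tau-\log z_{\tau-1}.
\]
If one prefers to avoid integrals, the same bound follows from $1-y\le -\log y$ for $y>0$ applied with $y=z_{\tau-1}/z_\tau\in(0,1]$.

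Summing over $\tau=1,\ldots,t$, the right-hand side telescopes to $\log z_t-\log z_0=\log(z_t/z_0)$, which is the claim.

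The only point requiring care is positivity of all $z_\tau$, so that $\log$ and $1/z_\tau$ make sense. This is guaranteed by monotonicity together with $z_0>0$, so there is no real obstacle.

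In the paper's applications, where $z_\tau=\sum_{s=1}^{\tau+1}\Phi_s'(Q_s)^2$ is indexed with the first term $\Phi_1'(Q_1)^2=\gamma_1^2>0$, this positivity holds automatically.
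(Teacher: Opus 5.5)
Your proof is correct and follows the paper's own argument: the paper likewise bounds the sum by $\int_{z_0}^{z_t}(1/x)\,dx=\log(z_t/z_0)$. Your term-by-term comparison on $[z_{\tau-1},z_\tau]$ followed by telescoping is just that one-line integral comparison written out in full, with the positivity of every $z_\tau$ stated explicitly.
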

\begin{proof} Note that $\sum_{\tau=1}^t (z_\tau - z_{\tau-1})/z_\tau \leq \int_{z_0}^{z_t} (1/x) dx = \log\left(z_t/z_0\right)$.
\end{proof}

The following lemma states the maximal inequality for supermartingales (Theorem 3.9 in \cite{lattimore2020bandit}), which can be viewed as a Markov-type inequality for the maximum of a supermartingale. 
\begin{lemma}\label{lem:maximal ineq}
    Let $\{X_t\}_{t\geq 0}$ be a supermartingale with $X_t \geq 0$ almost surely for all $t \geq 0$. Then for every $\epsilon > 0$,
    \[
        \bbP\left(\sup_{t \geq 0} X_t \geq \epsilon\right) \leq \frac{\bbE[X_0]}{\epsilon}.
    \]
\end{lemma}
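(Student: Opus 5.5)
The plan is the classical Doob/Ville argument: reduce to a finite horizon, apply optional stopping at a bounded stopping time, and then let the horizon go to infinity. Throughout, let $\{\calF_t\}_{t\geq 0}$ be the filtration with respect to which $\{X_t\}_{t\geq 0}$ is a supermartingale.

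\textbf{Finite horizon.} Fix an integer $N\geq 0$ and a level $\epsilon'>0$. Define the stopping time
\[
\tau=\min\{t\in\{0,\ldots,N\}: X_t\geq \epsilon'\},
\]
with the convention $\tau=N$ if no such $t$ exists. Since $\tau\leq N$ is bounded, optional stopping for supermartingales gives $\bbE[X_\tau]\leq \bbE[X_0]$. I would check this directly rather than cite it. Write
\[
X_{\tau}=X_0+\sum_{s=1}^{N}\mathbf{1}\{\tau\geq s\}(X_s-X_{s-1}).
\]
The event $\{\tau\geq s\}$ is $\calF_{s-1}$-measurable, so each summand has nonpositive expectation by the supermartingale property.

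Let $A_N=\{\max_{0\leq t\leq N}X_t\geq \epsilon'\}$. On $A_N$ we have $X_\tau\geq\epsilon'$, and $X_\tau\geq 0$ everywhere by nonnegativity. Therefore
\[
\epsilon'\,\bbP(A_N)\leq \bbE[X_\tau\mathbf{1}_{A_N}]\leq \bbE[X_\tau]\leq \bbE[X_0].
\]

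\textbf{Passing to the supremum.} The events $A_N$ increase in $N$, so continuity of probability from below yields
\[
\bbP\left(\bigcup_N A_N\right)=\bbP\left(\sup_{t\geq 0}X_t\geq \epsilon'\ \text{attained at some finite }t\right)\leq \bbE[X_0]/\epsilon'.
\]
This is the only subtle point, and I expect it to be the main obstacle: the supremum may equal $\epsilon$ without being attained. To handle it, take any $\epsilon'<\epsilon$. On $\{\sup_t X_t\geq\epsilon\}$ some $X_t$ exceeds $\epsilon'$, so $\{\sup_t X_t\geq \epsilon\}\subseteq \bigcup_N A_N$ with $A_N$ defined at level $\epsilon'$. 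Hence $\bbP(\sup_t X_t\geq\epsilon)\leq \bbE[X_0]/\epsilon'$ for every $\epsilon'<\epsilon$. Letting $\epsilon'\uparrow\epsilon$ gives the claimed bound $\bbE[X_0]/\epsilon$.
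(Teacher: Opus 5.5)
Your proof is correct and complete: the predictable decomposition of $X_\tau$ verifies $\bbE[X_\tau]\le\bbE[X_0]$ for the bounded stopping time, nonnegativity then gives $\epsilon'\bbP(A_N)\le\bbE[X_0]$, and letting $\epsilon'\uparrow\epsilon$ correctly handles a supremum that equals $\epsilon$ without being attained. The paper does not prove this lemma; it cites it as Theorem 3.9 of Lattimore and Szepesv\'ari, a result whose standard proof is this finite-horizon optional-stopping argument, so your proposal makes the lemma self-contained.
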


The following lemma is a conditional variant of Hoeffding's lemma (Lemma B.7 in \cite{shalev2014understanding}).
\begin{lemma}\label{lem:hoeffding lemma}
    Let $X$ be a random variable, and let $\calF$ be a $\sigma$-algebra. Suppose that $\bbE[X|\calF] \leq 0$ and $|X| \leq Y$ almost surely, where $Y$ is a positive $\calF$-measurable random variable. Then, for every $\lambda > 0$, almost surely,
    \[
        \bbE[e^{\lambda X}|\calF] \leq e^{\frac{\lambda^2 Y^2}{2}}.
    \]
\end{lemma}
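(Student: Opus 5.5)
The statement is the conditional Hoeffding lemma: if $\bbE[X|\calF]\le 0$ and $|X|\le Y$ with $Y>0$ and $\calF$-measurable, then $\bbE[e^{\lambda X}|\calF]\le e^{\lambda^2Y^2/2}$. The plan is to reproduce the standard convexity proof of Hoeffding's lemma, working conditionally on $\calF$ and treating $Y$ as a constant.

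First, use convexity of $u\mapsto e^{\lambda u}$ on $[-Y,Y]$. Since $|X|\le Y$, write $X$ as a convex combination of the endpoints:
\[
X=\frac{Y-X}{2Y}(-Y)+\frac{Y+X}{2Y}Y .
\]
This gives
\[
e^{\lambda X}\le \frac{Y-X}{2Y}e^{-\lambda Y}+\frac{Y+X}{2Y}e^{\lambda Y}\quad\text{almost surely.}
\]
Next, take conditional expectations. The coefficients involve $Y$ only through $\calF$-measurable factors, which can be pulled out of the conditional expectation. Writing $m=\bbE[X|\calF]\le 0$, we get
\[
\bbE[e^{\lambda X}|\calF]\le \frac{Y-m}{2Y}e^{-\lambda Y}+\frac{Y+m}{2Y}e^{\lambda Y}.
\]
The right-hand side is increasing in $m$, because its derivative in $m$ is $(e^{\lambda Y}-e^{-\lambda Y})/(2Y)>0$ for $\lambda>0$. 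Since $m\le 0$, we may therefore replace $m$ by $0$. This yields $\bbE[e^{\lambda X}|\calF]\le \cosh(\lambda Y)$.

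Finally, bound $\cosh(s)\le e^{s^2/2}$ for all real $s$. Compare the Taylor series term by term: $s^{2k}/(2k)!\le s^{2k}/(2^k k!)$, which holds because $(2k)!\ge 2^k k!$. Applying this with $s=\lambda Y$ gives the claim.

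The only step needing care is measure-theoretic. We must justify pulling the $\calF$-measurable quantities $1/(2Y)$ and $e^{\pm\lambda Y}$ out of the conditional expectation. This is valid because everything is bounded: $e^{\lambda X}\le e^{\lambda Y}$, and positivity of $Y$ makes the division well defined. The monotonicity-in-$m$ argument should also be checked pointwise, on each $\omega$. I expect no real obstacle beyond this bookkeeping.
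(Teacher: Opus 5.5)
Your proposal is correct and follows essentially the same route as the paper: the chord bound for $e^{\lambda x}$ on $[-Y,Y]$, taking conditional expectation with the $\calF$-measurable coefficients pulled out, using $\bbE[X|\calF]\le 0$ to reduce to $\cosh(\lambda Y)$, and finally $\cosh s\le e^{s^2/2}$. One small correction to your bookkeeping: $e^{\lambda Y}$ need not be bounded when $Y$ is unbounded. The step of pulling out the factors is still valid, because every term involved is nonnegative (or you can localize on $\{Y\le n\}\in\calF$).
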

\begin{proof}
We closely follow the proof of Lemma B.7 in \cite{shalev2014understanding}. Fix $\lambda >0$. From the proof of Lemma B.7 in \cite{shalev2014understanding}, for any $y >0$ and $x \in [-y, y]$
\begin{align*}
    e^{\lambda x}\leq \frac{\left ( y-x \right )e^{-\lambda y}+(x+y)e^{\lambda y}}{2y}.
\end{align*}
Since we have $|X|\leq Y$ almost surely and $Y$ is $\calF$-measurable, by letting $x \leftarrow X,\ y \leftarrow Y$ and taking $\bbE[\cdot|\calF]$ on both sides, we have
\begin{align*}
    \mathbb{E}[e^{\lambda X}|\mathcal{F}]\leq \frac{\left ( Y-\mathbb{E}\left[X|\calF\right] \right )e^{-\lambda Y}+(\bbE[X|\calF]+Y)e^{\lambda Y}}{2Y} \leq \frac{e^{-\lambda Y}+e^{\lambda Y}}{2}
\end{align*}
where the second inequality follows from that $\bbE[X|\calF] \leq 0$ and $Y > 0$ imply $\bbE[X|\calF](e^{\lambda Y}-e^{-\lambda Y})/(2Y) \leq 0$. Again, from the proof of Lemma B.7 in \cite{shalev2014understanding}, we know that $(e^{-x}+e^{x})/2 \leq e^{x^2/2}$ for all $x\in \bbR$. This completes the proof.
\end{proof}

The following lemma states standard properties of strongly convex functions.
\begin{lemma}\label{lem:strcvx properties}
    Let $f:\bbR^d\to (-\infty, \infty]$ be a proper convex function. Suppose that $f$ is $\mu$-strongly convex for some $\mu > 0$. We have for all $\bfx \in \ridom(f), \bfy \in \dom(f)$, $\nabla f(\bfx) \in \partial f(\bfx)$,
    \[
        f(\bfy) \geq f(\bfx)+ \nabla f(\bfx)^\top (\bfy -\bfx) + \frac{\mu}{2}\|\bfy-\bfx\|_2^2.
    \]
    Moreover, let $g:\bbR^d\to (-\infty, \infty]$ be a proper convex function. Then, $f+g$ is $\mu$-strongly convex.
\end{lemma}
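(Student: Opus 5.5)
The plan is to reduce everything to plain convexity by subtracting the quadratic. I take $\mu$-strong convexity of a proper convex $f$ to mean that $h := f - \frac{\mu}{2}\|\cdot\|_2^2$ is convex. If the paper instead intends the convex-combination form
\[
f(\theta \bfx + (1-\theta)\bfy) \leq \theta f(\bfx) + (1-\theta) f(\bfy) - \tfrac{\mu}{2}\theta(1-\theta)\|\bfx-\bfy\|_2^2 ,
\]
I will first check that this is equivalent. The check is the identity
\[
\theta\|\bfx\|^2 + (1-\theta)\|\bfy\|^2 - \|\theta\bfx + (1-\theta)\bfy\|^2 = \theta(1-\theta)\|\bfx-\bfy\|^2 .
\]
Note also that $h$ is proper with $\dom h = \dom f$, because the quadratic is finite everywhere.

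\textbf{First claim.} Fix $\bfx \in \ridom(f)$ and a subgradient $\nabla f(\bfx) \in \partial f(\bfx)$. The key step is the subdifferential identity
\[
\partial f(\bfx) = \partial h(\bfx) + \mu\bfx .
\]
This follows from the sum rule applied to $f = h + q$ with $q = \frac{\mu}{2}\|\cdot\|_2^2$. The sum rule holds with equality here because $q$ is finite and differentiable on all of $\bbR^d$, so $\ridom(q) = \bbR^d$ meets $\ridom(h)$ and the usual qualification condition is satisfied. The restriction $\bfx \in \ridom(f)$ also guarantees that $\partial f(\bfx)$ is nonempty, so the statement is not vacuous.

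It follows that $\nabla f(\bfx) - \mu\bfx \in \partial h(\bfx)$. The subgradient inequality for $h$ then gives, for every $\bfy \in \dom(f)$,
\[
h(\bfy) \geq h(\bfx) + (\nabla f(\bfx) - \mu\bfx)^\top(\bfy - \bfx).
\]
Substituting $h = f - q$ back in, the leftover quadratic terms are
\[
\tfrac{\mu}{2}\|\bfy\|^2 - \tfrac{\mu}{2}\|\bfx\|^2 - \mu\bfx^\top(\bfy - \bfx) = \tfrac{\mu}{2}\|\bfy - \bfx\|^2 .
\]
This is exactly the claimed inequality.

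\textbf{Second claim.} Write
\[
(f+g) - \tfrac{\mu}{2}\|\cdot\|_2^2 = h + g .
\]
This is a sum of two convex functions, hence convex, so $f+g$ is $\mu$-strongly convex. The only caveat is properness: $f+g$ is proper exactly when $\dom f \cap \dom g$ is nonempty, and the convexity argument itself is unaffected. In the paper's use, $g = \Phi_\tau'(Q_\tau)g_\tau$ is finite on $\calX$, so this holds.

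\textbf{Main obstacle.} The only delicate point is justifying the subdifferential decomposition, that is, checking that the sum rule applies. Since one summand is a finite differentiable function, I would verify this directly instead of citing a general theorem. The inclusion $\partial h(\bfx) + \mu\bfx \subseteq \partial f(\bfx)$ is immediate from adding the subgradient inequality for $h$ and the gradient inequality for the convex function $q$. For the reverse inclusion, I would take $\bfv \in \partial f(\bfx)$ and show that $\bfv - \mu\bfx \in \partial h(\bfx)$. The approach is to use the one-sided directional derivatives $f'(\bfx;\bfd) = h'(\bfx;\bfd) + \mu\bfx^\top\bfd$, which holds because $q$ is differentiable. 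Then I would use the characterization of $\partial h(\bfx)$ as the set of $\bfw$ with $\bfw^\top\bfd \leq h'(\bfx;\bfd)$ for all $\bfd$, which is valid for a proper convex function since the difference quotients are monotone.
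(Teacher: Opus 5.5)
Your argument is correct, and it differs from the paper's, which gives no derivation. The paper cites three results: Beck's first-order characterization of strong convexity (Theorem 5.24) for the inequality, Rockafellar's Theorem 23.4 to guarantee $\partial f(\bfx)\neq\emptyset$ on $\ridom(f)$, and Beck's Lemma 5.20 for the sum. You instead pass to the convex function $h=f-\frac{\mu}{2}\|\cdot\|_2^2$ and prove both parts by hand. Your directional-derivative check of the inclusion you actually need, $\partial f(\bfx)-\mu\bfx\subseteq\partial h(\bfx)$, is sound: the difference quotients of $h$ are monotone, and those of the quadratic converge to a finite limit.

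The citation route buys brevity. Yours buys self-containedness and makes two points explicit that the citations leave implicit:
\begin{itemize}
\item The restriction $\bfx\in\ridom(f)$ only serves to make $\partial f(\bfx)$ nonempty; the inequality holds for every $\bfx\in\dom(\partial f)$.
\item Closedness of $f$ is never used. This matters because the cited characterization is formulated for proper closed convex functions, whereas the lemma assumes only properness and convexity.
\end{itemize}

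If you want to trim your proof, the sum-rule detour is avoidable. Combine the subgradient inequality $f(\bfx+\lambda(\bfy-\bfx))\geq f(\bfx)+\lambda\nabla f(\bfx)^\top(\bfy-\bfx)$ with strong convexity along the segment $[\bfx,\bfy]$, divide by $\lambda$, and let $\lambda\downarrow 0$. This gives the inequality directly, and it is the same limit your directional-derivative step performs.
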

\begin{proof}
    The first statement follows from Theorem 5.24 in \cite{beck2017first} and the fact that a proper convex function $f$ is subdifferentiable in $\ridom(f)$ (Theorem 23.4 in \cite{rockafellar1970convex}).
    The second statement follows from Lemma 5.20 in \cite{beck2017first}. 
\end{proof}

\bibliographystyle{plainnat}
\bibliography{ref}

\end{document}